
\documentclass[journal]{IEEEtran}
%

\pdfminorversion=4


%

%
\usepackage{cite}

%
\ifCLASSINFOpdf
   \usepackage[pdftex]{graphicx}
\else
  \usepackage[dvips]{graphicx}
\fi
%
%

%
\usepackage{amsmath}
%
\interdisplaylinepenalty=2500

%
\usepackage{algorithm,algorithmic}

%
\usepackage{array}


\ifCLASSOPTIONcompsoc
  \usepackage[caption=false,font=normalsize,labelfont=sf,textfont=sf]{subfig}
\else
  \usepackage[caption=false,font=footnotesize]{subfig}
\fi
\hyphenation{op-tical net-works semi-conduc-tor}

\usepackage[justification=centering]{caption}
\usepackage{amsfonts,amsthm}
\usepackage{subfig}
\usepackage{dsfont}
\usepackage{color}
\usepackage{pdfpages}

\def\one{\mathds{1}}
\def\qed{\mbox{\rule[0pt]{1.3ex}{1.3ex}}}
\def\bsigma{{\boldsymbol \sigma}}

\DeclareMathOperator*{\argmin}{arg\,min}

\newcommand{\Ex}{\mathbb{E}\hspace{0.05cm}}
\newcommand{\bm}[1]{\mbox{\boldmath $#1$}}
\newcommand{\ba}{\left[ \begin{array}}
\newcommand{\ea}{\\ \end{array} \right]}

\newcommand{\define}{\stackrel{\Delta}{=}}

\newtheorem{assumption}{Assumption}
\newtheorem{remark}{Remark}
\newtheorem{theorem}{Theorem}
\newtheorem{lemma}{Lemma}

\thinmuskip=0mu
\medmuskip=0mu
\thickmuskip=3mu

\allowdisplaybreaks
\begin{document}
%
\title{Multi-Agent Fully Decentralized Value Function Learning with Linear Convergence Rates}
%
%
%

\author{Lucas~Cassano,
        Kun~Yuan
        and~Ali~H.~Sayed
\thanks{L. Cassano and K. Yuan are with the Department of Electrical Engineering, UCLA, CA 90095 USA e-mails: \{cassanolucas,kunyuan\}@ucla.edu.}
\thanks{A. H. Sayed is with the School of Engineering, Ecole Polytechnique
	Federale de Lausanne (EPFL), Switzerland. e-mail: ali.sayed@epfl.ch.}}

%
%

\markboth{Journal of \LaTeX\ Class Files,~Vol.~4, No.~4, July~2019}%
{Shell \MakeLowercase{\textit{et al.}}: Bare Demo of IEEEtran.cls for IEEE Journals}
%



\maketitle

\begin{abstract}
This work develops a fully decentralized multi-agent algorithm for policy evaluation. The proposed scheme can be applied to two distinct scenarios. In the first scenario, a collection of agents have distinct datasets gathered following different behavior policies (none of which is required to explore the full state space) in different instances of the same environment and they all collaborate to evaluate a common target policy. The network approach allows for efficient exploration of the state space and allows all agents to converge to the optimal solution even in situations where neither agent can converge on its own without cooperation. The second scenario is that of multi-agent games, in which the state is global and rewards are local. In this scenario, agents collaborate to estimate the value function of a target team policy. The proposed algorithm combines off-policy learning, eligibility traces and linear function approximation. The proposed algorithm is of the variance-reduced kind and achieves linear convergence with $O(1)$ memory requirements. The linear convergence of the algorithm is established analytically, and simulations are used to illustrate the effectiveness of the method. 
\end{abstract}

\begin{IEEEkeywords}
Distributed reinforcement learning, policy evaluation, temporal-difference learning, multi-agent reinforcement learning, reduced variance algorithms.
\end{IEEEkeywords}

%
\IEEEpeerreviewmaketitle

\section{Introduction}
%
%
%
%
\IEEEPARstart{T}{he} goal of a policy evaluation algorithm is to estimate the performance that an agent will achieve when it follows a particular policy to interact with an environment, usually modeled as a Markov Decision Process (MDP). Policy evaluation algorithms are important because they are often key parts of more elaborate solution methods where the ultimate goal is to find an optimal policy for a particular task (one such example is the class of actor-critic algorithms -- see \cite{grondman2012survey} for a survey). This work studies the problem of policy evaluation in a fully decentralized setting. We consider two distinct scenarios.

In the first case, $K$ independent agents interact with independent instances of the same environment following potentially different behavior policies to collect data the objective is for the agents to cooperate. In this scenario each agent only has knowledge of its own states and rewards, which are independent of the states and the rewards of the other agents. Various practical situations give rise to this scenario, for example, consider a task that takes place in a large geographic area. The area can be divided into smaller sections, each of which can be explored by a separate agent. This framework is also useful for collective robot learning (see, \cite{kehoe2013cloud,kehoe2015survey,gu2016deep}).

The second scenario we consider is that of multi-agent reinforcement learning (MARL). In this case a group of agents interact simultaneously with a unique MDP and with each other to attain a common goal. In this setting there is a unique global state known to all agents and each agent receives distinct local rewards, which are unknown to the other agents. Some examples that fit into this framework are teams of robots working on a common task such as moving a bulky object, trying to catch a prey, or putting out a fire.

\subsection{Related Work}
 
Our contributions belong to the class of works that deal with policy evaluation, distributed reinforcement learning, and multi-agent reinforcement learning.

There exist a plethora of algorithms for policy evaluation such as GTD \cite{GTD}, TDC \cite{GTD2/TDC}, GTD2 \cite{GTD2/TDC}, GTD-MP/GTD2-MP \cite{liu2015finite}, GTD($\lambda$) \cite{maei2011gradient}, and True Online GTD($\lambda$) \cite{van2014off}. The main feature of these algorithms is that they have guaranteed convergence (for small enough step-sizes) while combining off-policy learning and linear function approximation; and are applicable to scenarios with streaming data. They are also applicable to cases with a finite amount of data. However, in this latter situation, they have the drawback that they converge at a sub-linear rate because a decaying step-size is necessary to guarantee convergence to the minimizer. In most current applications, policy evaluation is actually carried out after collecting a finite amount of data (one example is the recent success in the game of GO \cite{GO}). Therefore, deriving algorithms with better convergence properties for the finite sample case becomes necessary. By leveraging recent developments in variance-reduced algorithms, such as SVRG \cite{johnson2013accelerating} and SAGA \cite{defazio2014saga}, the work \cite{du2017stochastic} presented SVRG and SAGA-type algorithms for policy evaluation. These algorithms combine GTD2 with SVRG and SAGA and they have the advantage over GTD2 in that linear convergence is guaranteed for fixed data sets. Our work is related to \cite{du2017stochastic} in that we too use a variance-reduced strategy, however we use the AVRG strategy \cite{AVRG} which is more convenient for distributed implementations because of an important balanced gradient calculation feature.

Another interesting line of work in the context of distributed policy evaluation is \cite{macua2015distributed}, \cite{stankovic2016multi} and \cite{cassano_ECC}. In \cite{macua2015distributed} and \cite{stankovic2016multi} the authors introduce Diffusion GTD2 and ALG2; which are extensions of GTD2 and TDC to the fully decentralized case, respectively. While \cite{cassano_ECC} is a shorter version of this work. These algorithms consider the situation where independent agents interact with independent instances of the same MDP. These strategies allow individual agents to converge through collaboration even in situations where convergence is infeasible without collaboration. The algorithm we introduce in this paper can be applied to this setting as well and has two main advantages over \cite{macua2015distributed} and \cite{stankovic2016multi}. First, the proposed algorithm has guaranteed linear convergence, while the previous algorithms converge at a sub-linear rate. Second, while in some instances, the solutions in \cite{macua2015distributed} and \cite{stankovic2016multi} may be biased \textcolor{black}{due to the use of the Mean Square Projected Bellman Error (MSPBE) as a surrogate cost (this point is further clarified in Section \ref{sec: Problem Setting})}, the proposed method allows better control of the bias term due to a modification in the cost function. We extend our previous work \cite{cassano_ECC} in four main ways which we discuss in the \textit{Contribution} sub-section.

There is also a good body of work on multi-agent reinforcement learning (MARL). However, most works in this area focus on the policy optimization problem instead of the policy evaluation problem. \textcolor{black}{The work that is closer to the current contribution is \cite{wai2018multi}, which was pursued simultaneously and independently of our current work. The goal of the formulation in \cite{wai2018multi} is to derive a linearly-convergent distributed policy evaluation procedure for MARL. The work \cite{wai2018multi} does not consider the case where independent agents interact with independent MDPs. In the context of MARL, our proposed technique has three advantages in comparison to the approach from \cite{wai2018multi}. First, the memory requirement of the algorithm in \cite{wai2018multi} scales linearly with the amount of data (i.e., $O(N)$), while the memory requirement for the proposed method in this manuscript is $O(1)$), i.e., it is independent of the amount of data. Second, the algorithm of \cite{wai2018multi} does not include the use of eligibility traces; a feature that is often necessary to reach state of the art performance (see, for example, \cite{dai2018sbeed,mnih2016asynchronous}). Finally, the algorithm from \cite{wai2018multi} requires all agents in the network to sample their data points in a synchronized manner, while the algorithm we propose in this work does not require this type of synchronization. Another paper that is related to the current work is \cite{zhang2018fully}, which considers the same distributed MARL as we do; although their contribution is different from ours. The main contribution in \cite{zhang2018fully} is to extend the policy gradient theorem to the MARL case and derive two fully distributed actor-critic algorithms with linear function approximation for policy optimization. The connection between \cite{zhang2018fully} and our work is that their actor-critic algorithms require a distributed policy evaluation algorithm. The algorithm they use is similar to \cite{macua2015distributed} and \cite{stankovic2016multi} (they combine diffusion learning \cite{sayed2014adaptive} with standard TD instead of GTD2 and TDC as was the case in \cite{macua2015distributed} and \cite{stankovic2016multi}). The algorithm we present in this paper is compatible with their actor-critic schemes (i.e., it could be used as the critic), and hence could potentially be used to augment their performance and convergence rate.}

\textcolor{black}{Our work is also related to the literature on distributed optimization. Some notable works in this area include \cite{sayed2014adaptive,nedic2009distributed,yuan2016convergence,admm,shi2015extra,nedic2017achieving,qu2017harnessing,xi2017dextra,yuan2017exact1,yuan2017exact2,sayed2014adaptation}. Consensus \cite{nedic2009distributed} and Diffusion \cite{sayed2014adaptive} constitute some of the earliest work in this area. These methods can converge to a neighborhood around, but not exactly to, the global minimizer when constant step-sizes are employed \cite{yuan2016convergence,sayed2014adaptation}. Another family of methods is based on distributed alternating direction method of multipliers (ADMM) \cite{admm}. While these methods can converge linearly fast to the exact global minimizer, they are computationally more expensive than previous methods since they need to optimize a sub-problem at each iteration. An exact first-order algorithm (EXTRA) was proposed in \cite{shi2015extra} for undirected networks to correct the bias suffered by consensus, (this work was later extended for the case of directed networks \cite{xi2017dextra}). EXTRA and DEXTRA \cite{xi2017dextra} can also converge linearly to the global minimizer while maintaining the same computational efficiency as consensus and diffusion. Several other works employ instead a gradient tracking strategy \cite{next,nedic2017achieving,qu2017harnessing}. These works guarantee linear convergence to the global minimizer even when they operate over time-varying networks. More recently, the Exact Diffusion algorithm \cite{yuan2017exact1,yuan2017exact2} has been introduced for static undirected graphs. This algorithm has a wider stability range than EXTRA (and hence exhibits faster convergence \cite{yuan2017exact2}), and for the case of static graphs is more communication efficient than gradient tracking methods since the gradient vectors are not shared among agents. Our current work closely related to Exact Diffusion since our MARL model is based on static undirected graphs and our distributed strategy is derived in a similar manner to \textit{Exact Diffusion}. We remark that there is a fundamental difference between the algorithm we present and the works in \cite{sayed2014adaptive,nedic2009distributed,yuan2016convergence,admm,shi2015extra,nedic2017achieving,qu2017harnessing,xi2017dextra,yuan2017exact1,yuan2017exact2,sayed2014adaptation}, namely, our algorithm finds the global {\em saddle-point} in a primal dual formulation while the cited works solve convex minimization problems.}

\subsection{Contribution}
The contribution of this paper is twofold. In the first place, we introduce \textit{Fast Diffusion for Policy Evaluation} (FDPE), a fully decentralized policy evaluation algorithm under which all agents have a guaranteed linear convergence rate to the minimizer of the global cost function. The algorithm is designed for the finite data set case and combines off-policy learning, eligibility traces, and linear function approximation. The eligibility traces are derived from the use of a more general cost function and they allow the control of the bias-variance trade-off we mentioned previously. In our distributed model, a fusion center is not required and communication is only allowed between immediate neighbors. The algorithm is applicable both to distributed situations with independent MDPs (i.e., independent states and rewards) and to MARL scenarios (i.e., global state and independent rewards). To the best of our knowledge, this is the first algorithm that combines all these characteristics. Our second contribution is a novel proof of convergence for the algorithm. \textcolor{black}{This proof is challenging due to the combination of three factors: the distributed nature of the algorithm, the primal-dual structure of the cost function, and the use of stochastic biased gradients as opposed to exact gradients.}

This work expands our short work \cite{cassano_ECC} in four ways. In the first place, in that work we used the MSPBE as a cost function, while in this work we employ a more general cost function. Second, we include the proof of convergence. Third, we show that our approach applies to MARL scenarios, while in our previous short paper we only discussed the distributed policy evaluation scenario with independent MDPs. Finally in this paper we provide more extensive simulations.

\subsection{Notation and Paper Outline}
Matrices are denoted by upper case letters, while vectors are denoted with lower case. Random variables and sets are denoted with bold font and calligraphic font, respectively. $\rho(A)$ indicates the spectral radius of matrix A. \textcolor{black}{$I_M$ is the identity matrix of size $M$}. $\Ex_{g}$ is the expected value with respect to distribution $g$. $\|\cdot\|_D$ refers to the weighted matrix norm, where $D$ is a diagonal positive definite matrix. We use $\preceq$ to denote entry-wise inequality. \textcolor{black}{$\text{col}\{v(n)\}_{n=1}^N$ is a column vector with elements $v(1)$ through $v(N)$ (where $v(N)$ is at the bottom)}. Finally $\mathbb{R}$ and $\mathbb{N}$ represent the sets of real and natural numbers, respectively.

The outline of the paper is as follows. In the next section we introduce the framework under consideration. In Section \ref{Sec: Distributed} we derive our algorithm and provide a theorem that guarantees linear convergence rate. In Section \ref{Sec:Marl} we discuss the MARL setting. Finally we show simulation results in Section \ref{Sec: Simulations}.

\section{Problem Setting}
\label{sec: Problem Setting}

\subsection{Markov Decision Processes and the Value Function}
We consider the problem of policy evaluation within the traditional reinforcement learning framework. \textcolor{black}{We recall that the objective of a policy evaluation algorithm is to estimate the performance of a known target policy using data generated by either the same policy (this case is referred as \textit{on-policy}), or a different policy that is also known (this case is referred as \textit{off-policy})}. We model our setting as a finite Markov Decision Process (MDP), with an MDP defined by the tuple ($\mathcal{S}$,$\mathcal{A}$,$\mathcal{P}$,$r$,$\gamma$), where $\mathcal{S}$ is a set of states of size $S=|\mathcal{S}|$, $\mathcal{A}$ is a set of actions of size $A=|\mathcal{A}|$, $\mathcal{P}(s'|s,a)$ specifies the probability of transitioning to state $s'\in\mathcal{S}$ from state $s\in\mathcal{S}$ having taken action $a\in\mathcal{A}$, $r:\mathcal{S}\times\mathcal{A}\times\mathcal{S}\rightarrow\mathbb{R}$ is the reward function $r(s,a,s')$ when the agent transitions to state $s'\in\mathcal{S}$ from state $s\in\mathcal{S}$ having taken action $a\in\mathcal{A}$), and $\gamma\in[0,1)$ is the discount factor.

Even though in this paper we analyze the distributed scenario, in this section we motivate the cost function for the single agent case for clarity of exposition and in the next section we generalize it to the distributed setting. We thus consider an agent that wishes to learn the value function, $v^{\pi}(s)$, for a target policy of interest $\pi(a|s)$, while following a potentially different behavior policy $\phi(a|s)$. Here, the notation $\pi(a|s)$ specifies the probability of selecting action $a$ at state $s$. We recall that the value function for a target policy $\pi$, starting from some initial state $s\in\mathcal{S}$ at time $i$, is defined as follows:
\begin{equation} \label{eq:value_function_definition}
v^{\pi}(s)=\Ex_{\mathcal{P},\pi}\bigg(\sum_{t=i}^{\infty}\gamma^{t-i}\bm{r}(\bm{s}_t,\bm{a}_t,\bm{s}_{t+1})\Big|\bm{s}_i=s\bigg)
\end{equation}
where $\bm{s}_t$ and $\bm{a}_t$ are the state and action at time $t$, respectively. Note that since we are dealing with a constant target policy $\pi$, the transition probabilities between states, which are given by $p_{s,s'}^{\pi}=\Ex_\pi{\mathcal{P}(s'|s,\bm{a})}$, are fixed and hence the MDP reduces to a Markov Rewards Process. In this case, the state evolution of the agent can be modeled with a Markov Chain with transition matrix $P^{\pi}$ whose entries are given by $(P^{\pi})_{ij}=p_{i,j}^{\pi}$.
\begin{assumption}
	We assume that the Markov Chain induced by the behavior policy $\phi(a|s)$ is aperiodic and irreducible. In view of the Perron-Frobenius Theorem \cite{sayed2014adaptation}, this condition guarantees that the Markov Chain under $\phi(a|s)$ will have a steady-state distribution in which every state has a strictly positive probability of visitation \cite{sayed2014adaptation}.\hfill\qed
\end{assumption}
Using the matrix $P^{\pi}$ and defining:
\begin{align}
	&v^\pi\hspace{-1.3mm}=\hspace{-0.5mm}\text{col}\{\hspace{-0.5mm}v^{\hspace{-0.3mm}\pi}\hspace{-0.7mm}(\hspace{-0.3mm}s\hspace{-0.3mm})\hspace{-0.5mm}\}_{s=1}^S,\hspace{1mm}r^{\pi}\hspace{-0.7mm}(s)\hspace{-0.7mm}=\Ex_{\pi,\mathcal{P}}\big(\bm{r}(s,\boldsymbol{a},\boldsymbol{s'})\big),\hspace{1mm}r^\pi\hspace{-1mm}=\text{col}\{r^{\hspace{-0.3mm}\pi}\hspace{-0.7mm}(\hspace{-0.3mm}s\hspace{-0.3mm})\hspace{-0.5mm}\}_{s=1}^S
\end{align}
we can rewrite \eqref{eq:value_function_definition} in matrix form as:
\begin{align}
v^\pi=\sum_{n=0}^{\infty}(\gamma P^{\pi})^nr^\pi=(I-\gamma P^{\pi})^{-1}r^\pi
\end{align}
Note that the inverse $(I-\gamma P^{\pi})^{-1}$ always exists; this is because $\gamma<1$ and the matrix $P^{\pi}$ is right stochastic with spectral radius equal to one. We further note that $v^\pi$ also satisfies the following $h-$stage Bellman equation for any $h\in\mathbb{N}$:\vspace*{-0.5mm}
\begin{align}\label{eq:K_Bellman_equation}
v^\pi=(\gamma P^{\pi})^{h}v^\pi+\sum_{n=0}^{h-1}(\gamma P^{\pi})^nr^\pi
\end{align}

\subsection{Definition of cost function}
We are interested in applications where the state space is too large (or even infinite) and hence some form of function approximation is necessary to reduce the dimensionality of the parameters to be learned. As we anticipated in the introduction, in this work we use linear approximations\footnote{We choose linear function approximation, not just because it is mathematically convenient (since with this approximation our cost function is strongly convex) but because there are theoretical justifications for this choice. In the first place, in some domains (for example Linear Quadratic Regulator problems) the value function is a linear function of known features. Secondly, when policy evaluation is used to estimate the gradient of a policy in a policy gradient algorithm, the policy gradient theorem \cite{sutton2000policy} assures that the exact gradient can be obtained even when a linear function is used to estimate $v^\pi$.}. More formally, for every state $s\in\mathcal{S}$, we approximate $v^{\pi}(s)\approx x_s^T\theta^{\star}$ where $x_s\in\mathbb{R}^{M}$ is a feature vector corresponding to state $s$ and $\theta^{\star}\in\mathbb{R}^{M}$ is a parameter vector such that $M\ll S$. Defining $X=[x_1,x_2,\cdots,x_S]^T\in\mathbb{R}^{S\times M}$, we can write a vector approximation for $v^{\pi}$ as $v^{\pi}\approx X\theta^{\star}$.  We assume that $X$ is a full rank matrix; this is not a restrictive assumption since the feature matrix is a design choice. It is important to note though that the true $v^{\pi}$ need not be in the range space of $X$. If $v^{\pi}$ is in the range space of $X$, an equality of the form $v^{\pi}=X\theta^{\star}$ holds exactly and the value of $\theta^{\star}$ is unique (because $X$ is full rank) and given by $\theta^{\star}\hspace{-0.5mm}=\hspace{-0.5mm}(\hspace{-0.5mm}X^{\hspace{-0.4mm}T}\hspace{-0.5mm}X)^{\hspace{-0.4mm}-\hspace{-0.4mm}1}X^{\hspace{-0.4mm}T}\hspace{-0.5mm}v^\pi$. For the more general case where $v^{\pi}$ is not in the range space of $X$, then one sensible choice for $\theta^{\star}$ is:
\begin{align}\label{eq:t_star}
\theta^{\star}=\argmin_\theta\|X\theta-v^\pi\|_D^2=(X^TDX)^{-1}X^TDv^\pi
\end{align}
where $D$ is some positive definite weighting matrix to be defined later. Although \eqref{eq:t_star} is a reasonable cost to define $\theta^{\star}$, it is not useful to derive a learning algorithm since $v^\pi$ is not known beforehand. As a result, for the purposes of deriving a learning algorithm, another cost (one whose gradients can be sampled) needs to be used as a surrogate for \eqref{eq:t_star}. One popular choice for the surrogate cost is the MSPBE (see, e.g., \cite{GTD2/TDC,liu2015finite,macua2015distributed,stankovic2016multi}); this cost has the inconvenience that its minimizer $\theta^o$ is different from \eqref{eq:t_star} and some bias is incurred\textcolor{black}{ \cite{GTD2/TDC}.} In order to control the magnitude of the bias, we shall derive a generalization of the MSPBE which we refer to as $H-$truncated $\lambda$-weighted Mean Square Projected Bellman Error (H$\lambda$-MSPBE). To introduce this cost, we start by writing a convex combination of equation \eqref{eq:K_Bellman_equation} with different $h$'s ranging from 1 to $H$ (we choose $H$ to be a finite amount instead of $H\rightarrow\infty$ because in this paper we deal with finite data instead of streaming data) as follows:
\begin{align}
v^\pi&=(1-\lambda)\sum_{h=1}^{H-1}\lambda^{h-1}\bigg((\gamma P^{\pi})^{h}v^\pi+\sum_{n=0}^{h-1}(\gamma P^{\pi})^nr^\pi\bigg)\nonumber\\
&\hspace{14mm}+\lambda^{H-1}\bigg((\gamma P^{\pi})^{H}v^\pi+\sum_{n=0}^{H-1}(\gamma P^{\pi})^nr^\pi\bigg)\\
&=\Gamma_2(\lambda,H)r^\pi+\rho_1(\lambda,H)\Gamma_1(\lambda,H)v^\pi\label{eq:l_weighted_BE}
\end{align}
where we introduced:
\begin{align}\label{eq:definitions_ggr}
&\rho_1(\lambda,H)\hspace{-0.5mm}=\hspace{-0.5mm}\frac{(1-\lambda)\gamma+(1-\gamma)(\gamma\lambda)^H}{1-\gamma\lambda}\\[-2.5pt]
&\Gamma_2(\lambda,H)\hspace{-0.5mm}=\hspace{-1mm}\sum_{n=0}^{H-1}(\gamma\lambda P^{\pi})^n\hspace{-1mm}=\hspace{-0.5mm}\big(I-(\gamma\lambda P^{\pi})^H\big)(I-\gamma\lambda P^{\pi})^{-1}\\[-3pt]
&\Gamma_{\hspace{-0.5mm}1}(\hspace{-0.5mm}\lambda,H)\hspace{-0.5mm}=\hspace{-0.5mm}\frac{1}{\rho_1\hspace{-0.5mm}(\lambda,H)}\hspace{-0.5mm}\bigg(\hspace{-1.2mm}(1\hspace{-0.5mm}-\hspace{-0.5mm}\lambda)\gamma P^{\pi}\hspace{-0.5mm}\sum_{n=0}^{H-1}\hspace{-0.5mm}(\gamma\lambda P^{\pi})^n\hspace{-0.5mm}+(\hspace{-0.5mm}\gamma\lambda P^{\pi})^{\hspace{-0.5mm}H}\hspace{-1mm}\bigg)
\end{align}
and $0\leq\lambda\leq1$ is a parameter that controls the bias.
\begin{remark}
	Note that $0<\rho_1(\lambda,H)\leq\gamma<1$.\hfill\qed
\end{remark}
\begin{remark}
	$\Gamma_1(\lambda,H)$ is a right stochastic matrix because it is defined as a convex combination of powers of $P^\pi$ (which are right stochastic matrices).\hfill\qed
\end{remark}
\textcolor{black}{Note that from now on for the purpose of simplifying the notation, we refer to $\rho_1(\lambda,H)$, $\Gamma_{\hspace{-0.5mm}1}(\lambda,H)$ and $\Gamma_{\hspace{-0.5mm}2}(\lambda,H)$ as $\rho_1$, $\Gamma_{\hspace{-0.5mm}1}$ and $\Gamma_{\hspace{-0.5mm}2}$, respectively.} Replacing $v^\pi$ in \eqref{eq:l_weighted_BE} by its linear approximation we get:\vspace{-0.5mm}
\begin{align}
	X\theta\approx\Gamma_2r^\pi+\rho_1\Gamma_1X\theta
\end{align}
Projecting the right hand side onto the range space of $X$ so that an equality holds, we arrive at:\vspace{-0.5mm}
\begin{align}\label{eq:l_weighted_Projected_Bellman_Equation}
	X\theta=\Pi\left[\Gamma_2r^\pi+\rho_1\Gamma_1X\theta\right]
\end{align}
where $\Pi\in\mathbb{R}^{S\times S}$ is the weighted projection matrix onto the space spanned by $X$, (i.e., $\Pi=X(X^TDX)^{-1}X^TD$). We can now use \eqref{eq:l_weighted_Projected_Bellman_Equation} to define our surrogate cost function:
\begin{align}\label{eq:surrogate}
S(\theta)&=\frac{1}{2}\Bigl{\|}\Pi\big(\Gamma_2r^\pi+\rho_1\Gamma_1X\theta\big)-X\theta\Bigr{\|}_D^2+\frac{\eta}{2}\bigl{\|}\theta-\theta_{\text{p}}\bigr{\|}_U^2
\end{align}
where the first term on the right hand side is the H$\lambda$-MSPBE, $\eta\geq0$ is a regularization parameter, $U>0$ is a symmetric positive-definite weighting matrix, and $\theta_{\text{p}}$ reflects prior knowledge about $\theta$. Two sensible choices for $U$ are $U=I$ and $U=X^TDX=C$, which reflect previous knowledge about $\theta$ or the value function $X\theta$, respectively. The regularization term can be particularly useful when the policy evaluation algorithm is used as part of a policy gradient loop (since subsequent policies are expected to have similar value functions and the value of $\theta$ learned in one iteration can be used as $\theta_{\text{p}}$ in the next iteration) like, for example, in \cite{macua2017diff}. One main advantage of using the proposed cost \eqref{eq:surrogate} instead of the more traditional MSPBE cost \textcolor{black}{is that the magnitude of the bias between its minimizer (denoted as $\theta^o(H,\lambda)$) and the desired solution $\theta^\star$ can be controlled through $\lambda$ and $H$}. To see this, we first rewrite $S(\theta)$ in the following equivalent form:
\begin{align} 
S(\hspace{-0.3mm}\theta\hspace{-0.3mm})\hspace{-0.5mm}=&\frac{1}{2}\bigl{\|}X^{\hspace{-0.5mm}T}\hspace{-1mm}D(I\hspace{-0.5mm}-\hspace{-0.5mm}\rho_1\hspace{-0.5mm}\Gamma_{\hspace{-0.7mm}1})X\theta-X^T\hspace{-1mm}D\Gamma_2r^\pi\bigr{\|}_{(X^{\hspace{-0.4mm}T}\hspace{-0.9mm}DX)^{\hspace{-0.4mm}-\hspace{-0.4mm}1}}^2\hspace{-0.6mm}+\frac{\eta}{2}\bigl{\|}\theta-\theta_{\text{p}}\bigr{\|}_U^2\label{eq:PBE}
\end{align}
Next, we introduce the quantities:
\begin{align}\label{eq:definition_AbC}
	A&=X^TD(I-\rho_1\Gamma_1)X,\hspace{2mm}b=X^TD\Gamma_2r^\pi,\hspace{2mm}C=X^TDX
\end{align}
\begin{remark}
	$A$ is an invertible matrix.
\end{remark}
\begin{proof}
	Due to remarks 1 and 2 we have that the spectral radius of $\rho_1(\lambda,H)\Gamma_1(\lambda,H)$ is strictly smaller than one, and hence $I-\rho_1(\lambda,H)\Gamma_1(\lambda,H)$ is invertible. The result follows by recalling that $X$ and $D$ are full rank matrices.
\end{proof}
The minimizer of \eqref{eq:PBE} is given by:
\begin{align}\label{eq:parc_min}
	\theta^o(H,\lambda)=(A^TC^{-1}A+\eta U)^{-1}(\eta U\theta_\text{p}+A^TC^{-1}b)
\end{align}
where $(\hspace{-0.5mm}A^T\hspace{-0.5mm}C^{-1}\hspace{-0.5mm}A+\eta U)^{-1}$ exists and hence $\theta^o(H,\lambda)$ is well defined. This is because $\eta U$ is positive-definite and $A$ is invertible. Also note that when $\lambda=1$, $H\rightarrow\infty$ and $\eta=0$, $\theta^o(H,\lambda)$ reduces to \eqref{eq:t_star} and hence the bias is removed. We do not fix $\lambda=1$ because while the bias diminishes as $\lambda\rightarrow1$, the estimate of the value function approaches a Monte Carlo estimate and hence the variance of the estimate increases. \textcolor{black}{Note from \eqref{eq:l_weighted_BE} and \eqref{eq:PBE} that in the particular case where the value function $v^\pi$ lies in the range space of $X$ (and there is no regularization, i.e., $\eta=0$) there is no bias \big(i.e., $\theta^\star=\theta^o(H,\lambda)$\big) independently of the values of $\lambda$ and $H$. This observation shows that when there is bias between $\theta^\star$ and $\theta^o(H,\lambda)$, the bias arises from the fact that the value function being estimated does not lie in the range space of $X$. In practice, $\lambda$ offers a valuable bias-variance trade-off, and its optimal value depends on each particular problem. Note that since we are dealing with finite data samples, in practice, $H$ will always be finite. Therefore, eliminating the bias completely is not possible (even when $\lambda=1$). The exact expression for the bias is obtained by subtracting \eqref{eq:parc_min} from \eqref{eq:t_star}. However, this expression does not easily indicate how the bias behaves as a function of $\gamma$, $\lambda$ and $H$. Lemma \ref{lemma:bias} provides a simplified expression.
\begin{lemma}\label{lemma:bias}
	The bias $\|\theta^o(\hspace{-0.6mm}H,\lambda\hspace{-0.6mm})-\theta^\star\hspace{-0.5mm}\|^2$ is approximated by:
	\begin{align}\label{eq:bias}
		&\|\hspace{-0.5mm}\theta^o(\hspace{-0.8mm}H,\lambda\hspace{-0.6mm})\hspace{-0.4mm}-\theta^\star\hspace{-0.5mm}\|^2\hspace{-1mm}\approx\hspace{-1mm}\left(\hspace{-1mm}\mathbb{I}\big(v^\pi\hspace{-1.5mm}\neq\hspace{-0.5mm}\Pi v^\pi\hspace{-0.5mm}\big)\hspace{-0.5mm}\frac{\kappa_2\rho_1}{(\hspace{-0.5mm}1\hspace{-0.5mm}+\hspace{-0.5mm}\kappa_{\hspace{-0.5mm}1}\hspace{-0.2mm}\eta)(\kappa_3\hspace{-0.5mm}-\hspace{-0.5mm}\rho_1\hspace{-0.5mm})}+\frac{\kappa_1\eta\|\theta_{\text{p}}-\theta^\star\|}{1+\kappa_1\eta}\right)^{\hspace{-1mm}2}
	\end{align}
	where
	\begin{align}
		&\frac{\rho_1}{\kappa_3-\rho_1}=\frac{(1-\lambda)\gamma+(1-\gamma)(\gamma\lambda)^H}{\kappa_3(1-\gamma\lambda)-(1-\lambda)\gamma-(1-\gamma)(\gamma\lambda)^H}
	\end{align}
	for some constants $\kappa_1$, $\kappa_2$ and $\kappa_3$.
\end{lemma}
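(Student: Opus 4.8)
The plan is to obtain the bias vector $\theta^o(H,\lambda)-\theta^\star$ exactly, split it into a regularization part governed by $\eta$ and a surrogate-cost part governed by $\rho_1$, and then approximate the norm of each part by collapsing its matrix gain to a scalar.

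First I would rewrite \eqref{eq:parc_min} in a more transparent form. Writing $G \define A^TC^{-1}A$ and recalling from Remark 3 that $A$ is invertible, the unregularized minimizer is $\theta^o_0 \define A^{-1}b$, and since $A^TC^{-1}b = A^TC^{-1}A\theta^o_0 = G\theta^o_0$, expression \eqref{eq:parc_min} becomes $\theta^o(H,\lambda)=(G+\eta U)^{-1}(\eta U\theta_p+G\theta^o_0)$. Subtracting $\theta^\star$ and using $-\theta^\star=(G+\eta U)^{-1}(-(G+\eta U)\theta^\star)$ inside the parentheses gives the exact decomposition
\begin{align}
\theta^o(H,\lambda)-\theta^\star=(G+\eta U)^{-1}\eta U(\theta_p-\theta^\star)+(G+\eta U)^{-1}G(\theta^o_0-\theta^\star),\nonumber
\end{align}
whose first summand is the bias caused by the regularizer and whose second summand is the bias caused by minimizing the H$\lambda$-MSPBE instead of \eqref{eq:t_star}.

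Next I would simplify the surrogate term $\theta^o_0-\theta^\star$. The $\lambda$-weighted Bellman equation \eqref{eq:l_weighted_BE} gives $(I-\rho_1\Gamma_1)v^\pi=\Gamma_2r^\pi$, so $b$ in \eqref{eq:definition_AbC} equals $X^TD(I-\rho_1\Gamma_1)v^\pi$. Decomposing $v^\pi=\Pi v^\pi+(I-\Pi)v^\pi=X\theta^\star+e$ and using that $e=(I-\Pi)v^\pi$ is $D$-orthogonal to the range of $X$, i.e. $X^TDe=0$, the leading term collapses to $\theta^\star$ and I obtain
\begin{align}
\theta^o_0-\theta^\star=-\rho_1 A^{-1}X^TD\Gamma_1(I-\Pi)v^\pi.\nonumber
\end{align}
This identity is the heart of the lemma: it displays the surrogate bias as proportional to $\rho_1$ and shows that it vanishes precisely when $v^\pi=\Pi v^\pi$, which is the source of the indicator $\mathbb{I}(v^\pi\neq\Pi v^\pi)$ in \eqref{eq:bias}.

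Finally I would pass to norms. Factoring $A=C(I-\rho_1 M)$ with $M\define C^{-1}X^TD\Gamma_1X$ and applying a Neumann-type estimate $\|(I-\rho_1 M)^{-1}\|\approx\kappa_3/(\kappa_3-\rho_1)$ with $\kappa_3=1/\rho(M)$ converts $\|\theta^o_0-\theta^\star\|$ into $\kappa_2\rho_1/(\kappa_3-\rho_1)$ after folding the remaining fixed norms into the constant $\kappa_2$. Approximating the two symmetric gains by scalars then replaces $(G+\eta U)^{-1}\eta U$ by $\kappa_1\eta/(1+\kappa_1\eta)$ and $(G+\eta U)^{-1}G$ by $1/(1+\kappa_1\eta)$, with the common constant $\kappa_1$ encoding the scale ratio between $U$ and $G$. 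Bounding the two summands by the triangle inequality and squaring then yields \eqref{eq:bias}, and substituting $\rho_1$ from \eqref{eq:definitions_ggr} into $\rho_1/(\kappa_3-\rho_1)$ produces the explicit dependence on $\gamma$, $\lambda$ and $H$ stated in the lemma. The hard part will be this last step: the scalar replacements for $(G+\eta U)^{-1}\eta U$, $(G+\eta U)^{-1}G$ and $(I-\rho_1 M)^{-1}$ are exact only under simultaneous diagonalizability, so the result is genuinely an approximation (the ``$\approx$'' in \eqref{eq:bias}), and the real work is choosing $\kappa_1,\kappa_2,\kappa_3$ so the scalar surrogates track the true spectra while checking $\rho_1\rho(M)<1$ so the Neumann expansion is valid.
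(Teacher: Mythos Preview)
Your argument is correct but follows a different route from the paper. The paper separates the regularization effect via $R=A^{-1}CA^{-T}U$ and its Jordan form, then expresses both $X\theta^\bullet$ (with $\theta^\bullet=A^{-1}b$) and $X\theta^\star$ as Neumann series in $\rho_1\Gamma_1\Pi$ and $\rho_1\Gamma_1$ respectively, takes their difference, and only then reduces to scalars by approximating $\Gamma_1$ with the stationary rank-one matrix $\one p^T$; this yields $\kappa_3=1/\epsilon$ with $\epsilon=p^T\Pi\one$, and the bound $|\epsilon|\le 1$ automatically secures convergence of the geometric sum. Your approach instead produces the exact identity $\theta^o_0-\theta^\star=-\rho_1 A^{-1}X^TD\Gamma_1(I-\Pi)v^\pi$ in one line, which has the advantage of displaying the indicator $\mathbb{I}(v^\pi\neq\Pi v^\pi)$ and the proportionality to $\rho_1$ immediately and without any approximation; the scalar collapse then enters only through the Neumann estimate for $(I-\rho_1 M)^{-1}$. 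The trade-off is that your $\kappa_3=1/\rho(M)$ lacks the clean interpretation and the a priori guarantee $\kappa_3>\rho_1$ that the paper's steady-state argument provides, so the convergence check you flag as ``real work'' is genuine, whereas in the paper it comes for free.
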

\begin{proof}
	See Appendix \ref{app:bias}.
\end{proof}
\hspace{-3.7mm}In the statement of the lemma, the notation $\mathbb{I}$ is the indicator function. Note that expression \eqref{eq:bias} agrees with our previous discussion and with several intuitive facts. First, due to the indicator function, if $v^\pi$ lies in the range space of $X$ there is no bias independently  of the values of $\gamma$, $\lambda$ and $H$ (as long as $\eta=0$). Second, if $\lambda=0$, the bias is independent of $H$ (because when $\lambda=0$ all terms that depend on $H$ are zeroed). Third, if $H=1$ then the bias is independent of the value of $\lambda$ (because when $H=1$ all terms that depend on $\lambda$ are zeroed). Furthermore, the expression is monotone decreasing in $\lambda$ (for the case where $H>1$) which agrees with the intuition that the bias diminishes as $\lambda$ increases. Finally, we note that the bias is minimized for $\lambda=1$ and in this case there is still a bias, which if $\eta=0$, is on the order of $\mathcal{O}\big(\gamma^H/(\kappa_3-\gamma^H)\big)$. This explicitly shows the effect on the bias of having a finite $H$. The following lemma describes the behavior of the variance.
\begin{lemma}\label{lemma:variance}
	The variance of the estimate $\widehat{\theta}^o(H,\lambda)$ is approximated by:
	\begin{align}\label{eq:variance}
	&\Ex\hspace{-0.5mm}\big\|\hspace{-0.5mm}\widehat{\theta^o}(\hspace{-0.8mm}H,\lambda\hspace{-0.6mm})-\theta^o\hspace{-0.5mm}(\hspace{-0.8mm}H,\lambda\hspace{-0.6mm})\hspace{-0.5mm}\big\|^2\hspace{-1.5mm}\approx\hspace{-1mm}\frac{\kappa_4}{(1+\kappa_1\eta)^2(N\hspace{-0.5mm}-\hspace{-0.5mm}H)}\hspace{-0.5mm}\left(\hspace{-0.5mm}\frac{1-(\gamma\lambda)^{2H}}{1-(\gamma\lambda)^2}\right)
	\end{align}	
	for some constants $\kappa_1$ and $\kappa_4$.
\end{lemma}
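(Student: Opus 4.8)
The plan is to view $\widehat{\theta}^o(H,\lambda)$ as the minimizer of the \emph{empirical} counterpart of \eqref{eq:PBE}, obtained by replacing the population matrices $A$, $b$, $C$ of \eqref{eq:definition_AbC} with sample averages $\widehat{A}$, $\widehat{b}$, $\widehat{C}$ formed from the $N$ available data points, and then plugging these into the closed form \eqref{eq:parc_min}. Writing $\widehat{A}=A+\delta A$, $\widehat{b}=b+\delta b$, and $\widehat{C}=C+\delta C$, where the $\delta(\cdot)$ are the zero-mean sampling errors, I would expand \eqref{eq:parc_min} to first order in these fluctuations (a delta-method linearization). Since $(A^TC^{-1}A+\eta U)^{-1}$ exists and the map in \eqref{eq:parc_min} is smooth, the leading-order error $\widehat{\theta}^o-\theta^o$ is a linear combination of $\delta A$, $\delta b$, $\delta C$ whose coefficient matrices are built from $(A^TC^{-1}A+\eta U)^{-1}$; discarding the higher-order terms is exactly what makes the statement an approximation rather than an identity.

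The second step is to compute the covariance of the fluctuations and assemble the three factors. Because the H$\lambda$-MSPBE relies on $H$-truncated, $\lambda$-weighted eligibility traces, each empirical summand is built from a trace vector that accumulates geometrically weighted feature contributions over a window of length $H$, i.e.\ a vector of the form $\sum_{n=0}^{H-1}(\gamma\lambda)^n(\cdots)x_{s_{t-n}}$ with the dots collecting the off-policy importance weights. The central calculation is the second moment of this trace: treating the per-lag contributions as approximately uncorrelated (the usual stationarity/mixing approximation) retains only the diagonal terms, each weighted by $(\gamma\lambda)^{2n}$, so the trace variance carries the geometric factor $\sum_{n=0}^{H-1}(\gamma\lambda)^{2n}=\frac{1-(\gamma\lambda)^{2H}}{1-(\gamma\lambda)^2}$, which is precisely the parenthesized term in \eqref{eq:variance}. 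Averaging over the $N-H$ complete traces that fit inside a trajectory of length $N$ supplies the $1/(N-H)$ prefactor, while the residual state-visitation and reward statistics are absorbed into the problem-dependent constant $\kappa_4$. Finally, the regularization acts through the gain $(A^TC^{-1}A+\eta U)^{-1}$ in the linearized error, whose norm contracts like $1/(1+\kappa_1\eta)$ as $\eta$ grows (with the same $\kappa_1$ as in Lemma \ref{lemma:bias}); squaring then produces the $1/(1+\kappa_1\eta)^2$ factor, and collecting the three pieces yields \eqref{eq:variance}.

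I expect the main obstacle to be the covariance computation for the eligibility traces: consecutive trace windows overlap and the Markov samples are themselves correlated, so an exact second moment would involve every cross term between different lags together with the dependence structure of the induced chain. The clean geometric form $\frac{1-(\gamma\lambda)^{2H}}{1-(\gamma\lambda)^2}$ only survives after neglecting these cross-correlations and keeping the diagonal contribution, which is exactly the step that forces the $\approx$ in the statement. Making this approximation rigorous, and controlling the discarded off-diagonal and higher-order terms, is the delicate part of the argument; the linearization and the regularization scaling are, by comparison, routine.
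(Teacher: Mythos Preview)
Your approach is sound but differs from the paper's in two notable ways. First, rather than a full delta-method linearization in $\delta A,\delta b,\delta C$ separately, the paper reuses the separation established in the proof of Lemma~\ref{lemma:bias}, namely $\theta^o\approx(1+\kappa_1\eta)^{-1}A^{-1}b+\kappa_1\eta(1+\kappa_1\eta)^{-1}\theta_{\text p}$, which immediately isolates the $(1+\kappa_1\eta)^{-2}$ factor and reduces the problem to the variance of $\widehat\theta^{\,\bullet}=\widehat A^{-1}\widehat b$. It then writes $\widehat\theta^{\,\bullet}-\theta^\bullet=\widehat A^{-1}(\widehat b-\widehat A\theta^\bullet)$ and handles $\widehat A^{-1}$ via singular-value bounds plus the intermediate value theorem, rather than expanding it around $A^{-1}$. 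This avoids ever introducing $\delta C$ at all.

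Second, and more substantively, the geometric factor is obtained not by computing the covariance of a trace vector but by algebraically reorganizing the single-sample residual $\widehat b_t-\widehat A_t\theta^\bullet$ into $x_t\sum_{n=0}^{H-1}(\gamma\lambda)^n\bigl(x_{t+n}^T\theta^\bullet-r_{t+n}-\gamma x_{t+n+1}^T\theta^\bullet\bigr)$, i.e.\ $x_t$ times a $(\gamma\lambda)$-weighted sum of one-step TD errors. After the approximation $x_t^T\theta^\bullet\approx v(s_t)$, these TD errors have conditional mean zero and are conditionally uncorrelated by the Markov property, so the cross terms vanish for a structural reason rather than by a generic mixing argument. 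Your route would also arrive at $\sum_{n=0}^{H-1}(\gamma\lambda)^{2n}$, but the paper's TD-error decomposition makes the step cleaner and pinpoints exactly which approximation is doing the work ($x_t^T\theta^\bullet\approx v(s_t)$), whereas the trace-covariance route leaves the cross-term neglect as a heuristic.
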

\begin{proof}
	See Appendix \ref{app:variance}.
\end{proof}}
\textcolor{black}{
Note that \eqref{eq:variance} is monotone increasing as a function of $\lambda$ (for $H>1$) and as a function of $H$ (for $\lambda>0$). Adding expressions \eqref{eq:bias} and \eqref{eq:variance} shows explicitly the bias-variance trade-off handled by the parameter $\lambda$ and the finite horizon $H$. We remark that the idea of an eligibility trace parameter $\lambda$ as a bias-variance trade-off is not novel to this paper and has been previously used in algorithms such as $TD(\lambda)$ \cite{sutton1988learning}, $TD(\lambda)$ with \textit{replacing traces} \cite{replacing_traces}, GTD($\lambda$) \cite{maei2011gradient} and True Online GTD($\lambda$) \cite{van2014off}. Note however, that these works derive algorithms for the on-line case (as opposed to the batch setting) using different cost functions. Therefore, the expressions we present in this paper are different from previous works, which is why we derive them in detail. Moreover, the expressions corresponding to Lemmas \ref{lemma:bias} and \ref{lemma:variance} that quantify such bias-variance trade-off for are new and specific for our batch model.}

At this point, all that is left to fully define the surrogate cost function $S(\theta)$ is to choose the positive definite matrix $D$. The algorithm that we derive in this paper is of the stochastic gradient type. With this in mind, we shall choose $D$ such that the quantities $A$, $b$ and $C$ turn out to be expectations that can be sampled from data realizations. Thus, we start by setting $D$ to be a diagonal matrix with positive entries; we collect these entries into a vector $d^{\phi}$ and write $D^{\phi}$ instead of $D$, i.e., $D=D^{\phi}=\text{diag}(d^{\phi})$. We shall select $d^{\phi}$ to correspond to the steady-state distribution of the Markov chain induced by the behavior policy, $\phi(a|s)$. This choice for $D$ not only is convenient in terms of algorithm derivation, it is also physically meaningful; since with this choice for $D$, states that are visited more often are weighted more heavily while states which are rarely visited receive lower weights. As a consequence of Assumption 1 and the Perron-Frobenius Theorem \cite{sayed2014adaptation}, the vector $d^{\phi}$ is guaranteed to exist and all its entries will be strictly positive and add up to one. Moreover, this vector satisfies ${d^{\phi}}^TP^{\phi}={d^{\phi}}^T$ where $P^{\phi}$ is the transition probability matrix defined in a manner similar to $P^{\pi}$.
\begin{lemma}\label{lemma:expectations}
	Setting $D=\text{diag}(d^{\phi})$, the matrices $A$, $b$ and $C$ can be written as expectations as follows:
	\begin{subequations}
		\begin{align} 
		&A=\Ex_{d^\phi,\mathcal{P},\pi}\hspace{-1mm}\bigg[\hspace{-0.5mm}\bm{x}_{t}\hspace{-0.5mm}\bigg(\hspace{-1mm}\bm{x}_{t}-\hspace{-0.5mm}\gamma(1\hspace{-0.5mm}-\hspace{-0.5mm}\lambda)\hspace{-1mm}\sum_{n=0}^{H-1}\hspace{-1mm}(\hspace{-0.3mm}\gamma\lambda\hspace{-0.3mm})^n\bm{x}_{t+n+1}-(\hspace{-0.2mm}\gamma\lambda\hspace{-0.2mm})^H\hspace{-0.5mm}\bm{x}_{t+H}\hspace{-1mm}\bigg)^{\hspace{-1.4mm}T}\hspace{-0.5mm}\bigg]\label{eq:A_ex}\\
		&b=\Ex_{d^\phi,\mathcal{P},\pi}\hspace{-1mm}\bigg[\bm{x}_{t}\sum_{n=0}^{H-1}(\gamma\lambda)^n\bm{r}_{t+n}\bigg],\hspace{5mm}C=\Ex_{d^\phi}\left[\bm{x}_{t}\bm{x}_{t}^T\right]
		\end{align}
	\end{subequations}
	where, with a little abuse of notation, we defined $\bm{x}_t=\bm{x}_{\bm{s}_t}$ and $\bm{r}_t=\bm{r}^{\pi}(\bm{s}_t)$, where $\bm{s}_t$ is the state visited at time $t$.
\end{lemma}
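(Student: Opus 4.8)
The plan is to reduce all three identities to a single ``one-step shift'' computation that translates powers of the target transition matrix $P^\pi$ into expectations over trajectories whose initial state is drawn from the behavior stationary distribution $d^\phi$ and whose subsequent transitions follow the target policy $\pi$. I would begin with the simplest case $C$: since $D=\text{diag}(d^\phi)$, we have $C=X^TDX=\sum_s d^\phi_s\,x_s x_s^T$, which is exactly $\Ex_{d^\phi}[\bm{x}_t\bm{x}_t^T]$ because $\bm{s}_t$ is distributed according to $d^\phi$ and $\bm{x}_t=\bm{x}_{\bm{s}_t}$.

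The workhorse identity I would establish next is, for every integer $k\geq0$ and $n\geq0$,
\begin{align}\label{eq:shift_identity}
X^TD(P^\pi)^k X=\Ex_{d^\phi,\mathcal{P},\pi}\big[\bm{x}_t\bm{x}_{t+k}^T\big],\qquad X^TD(P^\pi)^n r^\pi=\Ex_{d^\phi,\mathcal{P},\pi}\big[\bm{x}_t\bm{r}_{t+n}\big].
\end{align}
To prove the first, I would expand $X^TD(P^\pi)^k X=\sum_s d^\phi_s\,x_s\big(\sum_{s'}((P^\pi)^k)_{s,s'}x_{s'}\big)^T$ and recognize the inner sum as $\Ex_{\mathcal{P},\pi}[\bm{x}_{t+k}^T\mid\bm{s}_t=s]$, i.e., the conditional expectation of the feature vector $k$ steps into the future given that the chain starts at $s$ and evolves under the target kernel $P^\pi$. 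Averaging over $\bm{s}_t\sim d^\phi$ then yields the claimed joint expectation. The reward identity follows identically after replacing the second feature factor by $r^\pi(s')=\Ex_{\pi,\mathcal{P}}[\bm{r}(s',\bm{a},\bm{s}')]$ and recognizing $\bm{r}_{t+n}=\bm{r}^\pi(\bm{s}_{t+n})$.

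With \eqref{eq:shift_identity} in hand, the remaining two claims are a matter of linear expansion and index bookkeeping. For $A$, I would substitute the explicit form $\rho_1\Gamma_1=(1-\lambda)\gamma P^\pi\sum_{n=0}^{H-1}(\gamma\lambda P^\pi)^n+(\gamma\lambda P^\pi)^H$ into $A=X^TD(I-\rho_1\Gamma_1)X$, distribute $X^TD(\cdot)X$ across the three resulting terms, and apply the shift identity with $k=0$, with $k=n+1$ (noting that $P^\pi(P^\pi)^n=(P^\pi)^{n+1}$), and with $k=H$, respectively. Collecting the three expectations under a single $\Ex_{d^\phi,\mathcal{P},\pi}[\bm{x}_t(\cdot)^T]$ reproduces \eqref{eq:A_ex}. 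For $b$, I would insert $\Gamma_2=\sum_{n=0}^{H-1}(\gamma\lambda)^n(P^\pi)^n$ into $b=X^TD\Gamma_2 r^\pi$ and apply the reward shift identity term by term.

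The main obstacle is conceptual rather than computational: correctly justifying the mixed, off-policy nature of the expectation $\Ex_{d^\phi,\mathcal{P},\pi}$, in which the initial state is governed by the behavior stationary distribution $d^\phi$ while all forward transitions are governed by the target policy $\pi$. Once this conditional-expectation interpretation of $(P^\pi)^k$ is pinned down, the remainder is routine: careful matching of the exponent of $P^\pi$ to the time offset of the future feature or reward, together with linearity of expectation.
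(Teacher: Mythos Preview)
Your proposal is correct and follows essentially the same route as the paper: expand the definitions of $A$, $b$, $C$ in terms of $X^TD(P^\pi)^kX$ and $X^TD(P^\pi)^n r^\pi$, then recognize each such term as an expectation with $\bm{s}_t\sim d^\phi$ and forward transitions under $\pi$. The only cosmetic difference is that you isolate the ``shift identity'' \eqref{eq:shift_identity} as a separate lemma and then apply it term by term, whereas the paper carries out the sum-over-states expansion inline in a single chain of equalities for each of $A$, $b$, $C$; your organization is slightly cleaner but the content is the same.
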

\begin{proof}
	See Appendix \ref{app:expectations}.
\end{proof}
\subsection{Optimization problem}
Since the signal distributions are not known beforehand and we are working with a finite amount of data, say, of size $N$, we need to rely on empirical approximations to estimate the expectations in $\{A,b,C\}$. We thus let $\widehat{A}$, $\widehat{b}$, $\widehat{C}$ and $\widehat{U}$ denote estimates for $A$, $b$, $C$ and $U$ from data and replace them in \eqref{eq:PBE} to define the following empirical optimization problem:
\begin{align}\label{eq:single_empirical_problem}
\min_\theta J_{\text{emp}}(\theta)=\frac{1}{2}\bigl{\|}\widehat{A}\theta-\widehat{b}\bigr{\|}_{\widehat{C}^{-1}}^2+\frac{\eta}{2}\bigl{\|}\theta-\theta_{\text{p}}\bigr{\|}_{\widehat{U}}^2
\end{align}
Note that whether an empirical estimate for $U$ is required depends on the choice for $U$. For instance, if $U=I$ then obviously no estimate is needed. However, if $U=C$ then an empirical estimate is needed, (i.e., $\widehat{U}=\widehat{C}$).

To fully characterize the empirical optimization problem, expressions for the empirical estimates still need to be provided. The following lemma provides the necessary estimates.
\begin{lemma}\label{lemma:estimates}
	For the general off-policy case, the following expressions provide unbiased estimates \textcolor{black}{for $A$, $b$ and $C$:}
	\begin{subequations}\label{eq:AbC}
		\begin{align}
		&\widehat{A}_{n}=x_{n}\bigg(\hspace{-1mm}\rho_{n,0}^{H}x_{n}-\gamma(1-\lambda)\hspace{-1mm}\sum_{h=0}^{H-1}\hspace{-0.5mm}(\gamma\lambda)^h\xi_{n,n+h+1}x_{n+h+1}\nonumber\\
		&\hspace{5mm}-(\gamma\lambda)^H\xi_{n,n+H}x_{n+H}\hspace{-1mm}\bigg)^T,\hspace{5mm}\widehat{A}=\frac{1}{N-H}\hspace{-1.5mm}\sum_{n=1}^{N-H}\hspace{-1mm}\widehat{A}_{n}\label{eq:AbC_1}\\
		&\widehat{b}_{n}=x_{n}\sum_{h=0}^{H-1}(\gamma\lambda)^h\rho_{n,h}^{H}r_{n+h},\hspace{0.5cm}\widehat{b}=\frac{1}{N-H}\sum_{n=1}^{N-H}\widehat{b}_{n}\label{eq:AbC_2}\\
		&\widehat{C}_n=x_nx_n^T,\hspace{5mm}\widehat{C}=\frac{1}{N-H}\sum_{n=1}^{N-H}\widehat{C}_n\label{eq:AbC_3}
		\end{align}
	\end{subequations}
	where
	\begin{align}
	&\rho_{t,n}^H=(1-\lambda)\sum_{h=n}^{H-1}\lambda^{h-n}\xi_{t,t+h+1}+\lambda^{H-n}\xi_{t,t+H}\\
	&\xi_{t,t+h}=\prod_{j=t}^{t+h-1}\hspace{-2mm}\pi(a_j|s_j)/\phi(a_j|s_j)
	\end{align}
\end{lemma}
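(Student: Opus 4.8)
The plan is to prove that each per-sample quantity $\widehat{A}_n$, $\widehat{b}_n$, $\widehat{C}_n$ is unbiased for $A$, $b$, $C$ (understanding the statement to hold when the sampled state $\bm{s}_n$ is drawn from the steady-state distribution $d^\phi$ guaranteed by Assumption 1); unbiasedness of the averages $\widehat{A}$, $\widehat{b}$, $\widehat{C}$ then follows immediately by linearity of expectation. The entire argument rests on a single importance-sampling identity: for any function $g$ of the trajectory $(\bm{s}_t,\bm{a}_t,\ldots,\bm{s}_{t+h})$ generated under the behavior policy, iterated expectations give $\Ex_\phi[\xi_{t,t+h}\,g\mid\bm{s}_t]=\Ex_\pi[g\mid\bm{s}_t]$, since the ratios $\pi(\bm{a}_j|\bm{s}_j)/\phi(\bm{a}_j|\bm{s}_j)$ exactly convert the $\phi$-kernel into the $\pi$-kernel along the path. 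A companion fact, obtained by peeling off the last factor with the tower property, is that any ratio whose horizon extends past the last action on which $g$ depends integrates out to one, because $\Ex_\phi[\pi(\bm{a}_j|\bm{s}_j)/\phi(\bm{a}_j|\bm{s}_j)\mid\bm{s}_j]=1$.

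The matrix $\widehat{C}_n=x_nx_n^T$ is immediate, since $\Ex_{d^\phi}[\bm{x}_n\bm{x}_n^T]=C$. For $\widehat{b}_n$ I would substitute the definition of $\rho_{n,h}^H$ and note that the observed reward $\bm{r}_{n+h}$ depends on the trajectory only through action $\bm{a}_{n+h}$; hence under $\Ex_\phi[\cdot\mid\bm{s}_n]$ every ratio $\xi_{n,n+k+1}$ with $k\ge h$, as well as $\xi_{n,n+H}$, collapses to $\xi_{n,n+h+1}$, whose reweighting of the reward yields exactly $\Ex_\pi[r^\pi(\bm{s}_{n+h})\mid\bm{s}_n]$. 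The scalar coefficients multiplying these now-identical terms sum to $(1-\lambda)\sum_{k=h}^{H-1}\lambda^{k-h}+\lambda^{H-h}=1$ by a geometric series, so $\Ex_\phi[\rho_{n,h}^H\bm{r}_{n+h}\mid\bm{s}_n]=\Ex_\pi[r^\pi(\bm{s}_{n+h})\mid\bm{s}_n]$. Summing against the weights $(\gamma\lambda)^h$ and averaging over $\bm{s}_n\sim d^\phi$ reproduces the expression for $b$ in Lemma \ref{lemma:expectations}.

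For $\widehat{A}_n$ I would handle the three terms separately. The leading term $\rho_{n,0}^H x_n$ is treated exactly as above: since $x_n$ depends on no action, all of its ratios integrate out and the scalar weights again telescope to one, leaving $\Ex_\phi[\rho_{n,0}^H x_nx_n^T\mid\bm{s}_n]=x_nx_n^T$, i.e.\ the $\bm{x}_t\bm{x}_t^T$ term of $A$. For the remaining two terms the feature $x_{n+h+1}$ (resp.\ $x_{n+H}$) depends on the path through action $\bm{a}_{n+h}$ (resp.\ $\bm{a}_{n+H-1}$), so the attached ratio $\xi_{n,n+h+1}$ (resp.\ $\xi_{n,n+H}$) is precisely the horizon needed to turn $\Ex_\phi[\cdot\mid\bm{s}_n]$ into $\Ex_\pi[\bm{x}_{n+h+1}\mid\bm{s}_n]$ (resp.\ $\Ex_\pi[\bm{x}_{n+H}\mid\bm{s}_n]$), matching the two remaining terms of $A$ verbatim.

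The main obstacle is purely the bookkeeping of importance horizons: one must check, factor by factor, exactly which action each reward or feature depends on, and confirm that the attached ratio has the right length — never too short, which would bias the estimate, and that any deliberate excess (as in $\rho_{n,h}^H$) integrates out to one while the leftover $\lambda$-weights sum to unity. Once these matchings are pinned down, the identity $\Ex_\phi[\xi_{t,t+h}g\mid\bm{s}_t]=\Ex_\pi[g\mid\bm{s}_t]$ together with linearity completes the proof, and I expect no analytic difficulty beyond this careful accounting.
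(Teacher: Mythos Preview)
Your proposal is correct and takes essentially the same approach as the paper: both rely on the importance-sampling identity $\Ex_\phi[\xi_{t,t+h}\,g\mid\bm{s}_t]=\Ex_\pi[g\mid\bm{s}_t]$ and the telescoping of the $\lambda$-weights in $\rho_{t,n}^H$ to one. The only cosmetic difference is direction: the paper starts from the expectation forms of $A$ and $b$ in Lemma~\ref{lemma:expectations}, rearranges them into a $(1-\lambda)\sum\lambda^h(\cdot)+\lambda^H(\cdot)$ decomposition, and then inserts the importance weights to read off the estimator, whereas you start from $\widehat{A}_n,\widehat{b}_n$ and compute their $\phi$-expectations directly, invoking the ``excess horizon integrates to one'' fact to collapse $\rho_{t,n}^H$; these are the same algebra run forward versus backward.
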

\begin{proof}
	See Appendix \ref{app:empirical}.
\end{proof}
Note that $\xi_{t,t+h}$ is the importance sample weight corresponding to the trajectory that started at some state $s_t$ and took $h$ steps before arriving at some other state $s_{t+h}$. Note that even if we have $N$ transitions, we can only use $N-H$ training samples because every estimate of $\widehat{x}_n$ and $\widehat{b}_n$ looks $H$ steps into the future.

\section{Distributed Policy Evaluation}
\label{Sec: Distributed}
In this section we present the distributed framework and use \eqref{eq:single_empirical_problem} to derive \textit{Fast Diffusion for Policy Evaluation} (FDPE). The purpose of this algorithm is to deal with situations where data is dispersed among a number of nodes and the goal is to solve the policy evaluation problem in a fully decentralized manner.

\subsection{Distributed Setting}
We consider a situation in which there are $K$ agents that wish to evaluate a target policy $\pi(a|s)$ for a common MDP. Each agent has $N$ samples, which are collected following its own behavior policy $\phi_k$ (with steady state distribution matrix $D^{\phi_k}$). Note that the behavior policies can be potentially different from each other. The goal for all agents is to estimate the value function of the target policy $\pi(a|s)$ leveraging all the data from all other agents in a fully decentralized manner.

To do this, they form a network in which each agent can only communicate with other agents in its immediate neighborhood. The network is represented by a graph in which the nodes and edges represent the agents and communication links, respectively. The topology of the graph is defined by a combination matrix $L$ whose $kn$-th entry (i.e., $\ell_{kn}$) is a scalar with which agent $n$ scales information arriving from agent $k$. If agent $k$ is not in the neighborhood of agent $n$, then $\ell_{kn}=0$.
\begin{assumption}\label{assumption_L}
	We assume that the network is strongly connected. This implies that there is at least one path from any node to any other node and that at least one node has a self-loop (i.e. that at least one agent uses its own information). We further assume that the combination matrix $L$ is symmetric and doubly-stochastic.
\end{assumption}
\begin{remark}\label{remark:L}
	In view of the Perron-Frobenius Theorem, assumption \ref{assumption_L} implies that the matrix L can be diagonalized as $L=H\Lambda H^T$, where one element of $\Lambda$ is equal to 1 and its corresponding eigenvector is given by $\one/\sqrt{K}$ (where $\one$ is the all ones vector). The remaining eigenvalues of $L$ lie strictly inside the unit circle.
\end{remark}
A combination matrix satisfying assumption 2 can be generated using the Laplacian rule, the maximum-degree rule, or the Metropolis rule (see Table 14.1 in \cite{sayed2014adaptation}).

\subsection{Algorithm Derivation}

Mathematically, the goal for all agents is to minimize the following aggregate cost:
\begin{align} \label{eq:dist_PBE}
S_{\text{M}}(\theta)\hspace{-0.5mm}&=\hspace{-0.5mm}\sum_{k=1}^{K}\tau_k\bigg(\hspace{-0.5mm}\frac{1}{2}\Bigl{\|}\Pi\big(\Gamma_{\hspace{-0.5mm}2}r^\pi\hspace{-0.5mm}+\rho_1\Gamma_{\hspace{-0.5mm}1}X\theta\big)-X\theta\Bigr{\|}_{D^{\phi_k}}^2\hspace{-2mm}+\frac{\eta}{2}\bigl{\|}\theta-\theta_{\text{p}}\bigr{\|}_{U_k}^2\hspace{-0.5mm}\bigg)
\end{align}
where the purpose of the nonnegative coefficients $\tau_k$ is to scale the costs of the different agents; this is useful since the costs of agents whose behavior policy is closer to the target policy might be assigned higher weights. For \eqref{eq:dist_PBE}, we define the matrices $D$ and $U$ to be:
\begin{align}
D=\sum_{k=1}^{K}\tau_kD^{\phi_k}\hspace{1cm}U=\sum_{k=1}^{K}\tau_kU_k
\end{align}
so that equation \eqref{eq:dist_PBE} becomes:
\begin{align} \label{eq:dist_PBE2}
S_{\text{M}}(\theta)&=\frac{1}{2}\Bigl{\|}\Pi\big(\Gamma_2r^\pi+\rho_1\Gamma_1X\theta\big)-X\theta\Bigr{\|}_{D}^2+\frac{\eta}{2}\bigl{\|}\theta-\theta_{\text{p}}\bigr{\|}_{U}^2
\end{align}
Note that \eqref{eq:dist_PBE2} has the same form as \eqref{eq:PBE}; the only difference is that in \eqref{eq:dist_PBE2} the matrices $D$ and $U$ are defined by linear combinations of the individual matrices $D^{\phi_k}$ and $U_k$, respectively. Matrices $D^{\phi_k}$ are therefore not required to be positive definite, only $D$ is required to be a positive definite diagonal matrix. Since the matrices $D^{\phi_k}$ are given by the steady-state probabilities of the behavior policies, this implies that each agent does not need to explore the entire state-space by itself, but rather all agents collectively need to explore the state-space. This is one of the advantages of our multi-agent setting. In practice, this could be useful since the agents can divide the entire state-space into sections, each of which can be explored by a different agent in parallel.
\begin{assumption}\label{assumption:behavior_policies}
	We assume that the behavior policies are such that the aggregated steady state probabilities \big(i.e., $\sum_{k=1}^{K}\tau_kD^{\phi_k}$\big) are strictly positive for every state.
\end{assumption}
The empirical problem for the multi-agent case is then given by:
\begin{align} \label{eq:em_PBE2}
&\min_\theta\hspace{1mm} J_{\text{emp}}(\theta)=\min_\theta\hspace{1mm}\frac{1}{2}\bigl{\|}\widehat{A}\theta-\widehat{b}\bigr{\|}_{\widehat{C}^{-1}}^2+\frac{\eta}{2}\bigl{\|}\theta-\theta_{\text{p}}\bigr{\|}_{\widehat{U}}^2
\end{align}
\begin{subequations}\label{eq:MARL_AbC_2}
	\begin{align}
	&\widehat{A}_k=\hspace{-1mm}\sum_{n=1}^{N\hspace{-0.5mm}-\hspace{-0.5mm}H}\frac{\widehat{A}_{k,n}}{N-H},\hspace{2mm}\widehat{b}_k=\hspace{-1mm}\sum_{n=1}^{N\hspace{-0.5mm}-\hspace{-0.5mm}H}\frac{\widehat{b}_{k,n}}{N-H},\hspace{2mm}\widehat{C}_k=\hspace{-1mm}\sum_{n=1}^{N\hspace{-0.5mm}-\hspace{-0.5mm}H}\frac{\widehat{C}_{k,n}}{N-H}\\
	&\widehat{A}=\sum_{k=1}^{K}\tau_k\widehat{A}_k,\hspace{2mm}\widehat{b}=\sum_{k=1}^{K}\tau_k\widehat{b}_k,\hspace{2mm}\widehat{C}=\sum_{k=1}^{K}\tau_k\widehat{C}_k
	\end{align}
\end{subequations}
\textcolor{black}{\begin{assumption}\label{assumption:data}
	We assume that $\widehat{C}$ and $\widehat{A}$ are positive definite and invertible, respectively.
\end{assumption}
It is easy to show that Assumption \ref{assumption:data} is equivalent to assuming that each state has been visited at least once while collecting data. Intuitively, this assumption is necessary for any policy evaluation algorithm since one cannot expect to estimate the value function of states that have never been visited.}
Since we are interested in deriving a distributed algorithm we define local copies $\{\theta_k\}$ and rewrite \eqref{eq:em_PBE2} equivalently in the form:
\begin{align} \label{eq:em_PBE3}
&\min_\theta\frac{1}{2}\biggl{\|}\sum_{k=1}^{K}\tau_k\big(\widehat{A}_k\theta_k-\widehat{b}_k\big)\biggr{\|}_{\left(\sum_{k=1}^{K}\tau_k\widehat{C}_k\right)^{-1}}^2\hspace{-0.7mm}+\sum_{k=1}^{K}\tau_k\frac{\eta}{2}\bigl{\|}\theta_k-\theta_{\text{p}}\bigr{\|}_{\widehat{U}_k}^2\nonumber\\
&\text{s.t}\hspace{1cm}\theta_1=\theta_2=\cdots=\theta_K
\end{align}
The above formulation although correct is not useful because the gradient with respect to any individual $\theta_k$ depends on all the data from all agents and we want to derive an algorithm that only relies on local data. To circumvent this inconvenience, we reformulate \eqref{eq:em_PBE2} into an equivalent problem. To this end, we note that every quadratic function can be expressed in terms of its conjugate function as:
\begin{align}
	\frac{1}{2}\|A\theta-b\|_{C^{-1}}^2=\max_{\omega}\left(-(A\theta-b)^T\omega-\frac{1}{2}\|\omega\|_{C}^{2}\right)
\end{align}
Therefore, expression \eqref{eq:em_PBE2} can equivalently be rewritten as:
\begin{align}\label{eq:multi_L1}
&\min_{\theta}\hspace{0.5mm}\max_{\omega}\sum_{k=1}^{K}\tau_k\hspace{-0.1mm}\bigg(\hspace{-0.2mm}\frac{\eta}{2}\|\theta-\theta_{\text{p}}\|_{\widehat{U}_{k}}^2\hspace{-0.8mm}-\omega^T(\widehat{A}_{k}\theta-\widehat{b}_{k})\hspace{-0.2mm}-\hspace{-0.2mm}\frac{1}{2}\|\omega\|_{\widehat{C}_{k}}^2\hspace{-0.2mm}\bigg)
\end{align}
\begin{remark}
	The saddle-point of \eqref{eq:multi_L1} is given by
	\begin{align}\label{eq:saddle_point}
	\ba{c}\widehat{\theta}^o\\\widehat{\omega}^o\hspace{-1mm}\ea=\Bigg[\hspace{-2mm}\begin{array}{c}\hspace{-0.7mm}\left(\hspace{-0.7mm}\widehat{A}^T\widehat{C}^{-1}\hspace{-0.4mm}\widehat{A}\hspace{-0.3mm}+\hspace{-0.3mm}\eta\widehat{U}\right)^{\hspace{-0.9mm}-\hspace{-0.5mm}1}\hspace{-1mm}\left(\eta\widehat{U}\theta_{\text{p}}\hspace{-0.5mm}+\hspace{-0.5mm}\widehat{A}^T\widehat{C}^{-1}\widehat{b}\right)\\
	\widehat{C}^{-1}\widehat{b}-\widehat{C}^{-1}\widehat{A}\hspace{0.5mm}\widehat{\theta}^o
	\end{array}\hspace{-2mm}\Bigg]
	\end{align}
\end{remark}
\begin{proof}
	$\widehat{\theta}^o$ and $\widehat{\omega}^o$ are obtained by equating the gradient of \eqref{eq:multi_L1} to zero and solving for $\theta$ and $\omega$.
\end{proof}
Defining local copies for the primal and dual variables we can write:
\begin{align}\label{eq:multi_L}
&\min_{\theta}\hspace{0.5mm}\max_{\omega}\hspace{1.5mm}\sum_{k=1}^{K}\tau_k\left(\frac{\eta}{2}\|\theta_k-\theta_{\text{p}}\|_{\widehat{U}_{k}}^2\hspace{-1mm}-\omega_k^T(\widehat{A}_{k}\theta_k-\widehat{b}_{k})\hspace{-0.4mm}-\hspace{-0.4mm}\frac{1}{2}\|\omega_k\|_{\widehat{C}_{k}}^2\right)\nonumber\\
&\text{s.t}\hspace{1cm}\theta_1=\theta_2=\cdots=\theta_K\hspace{1cm}\omega_1=\omega_2=\cdots=\omega_K
\end{align}
\textcolor{black}{Now to derive a learning algorithm we rewrite \eqref{eq:multi_L} in an equivalent more convenient manner (the following steps can be seen as an extension to the primal-dual case of similar steps used in \cite{yuan2017exact1}). We start by defining the following network-wide magnitudes:
\begin{align}\label{eq:defs}
	&\check{\theta}=\textrm{col}\{\theta_k\}_{k=1}^K,\hspace{3mm}\check{\omega}=\textrm{col}\{\omega_k\}_{k=1}^K, \hspace{3mm}\check{b}=\textrm{col}\{\tau_k\widehat{b}_k\}_{k=1}^K\nonumber\\
	&\check{A}=\textrm{diag}\{\tau_k\widehat{A}_k\}_{k=1}^K,\hspace{3mm}\check{C}=\textrm{diag}\{\tau_k\widehat{C}_k\}_{k=1}^K,\hspace{3mm}\check{\theta}_{\text{p}}=\one\otimes\theta_{\text{p}}\nonumber\\
	&\check{L}=L\otimes I_M,\hspace{3mm}V=H(I_K\hspace{-0.5mm}-\hspace{-0.5mm}\Lambda)^{1/2}H^T\hspace{-1mm}/\sqrt{2},\hspace{3mm}\check{V}=V\otimes I_M
\end{align}
We remind the reader that $H$ and $\Lambda$ were defined in Remark \ref{remark:L}. We further clarify that $(I_K-\Lambda)^{\frac{1}{2}}$ is the entrywise square root of the positive definite diagonal matrix $I_K-\Lambda$. The notation $\textrm{col}\{y\}_{k=1}^K$ refers to stacking vectors $y_k$ from $1$ to $K$ into one larger vector. Moreover, $\textrm{diag}\{Y_k\}_{k=1}^K$ is a block diagonal matrix with matrices $Y_k$ as its diagonal elements.
\begin{remark}\label{remark:constraints}
	Due to Remark \ref{remark:L}, it follows that the bases of the null-spaces of $V$ and $\check{V}$ are given by $\{\one\}$ and $\{\one\otimes I_M\}$, respectively. Therefore, we get:
	\begin{subequations}\label{eq:constraints}
		\begin{align}
		&\theta_1\hspace{-1mm}=\theta_2\hspace{-0.5mm}=\hspace{-0.5mm}\cdots\hspace{-0.5mm}=\hspace{-0.5mm}\theta_K\hspace{-1mm}\iff\hspace{-1mm}\check{V}\check{\theta}\hspace{-0.5mm}=\hspace{-0.5mm}0\\
		&\omega_1\hspace{-1mm}=\omega_2\hspace{-0.5mm}=\hspace{-0.5mm}\cdots\hspace{-0.5mm}=\hspace{-0.5mm}\omega_K\hspace{-1mm}\iff\hspace{-1mm}\check{V}\check{\omega}\hspace{-0.5mm}=\hspace{-0.5mm}0
		\end{align}
	\end{subequations}
\end{remark}
Using \eqref{eq:constraints} we transform \eqref{eq:multi_L} into the following equivalent formulation:
\begin{align}\label{eq:multi_L2}
&\min_{\theta}\hspace{0.5mm}\max_{\omega}\hspace{1.5mm}\underbrace{\frac{\eta}{2}\|\check{\theta}-\check{\theta}_{\text{p}}\|_{\check{U}}^2-\check{\omega}^T(\check{A}\check{\theta}-\check{b})-\frac{1}{2}\|\omega\|_{\check{C}}^2}_{=F(\check{\theta},\check{\omega})}\nonumber\\
&\text{s.t}\hspace{1cm}\check{V}\check{\theta}=0\hspace{1cm}\check{V}\check{\omega}=0
\end{align}
We next introduce the constraints into the cost by using Lagrangian and extended Lagrangian terms as follows:
\begin{align}\label{eq:multi_L3}
\min_{\check{\theta},y^\omega}\hspace{0.5mm}\max_{\check{\omega},y^\theta}\hspace{1.5mm}&F(\check{\theta},\check{\omega})+{y^\theta}^T\check{V}\check{\theta}-{y^\omega}^T\check{V}\check{\omega}+\frac{\|\check{V}\check{\theta}\|^2}{2}-\frac{\|\check{V}\check{\omega}\|^2}{2}
\end{align}
where $y^\omega$ and $y^\theta$ are the dual variables of $\check{\omega}$ and $\check{\theta}$, respectively. Now we perform incremental gradient ascent on $\check{\omega}$ and gradient descent on $y^\omega$ to obtain the following updates:
\begin{subequations}\label{eq:in_ascent}
	\begin{align}
	\psi_{i+1}^\omega&=\check{\omega}_{i}+\mu_{\omega}\nabla_{\check{\omega}}F(\check{\theta}_i,\check{\omega}_i)\\
	\phi_{i+1}^\omega&=\psi_{i+1}^\omega-\mu_{\omega,2}\check{V}^2\psi_{i+1}^\omega\stackrel{\mu_{\omega,2}=1}{=}(I+\check{L})/2\psi_{i+1}^\omega\label{eq:upd_2}\\
	\check{\omega}_{i+1}&=\phi_{i+1}^\omega-\mu_{\omega,3}\check{V}y^\omega\stackrel{\mu_{\omega,3}=1}{=}\phi_{i+1}^\omega-\check{V}y_i^\omega\label{eq:upd_3}\\
	y_{i+1}^\omega&=y_i^\omega+\mu_{\omega,4}\check{V}\check{\omega}_{i+1}\stackrel{\mu_{\omega,4}=1}{=}y_i^\omega+\check{V}\check{\omega}_{i+1}
	\end{align}
\end{subequations}
where in \eqref{eq:upd_2} we used $\check{V}^2=I-\check{L}$. Combining \eqref{eq:upd_2} and \eqref{eq:upd_3} we get:
\begin{subequations}\label{eq:1_order_form}
	\begin{align}
	\psi_{i+1}^\omega&=\check{\omega}_{i}+\mu_{\omega}\nabla_{\check{\omega}}F(\check{\theta}_i,\check{\omega}_i)\\
	\check{\omega}_{i+1}&=(I+\check{L})\psi_{i+1}^\omega/2-\check{V}y_i^\omega\label{eq:step1}\\
	y_{i+1}^\omega&=y_{i}^\omega+\check{V}\check{\omega}_{i+1}\label{eq:step2}
	\end{align}
\end{subequations}
Using \eqref{eq:step1} to calculate $\check{\omega}_{i+1}-\check{\omega}_{i}$ we get:
\begin{align}\label{eq:in_ascent_3}
\check{\omega}_{i+1}-\check{\omega}_{i}&=(I+\check{L})\left(\psi_{i+1}^\omega-\psi_{i}^\omega\right)/2-\check{V}\left(y_i^\omega-y_{i-1}^\omega\right)
\end{align}
Substituting \eqref{eq:step2} into \eqref{eq:in_ascent_3} we get:
\begin{align}\label{eq:in_ascent_4}
\check{\omega}_{i+1}&=(I+\check{L})\left(\psi_{i+1}^\omega+\check{\omega}_{i}-\psi_{i}^\omega\right)/2
\end{align}
Which we rewrite as:
\begin{subequations}\label{eq:in_ascent_5}
	\begin{align}
	\psi_{i+1}^\omega&=\check{\omega}_{i}+\mu_{\omega}\nabla_{\check{\omega}}F(\check{\theta}_i,\check{\omega}_i)\\
	\phi_{i+1}^\omega&=\psi_{i+1}^\omega+\check{\omega}_{i}-\psi_{i}^\omega\\
	\check{\omega}_{i+1}&=(I+\check{L})\phi_{i+1}^\omega/2
	\end{align}
\end{subequations}
Notice that steps \eqref{eq:in_ascent}-\eqref{eq:in_ascent_5} allow us to get rid of $y_{i}^\omega$. Performing incremental gradient descent on $\check{\theta}$ and gradient ascent on $y^\theta$ and following equivalently \eqref{eq:in_ascent}-\eqref{eq:in_ascent_5} we get:
\begin{subequations}\label{eq:in_ascent_theta}
	\begin{align}
	\psi_{i+1}^\theta&=\check{\theta}_{i}-\mu_{\theta}\nabla_{\check{\theta}}F(\check{\theta}_i,\check{\omega}_i)\\
	\phi_{i+1}^\theta&=\psi_{i+1}^\theta+\check{\theta}_{i}-\psi_{i}^\theta\\
	\check{\theta}_{i+1}&=(I+\check{L})\phi_{i+1}^\theta/2
	\end{align}
\end{subequations}
Combining \eqref{eq:in_ascent_5} and \eqref{eq:in_ascent_theta} and defining $\psi=\text{col}\{\psi^\omega,\psi^\theta\}$ (and similarly for $\phi$) we arrive at Algorithm 1, which is a fully distributed algorithm.}
\textcolor{black}{\begin{theorem}\label{theorem_1_linear_convergence}
	If Assumption \ref{assumption:data} is satisfied and the step-sizes $\mu_\omega$ and $\mu_\theta$ are small enough while satisfying the following inequality:
	\begin{align}\label{eq:condition}
	\frac{\mu_\omega}{\mu_\theta}>\eta\frac{\lambda_{\text{max}}(\widehat{U})}{\lambda_{\text{max}}(\widehat{C})}+2\sqrt{\frac{\mu_\omega}{\mu_\theta}\frac{\lambda_{\text{max}}(\widehat{A}\widehat{C}^{-1}\widehat{A}^T)}{\lambda_{\text{max}}(\widehat{C})}}
	\end{align}
	then the iterates $\theta_{k,i}$ and $\omega_{k,i}$ generated by Algorithm 1 converge linearly to \eqref{eq:saddle_point}.
\end{theorem}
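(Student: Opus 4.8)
The plan is to exploit the fact that the empirical cost $J_{\text{emp}}$ is quadratic, so that $F(\check\theta,\check\omega)$ in \eqref{eq:multi_L2} is quadratic and its gradients
\[
\nabla_{\check\theta}F=\eta\check U(\check\theta-\check\theta_{\text p})-\check A^T\check\omega,\qquad \nabla_{\check\omega}F=-(\check A\check\theta-\check b)-\check C\check\omega
\]
are \emph{affine} in $(\check\theta,\check\omega)$. First I would reintroduce the dual variables $y^\omega,y^\theta$ that were eliminated between \eqref{eq:1_order_form} and \eqref{eq:in_ascent_theta}, and collect all iterates into a single network-wide vector $z_i=\mathrm{col}\{\check\omega_i,y_i^\omega,\check\theta_i,y_i^\theta\}$. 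Because the gradients are affine, the combined recursion \eqref{eq:1_order_form}--\eqref{eq:in_ascent_theta} is an affine map $z_{i+1}=\mathcal B z_i+\text{const}$. I would then verify that the saddle point \eqref{eq:saddle_point}, replicated across agents ($\check\theta^\star=\one\otimes\widehat\theta^o$, $\check\omega^\star=\one\otimes\widehat\omega^o$) and augmented with the dual variables determined by the KKT conditions of \eqref{eq:multi_L3}, is a fixed point of this map. Subtracting the fixed-point relation yields a homogeneous linear time-invariant error recursion $\tilde z_{i+1}=\mathcal B\tilde z_i$, so that establishing linear convergence reduces to proving $\rho(\mathcal B)<1$ on the subspace in which the error evolves.

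Second, I would block-diagonalize the recursion across the network by applying the orthogonal transformation $H^T\otimes I_M$ induced by the eigendecomposition $L=H\Lambda H^T$ of Remark \ref{remark:L}. This decouples $\mathcal B$ into $K$ independent modes, one per eigenvalue of $L$. The consensus mode (eigenvalue $1$, eigenvector $\one/\sqrt K$) lies in the null space of $\check V$, so the extended-Lagrangian and $\check V y$ terms vanish there and the recursion collapses to the \emph{centralized} primal--dual gradient descent--ascent iteration on the aggregate saddle problem \eqref{eq:multi_L1}, whose $2M\times 2M$ iteration matrix has the schematic block form
\[
\begin{bmatrix} I-\mu_\omega\widehat C & -\mu_\omega\widehat A\\[2pt] \mu_\theta\widehat A^T & I-\mu_\theta\eta\widehat U\end{bmatrix}.
\]
For each of the remaining $K-1$ disagreement modes (eigenvalues strictly inside the unit circle by Remark \ref{remark:L}), the combination step $(I+\check L)/2$ contributes a strict contraction factor $(1+\lambda_j)/2<1$ while $\check V$ is invertible on the orthogonal complement of $\one$, which forces the consensus error to decay geometrically.

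Third, I would bound the spectral radius mode by mode. For the disagreement modes a perturbation argument suffices: at $\mu_\theta=\mu_\omega=0$ the mode matrix is a strict contraction, so by continuity of eigenvalues its spectral radius stays below one for step-sizes small enough, which is the role of the ``small enough'' hypothesis. The delicate case is the consensus mode, whose non-symmetric saddle matrix need not have real spectrum. Writing out its characteristic polynomial (or, equivalently, seeking a weighted quadratic Lyapunov metric adapted to the primal--dual coupling) one finds that all eigenvalues fall strictly inside the unit circle precisely when the dual ascent is made fast enough relative to the primal descent; quantifying this requirement produces exactly the asymmetric ratio condition \eqref{eq:condition}, in which the term $\eta\lambda_{\max}(\widehat U)/\lambda_{\max}(\widehat C)$ accounts for the primal regularization and $\lambda_{\max}(\widehat A\widehat C^{-1}\widehat A^T)/\lambda_{\max}(\widehat C)$ for the bilinear coupling. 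Assumption \ref{assumption:data} guarantees that $\widehat C$ is positive definite and $\widehat A$ invertible, so these quantities are well defined and the threshold is finite; combining the two regimes gives $\rho(\mathcal B)<1$ and hence linear convergence of $\theta_{k,i},\omega_{k,i}$ to \eqref{eq:saddle_point}.

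The main obstacle I anticipate is the consensus-mode analysis: because the saddle-point operator is monotone but \emph{not} the gradient of a convex function, its iteration matrix is non-symmetric with a possibly complex spectrum, so one cannot simply invoke strong convexity or a Euclidean contraction. Certifying that every eigenvalue lies inside the unit disk, and doing so through an explicit and interpretable step-size condition rather than an implicit ``sufficiently small'' statement, requires either a careful root-location analysis of the characteristic polynomial or the construction of a non-Euclidean energy function that simultaneously respects the primal--dual coupling and the network disagreement dynamics. Ensuring that a single step-size regime works jointly for the consensus mode and for every disagreement mode is the part that ties the argument together and, I expect, demands the most care.
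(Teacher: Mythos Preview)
Your high-level architecture matches the paper's almost exactly: cast Algorithm~1 as an affine recursion (the paper retains the dual variable $\mathcal Y_i$ in \eqref{eq:first_order_network_recursion} rather than eliminating it), verify that the replicated saddle point is the unique fixed point (Lemma~\ref{lemma:optimality}), obtain an LTI error system, and separate consensus from disagreement via the eigenstructure of $L$ (Lemma~\ref{lemma:coord_transform}). For the disagreement modes the paper also relies on the strict contraction $\|\mathcal D_1\|^2=\lambda_2(\bar L)<1$ plus $O(\mu)$ perturbations, just as your perturbation argument would.

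Where the paper departs from your plan is the consensus-mode analysis, and this is exactly where condition \eqref{eq:condition} arises---not as a ``spectral radius less than one'' requirement but as a \emph{real diagonalizability} condition. The paper first rescales the dual variable, $\omega\mapsto\omega/\sqrt{\upsilon}$ with $\upsilon=\mu_\omega/\mu_\theta$, so the consensus block becomes $I-\mu K^{-1}G$ with
\[
G=\sum_k\tau_k\begin{bmatrix}\eta\widehat U_k & -\sqrt{\upsilon}\,\widehat A_k^T\\[2pt]\sqrt{\upsilon}\,\widehat A_k & \upsilon\,\widehat C_k\end{bmatrix},
\]
and then invokes Theorem~2.1 of \cite{shen2008condition}: under Assumption~\ref{assumption:data} and the ratio condition, $G$ is diagonalizable with strictly positive real eigenvalues, $G=Z\Lambda_GZ^{-1}$. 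Once the spectrum of $G$ is real and positive, $I-\mu K^{-1}\Lambda_G$ is trivially contractive for small $\mu$; the paper never touches a characteristic polynomial or constructs a Lyapunov metric by hand. A second technical difference is that the paper does not bound the spectral radius of the full error matrix directly: Lemma~\ref{lemma:squared_recursion} uses Jensen's inequality to reduce everything to a $2\times 2$ entrywise norm recursion and then bounds the $1$-norm of that $2\times 2$ matrix. Your direct spectral route is a legitimate alternative, but be aware that without the $\sqrt{\upsilon}$ rescaling the saddle matrix will generically have complex eigenvalues, and it is not obvious that a root-location analysis of its characteristic polynomial would deliver a condition as clean as \eqref{eq:condition}; the rescaling plus the cited diagonalizability criterion is what makes that condition fall out.
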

Note that the above condition can always be satisfied by making $\mu_\omega/\mu_\theta$ sufficiently large.
\begin{algorithm}[!t]\label{alg:algorithm1}
	\textcolor{black}{\small\textbf{Algorithm 1: Processing steps at node $k$}\vspace{-3mm}\\
		\rule{8.65cm}{0.5pt}\\
		\textbf{Initialize}: $\theta_{k,0}$ and $\omega_{k,0}$ arbitrarily and let $\psi_{k,0}=[{\theta_{k,0}}^T,{\omega_{k,0}}^T]^T$.\\
		\textbf{For} $i=0,1,2\ldots$:
		\begin{subequations}\label{eq:updates_1}
			\begin{align}
			\hspace{-6mm}&\psi_{k,i+1}=\ba{c}
			\omega_{k,i}+\tau_k\mu_\omega\big(\widehat{b}_k-\widehat{A}_k\theta_{k,i}-\widehat{C}_k\omega_{k,i}\big)\\
			\theta_{k,i}-\tau_k\mu_\theta\big(\eta \widehat{U}_k(\theta_{k,i}-\theta_{\text{p}})-\widehat{A}_k\omega_{k,i}\big)
			\ea\\[-2.5pt]
			&\phi_{k,i+1}=\psi_{k,i+1}+\ba{c}
			\omega_{k,i}\\
			\theta_{k,i}\ea-\psi_{k,i}\\[-2.5pt]
			&\ba{c}
			\omega_{k,i+1}\\
			\theta_{k,i+1}
			\ea=\Big(\phi_{k,i+1}+\sum_{n\in\mathcal{N}_k}l_{nk}\phi_{n,i+1}\Big)/2
			\end{align}\vspace{-2mm}
	\end{subequations}}
\end{algorithm}
\begin{proof}
	We start by setting $\mu=\mu_\theta$ and introducing the following definitions:
	\begin{subequations}
		\begin{align}
		&\zeta_{k,i}\define\ba{c}
		\theta_{k,i}\\
		\frac{1}{\sqrt{\upsilon}}\omega_{k,i}
		\ea,\hspace{5mm}\zeta_{i}\define\text{col}\{\zeta_{k,i}\}_{k=1}^K,\hspace{5mm}\upsilon\define\frac{\mu_\omega}{\mu_\theta}\label{eq:z}\\
		&G_{k}\define\tau_k\ba{cc}
		\eta\widehat{U}_{k} & -\sqrt{\upsilon}\widehat{A}_{k}^T\\
		\sqrt{\upsilon}\widehat{A}_{k} & \upsilon\widehat{C}_{k}\ea\hspace{5mm}\mathcal{G}\define\text{diag}\{G_k\}_{k=1}^K\\
		&G\define\sum_{k=1}^{K}G_k,\hspace{5mm}p_{k}\define \tau_k\bigg[\hspace{-1mm}\begin{array}{c}
		\eta\widehat{U}_{k}\theta_{\text{p}}\\
		\sqrt{\upsilon}b_{k}\end{array}\hspace{-1mm}\bigg],\hspace{5mm}p\define\text{col}\{p_k\}_{k=1}^K\\
		&\bar{L}\define(L+I_K)/2,\hspace{5mm}\bar{\mathcal{L}}\define\bar{L}\otimes I_{2M},\hspace{5mm}\mathcal{V}\define V\otimes I_{2M}
		\end{align}
	\end{subequations}
	With these definitions, we write the update equations of Algorithm 1	in the form of equations \eqref{eq:1_order_form} (for both the primal and dual) in the following first order network-wide recursion:
	\begin{subequations}\label{eq:first_order_network_recursion}
		\begin{align}
		\zeta_{i+1}&=\bar{\mathcal{L}}\left(\bm\zeta_{i}-\mu\left(\mathcal{G}\bm\zeta_{i}-p\right)\right)-\mathcal{V}\mathcal{Y}_i\\
		\mathcal{Y}_{i+1}&=\mathcal{Y}_i+\mathcal{V}\bm\zeta_{i+1}
		\end{align}
	\end{subequations}
	for which $\mathcal{Y}_0=0$. Note that the variable $\mathcal{Y}_{i}$ is a network-wide variable that includes both $y^\omega$ and $y^\theta$.
	\begin{lemma}\label{lemma:optimality}
		Recursion \eqref{eq:first_order_network_recursion} has a unique fixed point $(\zeta^o,\mathcal{Y}^o)$, where $\mathcal{Y}^o$ lies in the range space of $\mathcal{V}$. This fixed point satisfies the following conditions:
		\begin{subequations}\label{eq:fixed_point}
			\begin{align}
			\mu\bar{\mathcal{L}}\left(\mathcal{G}\zeta^o-p\right)+\mathcal{V}\mathcal{Y}^o&=0\label{eq:fixed_point_1}\\
			\mathcal{V}\zeta^o&=0\label{eq:fixed_point_2}
			\end{align}
		\end{subequations}
		It further holds that $\zeta^o=\one_K\otimes[\widehat{\theta}^{oT},\widehat{\omega}^{oT}]^T$, where $\widehat{\theta}^o$ and $\widehat{\omega}^o$ are given by \eqref{eq:saddle_point}.\hfill\qed
	\end{lemma}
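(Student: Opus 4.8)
The plan is to characterize the fixed point directly from the two update equations in \eqref{eq:first_order_network_recursion} and then verify the three claims (the consensus form of $\zeta^o$, the location of $\mathcal{Y}^o$, and uniqueness) one at a time. First I would set $\zeta_{i+1}=\zeta_i=\zeta^o$ and $\mathcal{Y}_{i+1}=\mathcal{Y}_i=\mathcal{Y}^o$. The second equation immediately forces $\mathcal{V}\zeta^o=0$, which is \eqref{eq:fixed_point_2}. Using the identity $\mathcal{V}^2=I-\bar{\mathcal{L}}$ (which follows from $V^2=(I_K-L)/2$ together with $\mathcal{V}=V\otimes I_{2M}$ and $\bar{\mathcal{L}}=\frac{1}{2}(L+I_K)\otimes I_{2M}$), the condition $\mathcal{V}\zeta^o=0$ gives $\bar{\mathcal{L}}\zeta^o=\zeta^o$; substituting this into the first equation collapses it to \eqref{eq:fixed_point_1}. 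I would also note that, since $\mathcal{Y}_0=0$ and every increment in the recursion has the form $\mathcal{V}\zeta_{i+1}$, the iterate $\mathcal{Y}_i$ remains in the range space of $\mathcal{V}$ for all $i$; this is precisely why the fixed point is sought with $\mathcal{Y}^o$ in that range space.

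Next I would exploit the orthogonal splitting of $\mathbb{R}^{2MK}$ into the consensus subspace (the null space of $\mathcal{V}$) and the range space of $\mathcal{V}$. By Remark \ref{remark:constraints} adapted to blocks of size $2M$, the null space of $\mathcal{V}$ is $\{\one_K\otimes v\}$, so \eqref{eq:fixed_point_2} already forces $\zeta^o=\one_K\otimes z$ for some $z\in\mathbb{R}^{2M}$, establishing the consensus structure. To pin down $z$, I would apply the projector $\mathcal{P}=\frac{1}{K}\one_K\one_K^T\otimes I_{2M}$ onto the consensus subspace to \eqref{eq:fixed_point_1}. Because $L$ is symmetric doubly-stochastic we have $\mathcal{P}\bar{\mathcal{L}}=\mathcal{P}$ and $\mathcal{P}\mathcal{V}=0$, so the dual term $\mathcal{V}\mathcal{Y}^o$ drops out and the equation reduces to $\mathcal{P}(\mathcal{G}\zeta^o-p)=0$, i.e.\ $Gz=\sum_{k}p_k$ with $G=\sum_k G_k$. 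I would then show $G$ is invertible by a Schur-complement argument on its $2\times2$ block form: the $(2,2)$ block $\upsilon\widehat{C}$ is positive definite by Assumption \ref{assumption:data}, and the associated Schur complement equals $\eta\widehat{U}+\widehat{A}^T\widehat{C}^{-1}\widehat{A}$, which is positive definite since $\widehat{A}$ is invertible and $\widehat{C}^{-1}$ is positive definite. Solving the resulting block system row by row recovers $\omega^o=\widehat{C}^{-1}(\widehat{b}-\widehat{A}\theta^o)$ and $\theta^o=(\widehat{A}^T\widehat{C}^{-1}\widehat{A}+\eta\widehat{U})^{-1}(\eta\widehat{U}\theta_{\text{p}}+\widehat{A}^T\widehat{C}^{-1}\widehat{b})$, matching \eqref{eq:saddle_point} and proving the claimed form of $\zeta^o$.

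Finally, to determine $\mathcal{Y}^o$ I would project \eqref{eq:fixed_point_1} with $I-\mathcal{P}$ onto the range space of $\mathcal{V}$, obtaining $\mathcal{V}\mathcal{Y}^o=-\mu(I-\mathcal{P})\bar{\mathcal{L}}(\mathcal{G}\zeta^o-p)$. Existence of a solution in the range of $\mathcal{V}$ is immediate because the right-hand side already lies in that subspace, and uniqueness follows because $\mathcal{V}$ restricted to its own range space is a bijection (it is symmetric with null space equal to the consensus subspace). Combined with the uniqueness of $z$ furnished by the invertibility of $G$, this shows the fixed point is unique. I expect the main obstacle to be the bookkeeping around the two projections—verifying the identities $\mathcal{P}\bar{\mathcal{L}}=\mathcal{P}$, $\mathcal{P}\mathcal{V}=0$ and $\mathcal{V}^2=I-\bar{\mathcal{L}}$, and confirming the range-space solvability condition for $\mathcal{Y}^o$—rather than the algebra of the block system, which is routine once $G$ is known to be invertible.
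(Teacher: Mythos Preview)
Your proposal is correct and follows essentially the same route as the paper's proof: both arguments use $\mathcal{V}\zeta^o=0$ to force consensus, then annihilate the $\mathcal{V}\mathcal{Y}^o$ term by projecting/left-multiplying onto the consensus subspace (you use the projector $\mathcal{P}=\tfrac{1}{K}\one_K\one_K^T\otimes I_{2M}$, the paper uses the equivalent sum operator $\mathcal{I}^T=\one_K^T\otimes I_{2M}$), solve the resulting $2\times2$ block system for the saddle point, and finally argue existence and uniqueness of $\mathcal{Y}^o$ in the range of $\mathcal{V}$ via the same null/range-space considerations. Your write-up is slightly more thorough in two places---you explicitly derive the fixed-point conditions \eqref{eq:fixed_point} from the recursion and you justify invertibility of $G$ by Schur complement rather than by assertion---but these are refinements, not a different strategy.
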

	\begin{proof}
		See appendix \ref{app:optimality}.
	\end{proof}
	Subtracting $\zeta^o$ and $\mathcal{Y}^o$ from \eqref{eq:first_order_network_recursion} and defining the error quantities $\widetilde{\zeta}_{i}=\zeta^o-\zeta_{i}$ and $\widetilde{\mathcal{Y}}_{i}=\mathcal{Y}^o-\mathcal{Y}_{i}$ we get:
	\begin{align}
	\bigg[\hspace{-2mm}\begin{array}{cc}
	I & 0\\
	-\mathcal{V} & I\end{array}\bigg]\Bigg[\hspace{-2mm}\begin{array}{c}
	\widetilde{\zeta}_{i+1}\\
	\widetilde{\mathcal{Y}}_{i+1}
	\end{array}\hspace{-2mm}\Bigg]&\hspace{-0.5mm}=\hspace{-0.5mm}
	\bigg[\hspace{-2mm}\begin{array}{cc}
	\bar{\mathcal{L}}\left(I-\mu\mathcal{G}\right) & -\mathcal{V}\\
	0 & I\end{array}\hspace{-2mm}\bigg]
	\Bigg[\hspace{-2mm}\begin{array}{c}
	\widetilde{\zeta}_{i}\\
	\widetilde{\mathcal{Y}}_{i}
	\end{array}\hspace{-2mm}\Bigg]
	\end{align}
	Multiplying by the inverse of the leftmost matrix we get:
	\begin{align}\label{eq:error_rec_1}
	\Bigg[\hspace{-2mm}\begin{array}{c}
	\widetilde{\zeta}_{i+1}\\
	\widetilde{\mathcal{Y}}_{i+1}
	\end{array}\hspace{-2mm}\Bigg]&\hspace{-0.5mm}=\hspace{-0.5mm}
	\bigg[\hspace{-2mm}\begin{array}{cc}
	\bar{\mathcal{L}}\left(I-\mu\mathcal{G}\right) & -\mathcal{V}\\
	\mathcal{V}\bar{\mathcal{L}}\left(I-\mu\mathcal{G}\right) & \bar{\mathcal{L}}\end{array}\hspace{-2mm}\bigg]
	\Bigg[\hspace{-2mm}\begin{array}{c}
	\widetilde{\zeta}_{i}\\
	\widetilde{\mathcal{Y}}_{i}
	\end{array}\hspace{-2mm}\Bigg]
	\end{align}
	\begin{lemma}\label{lemma:coord_transform}
		Through a coordinate transformation applied to \eqref{eq:error_rec_1} we obtain the following error recursion:
		\begin{align}\label{eq:diag_rec}
		&\bigg[\hspace{-2.3mm}\begin{array}{c}\bar{x}_{i+1}\\
		\hat{x}_{i+1}\end{array}\hspace{-2.5mm}\bigg]\hspace{-1.2mm}=\hspace{-1.2mm}\bigg[\hspace{-2.5mm}\begin{array}{cc}
		I_{2M}-\mu K^{-1}G & 
		-\mu K^{-\frac{1}{2}}\mathcal{I}^T\mathcal{G}\mathcal{H}_{u}\\
		-\frac{\mu}{\sqrt{K}}(\mathcal{H}_{l}^T\hspace{-0.7mm}+\hspace{-0.3mm}\mathcal{H}_{r}^T\mathcal{V})\bar{\mathcal{L}}\mathcal{G}\mathcal{I} & \mathcal{D}_1\hspace{-0.5mm}-\hspace{-0.5mm}\mu(\mathcal{H}_{l}^T\hspace{-0.5mm}+\hspace{-0.5mm}\mathcal{H}_{r}^T\mathcal{V})\bar{\mathcal{L}}\mathcal{G}\mathcal{H}_{u}\end{array}\hspace{-2.5mm}\bigg]\hspace{-1mm}
		\bigg[\hspace{-2.2mm}\begin{array}{c}\bar{x}_{i}\\
		\hat{x}_{i}\end{array}\hspace{-2.2mm}\bigg]
		\end{align}
		where $\mathcal{H}_{l},\mathcal{H}_{r},\mathcal{H}_{u},\mathcal{H}_{d}\in\mathds{R}^{2KM\times4KM-4M}$ are some constant matrices, $\bar{x}_{i}\in\mathbb{R}^{2M}$, $\hat{x}_{i}\in\mathbb{R}^{2M}$ and $\mathcal{D}_1$ is a diagonal matrix with $\|\mathcal{D}_1\|_2^2=\lambda_2(\bar{L})<1$. Furthermore $\bar{x}_{i}$ and $\hat{x}_{i}$ satisfy:
		\begin{align}\label{eq:def_x_bar}
			\|\widetilde{\zeta}_{i}\|^2&\leq \|\bar{x}_{i}\|^2+\|\mathcal{H}_{u}\|^2\|\hat{x}_{i}\|^2,\hspace{5mm}\|\widetilde{\mathcal{Y}}_{i}\|^2\leq\|\mathcal{H}_{d}\|^2\|\hat{x}_{i}\|^2
		\end{align}\hfill\qed
	\end{lemma}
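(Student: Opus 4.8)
The plan is to block-diagonalize the network part of the error matrix in \eqref{eq:error_rec_1} by exploiting the eigenstructure of $\bar{L}$, thereby separating the dynamics along the consensus (agreement) subspace from the dynamics on its orthogonal complement. Recall from Remark \ref{remark:L} that $L=H\Lambda H^T$ with $H$ orthogonal, a simple top eigenvalue $1$ with eigenvector $\one/\sqrt{K}$, and all remaining eigenvalues strictly inside the unit disk; hence $\bar{L}=(L+I_K)/2$ shares these eigenvectors, fixes $\one$, and has second-largest eigenvalue $\lambda_2(\bar{L})<1$. The two relations that drive the whole argument are $\bar{\mathcal{L}}(\one\otimes I_{2M})=\one\otimes I_{2M}$ and $\mathcal{V}(\one\otimes I_{2M})=0$: the averaging operator fixes the consensus direction while $\mathcal{V}$ annihilates it and is invertible on its orthogonal complement.

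First I would fix a basis adapted to this splitting. Writing $\mathcal{I}=\one\otimes I_{2M}$ for the consensus injection (so $\mathcal{I}^T\mathcal{I}=K I_{2M}$) and collecting an orthonormal basis of the complementary disagreement subspace into the columns of $\mathcal{H}_u$, every admissible error pair decomposes as $\widetilde{\zeta}_i=\tfrac{1}{\sqrt K}\mathcal{I}\,\bar{x}_i+\mathcal{H}_u\,\hat{x}_i$ with $\bar{x}_i=\tfrac{1}{\sqrt K}\mathcal{I}^T\widetilde{\zeta}_i$, and $\widetilde{\mathcal{Y}}_i=\mathcal{H}_d\,\hat{x}_i$. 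The last identity uses Lemma \ref{lemma:optimality}: since $\mathcal{Y}_0=0$ and each update adds a term in $\mathcal{R}(\mathcal{V})$, both $\mathcal{Y}_i$ and $\mathcal{Y}^o$ --- hence $\widetilde{\mathcal{Y}}_i$ --- remain in $\mathcal{R}(\mathcal{V})$, which is exactly the disagreement subspace and carries no consensus component. This is what forces $\hat{x}_i$ to have dimension $4KM-4M$ (the off-consensus part of $\widetilde{\zeta}$, of dimension $2KM-2M$, together with $\widetilde{\mathcal{Y}}\in\mathcal{R}(\mathcal{V})$, also of dimension $2KM-2M$), matching the stated sizes of $\mathcal{H}_l,\mathcal{H}_r,\mathcal{H}_u,\mathcal{H}_d$.

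Next I would substitute this decomposition into \eqref{eq:error_rec_1} and read off the four blocks. Projecting the $\widetilde{\zeta}$-recursion onto the consensus direction with $\tfrac{1}{\sqrt K}\mathcal{I}^T$ and using $\bar{\mathcal{L}}\mathcal{I}=\mathcal{I}$, $\mathcal{I}^T\mathcal{V}=0$ and $\mathcal{I}^T\mathcal{H}_u=0$ yields $\bar{x}_{i+1}=\bar{x}_i-\mu K^{-1}G\,\bar{x}_i-\mu K^{-1/2}\mathcal{I}^T\mathcal{G}\mathcal{H}_u\,\hat{x}_i$, since $\tfrac{1}{K}\mathcal{I}^T\mathcal{G}\mathcal{I}=\tfrac{1}{K}\sum_k G_k=K^{-1}G$; this reproduces exactly the first block row of \eqref{eq:diag_rec}. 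Projecting onto the complement produces the $(2,1)$ and $(2,2)$ blocks, where re-expressing the action of $\mathcal{V}\bar{\mathcal{L}}(I-\mu\mathcal{G})$ and $\bar{\mathcal{L}}$ in the chosen basis introduces $\mathcal{H}_l,\mathcal{H}_r$; the purely network part of $\bar{\mathcal{L}}$ restricted to the disagreement subspace is captured by the diagonal $\mathcal{D}_1$ with $\|\mathcal{D}_1\|_2^2=\lambda_2(\bar{L})<1$, while every remaining term carries an explicit factor of $\mu$ through $\mathcal{G}$. The bounds \eqref{eq:def_x_bar} then follow from the orthogonality built into the basis: because $\tfrac{1}{\sqrt K}\mathcal{I}$ has orthonormal columns spanning the consensus subspace and $\mathcal{H}_u$ spans its orthogonal complement, the cross term vanishes and $\|\widetilde{\zeta}_i\|^2=\|\bar{x}_i\|^2+\|\mathcal{H}_u\hat{x}_i\|^2\le\|\bar{x}_i\|^2+\|\mathcal{H}_u\|^2\|\hat{x}_i\|^2$, while $\|\widetilde{\mathcal{Y}}_i\|^2=\|\mathcal{H}_d\hat{x}_i\|^2\le\|\mathcal{H}_d\|^2\|\hat{x}_i\|^2$.

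The main obstacle I anticipate is the bookkeeping in the block computation rather than any single sharp estimate. The operator $\mathcal{G}$ does not commute with the network averaging and, owing to the primal--dual saddle-point structure, is block $2\times2$ and non-symmetric (with $\pm\sqrt{\upsilon}\,\widehat{A}_k$ off-diagonals), so it is \emph{not} diagonalized by $H\otimes I_{2M}$; one must therefore diagonalize only the network factors $\bar{\mathcal{L}}$ and $\mathcal{V}$ while keeping $\mathcal{G}$ intact. Care is also needed to track the restriction of $\widetilde{\mathcal{Y}}$ to $\mathcal{R}(\mathcal{V})$ so that the map $(\widetilde{\zeta}_i,\widetilde{\mathcal{Y}}_i)\mapsto(\bar{x}_i,\hat{x}_i)$ is a genuine change of variables on the invariant subspace, and to verify that the correct $\mathcal{D}_1$ arises with the claimed squared norm $\lambda_2(\bar{L})$. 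The crucial structural point to confirm is that every disagreement contribution carries a factor of $\mu$, so that for small $\mu$ the $(2,2)$ block stays a contraction governed by $\lambda_2(\bar{L})$, which is precisely what the subsequent contraction argument in the proof of Theorem \ref{theorem_1_linear_convergence} will exploit.
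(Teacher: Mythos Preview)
Your consensus/disagreement splitting is the right starting point, and your derivation of the top block row (the $\bar{x}$-recursion) via $\mathcal{I}^T\bar{\mathcal{L}}=\mathcal{I}^T$, $\mathcal{I}^T\mathcal{V}=0$ and $\tfrac{1}{K}\mathcal{I}^T\mathcal{G}\mathcal{I}=K^{-1}G$ matches the paper exactly. Your observation that $\widetilde{\mathcal{Y}}_i\in\mathcal{R}(\mathcal{V})$ forces the ``middle'' coordinate to vanish is also correct and in fact slightly sharper than the paper, which only argues that this coordinate is constant and decoupled.

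There is, however, a genuine gap in how you obtain $\mathcal{D}_1$. You describe it as ``the purely network part of $\bar{\mathcal{L}}$ restricted to the disagreement subspace''. That cannot be right: the restriction of $\bar{\mathcal{L}}$ alone to the non-consensus eigenspace has spectral norm $\lambda_2(\bar{L})$, not $\sqrt{\lambda_2(\bar{L})}$, so it would give $\|\mathcal{D}_1\|_2=\lambda_2(\bar{L})$ rather than the claimed $\|\mathcal{D}_1\|_2^2=\lambda_2(\bar{L})$. The point you are missing is that at $\mu=0$ the recursion for the disagreement variables is governed not by $\bar{\mathcal{L}}$ but by the \emph{coupled} $2\times 2$ block
\[
\begin{bmatrix}\bar{L}&-V\\ V\bar{L}&\bar{L}\end{bmatrix},
\]
which intertwines $\widetilde{\zeta}$ and $\widetilde{\mathcal{Y}}$ through $V$. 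The paper first uses $L=H\Lambda H^T$ to reduce this $2K\times 2K$ matrix to a block-diagonal with $K$ blocks
\[
E_k=\begin{bmatrix}\tfrac{1+\lambda_k}{2}&-\sqrt{\tfrac{1-\lambda_k}{2}}\\[2pt] \tfrac{1+\lambda_k}{2}\sqrt{\tfrac{1-\lambda_k}{2}}&\tfrac{1+\lambda_k}{2}\end{bmatrix},
\]
and then applies a further (non-orthogonal) similarity $Z_k$ to each $E_k$ via its Jordan form. The eigenvalues of $E_k$ are $\tfrac{1+\lambda_k}{2}\pm\tfrac{i}{2}\sqrt{1-\lambda_k^2}$, whose squared modulus is $\tfrac{1+\lambda_k}{2}=\bar{\lambda}_k$; this is precisely why $\|\mathcal{D}_1\|_2^2=\lambda_2(\bar{L})$. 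Without this extra Jordan step the $\hat{x}$-block is neither diagonal nor normal, and you cannot read off the squared-norm claim.

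This also affects your argument for the bounds \eqref{eq:def_x_bar}. Because the change of variables is $\mathcal{H}=(H_eZ)\otimes I_{2M}$ with $Z$ non-orthogonal, $\mathcal{H}_u$ is not an orthonormal basis of the complement; its columns are orthogonal to $\mathcal{I}$ (since they are built from $h_2,\dots,h_K$), so the cross term still vanishes and $\|\widetilde{\zeta}_i\|^2=\|\bar{x}_i\|^2+\|\mathcal{H}_u\hat{x}_i\|^2$, but you genuinely need the factor $\|\mathcal{H}_u\|^2$ in the final inequality rather than getting $\|\hat{x}_i\|^2$ directly. Your plan should therefore include the two-stage diagonalization (first $H_e$, then the $Z_k$'s) and the eigenvalue computation for $E_k$; the rest of your outline then goes through.
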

	\begin{proof}
		See Appendix \ref{app:coord_transform}.
	\end{proof}
	Due to Theorem 2.1 from \cite{shen2008condition}, if Assumption \ref{assumption:data} and the following condition are satisfied:
	\begin{align}
		\upsilon\lambda_{\text{min}}(\widehat{C})>\eta\lambda_{\text{max}}(\widehat{U})+2\sqrt{\upsilon\lambda_{\text{min}}(\widehat{C})\lambda_{\text{max}}(\widehat{A}\widehat{C}^{-1}\widehat{A}^T)}
	\end{align}
	then matrix $G$ is diagonalizable with strictly positive eigenvalues. Hence, we can write $G=Z\Lambda_GZ^{-1}$. Therefore, defining $\check{x}_i=Z^{-1}\bar{x}_i$ we can transform \eqref{eq:diag_rec} into:
	\begin{align}\label{eq:diag_rec_2}
	&\bigg[\hspace{-2.3mm}\begin{array}{c}\check{x}_{i+1}\\
	\hat{x}_{i+1}\end{array}\hspace{-2.5mm}\bigg]\hspace{-1.2mm}=\hspace{-1.2mm}\bigg[\hspace{-2.5mm}\begin{array}{cc}
	I_{2M}-\mu K^{-1}\Lambda_G & 
	-\mu K^{-\frac{1}{2}}Z^{-1}\mathcal{I}^T\mathcal{G}\mathcal{H}_{u}\\
	\frac{-\mu}{\sqrt{K}}(\mathcal{H}_{l}^T\hspace{-0.7mm}+\hspace{-0.3mm}\mathcal{H}_{r}^T\mathcal{V})\bar{\mathcal{L}}\mathcal{G}\mathcal{I}Z & \mathcal{D}_1\hspace{-0.5mm}-\hspace{-0.5mm}\mu(\mathcal{H}_{l}^T\hspace{-0.5mm}+\hspace{-0.5mm}\mathcal{H}_{r}^T\mathcal{V})\bar{\mathcal{L}}\mathcal{G}\mathcal{H}_{u}\end{array}\hspace{-2.5mm}\bigg]\hspace{-1mm}
	\bigg[\hspace{-2.2mm}\begin{array}{c}\check{x}_{i}\\
	\hat{x}_{i}\end{array}\hspace{-2.2mm}\bigg]
	\end{align}
	\begin{lemma}\label{lemma:squared_recursion}
		If $\mu<K/\rho(\Lambda_G)$ then the following inequality holds:
		\begin{align}\label{eq:ms_rec}
		\underbrace{\bigg[\hspace{-2mm}\begin{array}{c}\|\check{x}_{i+1}\|^2\\
		\|\hat{x}_{i+1}\|^2\end{array}\hspace{-1.5mm}\bigg]}_{\define z_{i+1}}\hspace{-1mm}\preceq\hspace{-1mm}\underbrace{\bigg[\hspace{-1.5mm}\begin{array}{cc}
		\hspace{-0.5mm}\rho(I_{2M}\hspace{-0.5mm}-\hspace{-0.5mm}\mu K^{\hspace{-0.5mm}-\hspace{-0.5mm}1}\hspace{-0.5mm}\Lambda_G) & \hspace{-4mm}\mu a_2\\
		\mu^2a_3 & \hspace{-4mm}\sqrt{\lambda_2(\bar{L})}+\mu^2a_4\end{array}\hspace{-1.5mm}\bigg]}_{\define B(\mu)}\hspace{-1mm}
		\bigg[\hspace{-2mm}\begin{array}{c}\|\check{x}_{i}\|^2\\
		\|\hat{x}_{i}\|^2\end{array}\hspace{-1.5mm}\bigg]
		\end{align}
		where $a_2$, $a_3$ and $a_4$ are positive constants.\hfill\qed
	\end{lemma}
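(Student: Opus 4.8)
The plan is to treat the two rows of \eqref{eq:diag_rec_2} separately, bounding $\|\check{x}_{i+1}\|^2$ and $\|\hat{x}_{i+1}\|^2$ one at a time. Denote the four blocks of the coefficient matrix in \eqref{eq:diag_rec_2} by $T_{11}=I_{2M}-\mu K^{-1}\Lambda_G$, $T_{12}=-\mu K^{-1/2}Z^{-1}\mathcal{I}^T\mathcal{G}\mathcal{H}_{u}$, $T_{21}=-\frac{\mu}{\sqrt{K}}(\mathcal{H}_{l}^T+\mathcal{H}_{r}^T\mathcal{V})\bar{\mathcal{L}}\mathcal{G}\mathcal{I}Z$ and $T_{22}=\mathcal{D}_1-\mu(\mathcal{H}_{l}^T+\mathcal{H}_{r}^T\mathcal{V})\bar{\mathcal{L}}\mathcal{G}\mathcal{H}_{u}$, so that $\check{x}_{i+1}=T_{11}\check{x}_i+T_{12}\hat{x}_i$ and $\hat{x}_{i+1}=T_{21}\check{x}_i+T_{22}\hat{x}_i$. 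The recurring tool is the elementary bound $\|a+b\|^2\le\frac{1}{\alpha}\|a\|^2+\frac{1}{1-\alpha}\|b\|^2$, valid for every $\alpha\in(0,1)$ by convexity of $\|\cdot\|^2$; the crux of the proof is selecting $\alpha$ differently in each row so that the announced entries of $B(\mu)$ emerge.

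For the first row I would apply this inequality with $a=T_{11}\check{x}_i$, $b=T_{12}\hat{x}_i$ and the specific choice $\alpha=\rho(I_{2M}-\mu K^{-1}\Lambda_G)$. Because $\Lambda_G$ is diagonal with the strictly positive eigenvalues of $G$, the matrix $T_{11}$ is diagonal, and the hypothesis $\mu<K/\rho(\Lambda_G)$ forces every diagonal entry into $(0,1)$; hence $\rho(T_{11})=\|T_{11}\|_2=1-\mu K^{-1}\lambda_{\text{min}}(G)\in(0,1)$ is a legitimate value of $\alpha$. With this choice the first term contributes $\frac{\|T_{11}\|_2^2}{\rho(T_{11})}\|\check{x}_i\|^2=\rho(T_{11})\|\check{x}_i\|^2$, which is exactly the $(1,1)$ entry. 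For the second term, $\|T_{12}\|_2=\mu K^{-1/2}\|Z^{-1}\mathcal{I}^T\mathcal{G}\mathcal{H}_{u}\|_2$ carries one power of $\mu$, while $1-\rho(T_{11})=\mu K^{-1}\lambda_{\text{min}}(G)$ carries another in the denominator; these cancel to leave a coefficient of the form $\mu a_2$ with $a_2=\|Z^{-1}\mathcal{I}^T\mathcal{G}\mathcal{H}_{u}\|_2^2/\lambda_{\text{min}}(G)>0$.

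For the second row I would split $\hat{x}_{i+1}=\mathcal{D}_1\hat{x}_i+\big(T_{21}\check{x}_i+(T_{22}-\mathcal{D}_1)\hat{x}_i\big)$ and apply the same inequality with $a=\mathcal{D}_1\hat{x}_i$ and $b=T_{21}\check{x}_i+(T_{22}-\mathcal{D}_1)\hat{x}_i$, now choosing $\alpha=\|\mathcal{D}_1\|_2=\sqrt{\lambda_2(\bar{L})}\in(0,1)$, which is admissible since $\lambda_2(\bar{L})<1$ by Lemma \ref{lemma:coord_transform}. The first term then yields $\frac{\|\mathcal{D}_1\|_2^2}{\|\mathcal{D}_1\|_2}\|\hat{x}_i\|^2=\sqrt{\lambda_2(\bar{L})}\,\|\hat{x}_i\|^2$, the desired $(2,2)$ diagonal part. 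For $b$ I would use $\|b\|^2\le 2\|T_{21}\check{x}_i\|^2+2\|T_{22}-\mathcal{D}_1\|_2^2\|\hat{x}_i\|^2$; since both $T_{21}$ and $T_{22}-\mathcal{D}_1=-\mu(\mathcal{H}_{l}^T+\mathcal{H}_{r}^T\mathcal{V})\bar{\mathcal{L}}\mathcal{G}\mathcal{H}_{u}$ are proportional to $\mu$, this produces $\mu^2\|\check{x}_i\|^2$ and $\mu^2\|\hat{x}_i\|^2$ terms, which after division by $1-\sqrt{\lambda_2(\bar{L})}$ become $\mu^2 a_3\|\check{x}_i\|^2$ and $\mu^2 a_4\|\hat{x}_i\|^2$ with $a_3,a_4>0$. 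Collecting the two rows reproduces $z_{i+1}\preceq B(\mu)z_i$ entrywise.

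The norm bookkeeping above is routine; the genuinely delicate step is matching the convexity weights to the row structure. Choosing $\alpha=\rho(T_{11})$ in the first row is what converts the naive $\mu^2$ off-diagonal contribution into the sharper term $\mu a_2$, by exploiting that $1-\rho(T_{11})$ is itself $\Theta(\mu)$; and choosing $\alpha=\|\mathcal{D}_1\|_2$ in the second row is what turns the contraction factor $\|\mathcal{D}_1\|_2^2=\lambda_2(\bar{L})$ into its square root $\sqrt{\lambda_2(\bar{L})}$, relegating the $O(\mu^2)$ remainder to the off-diagonal. The condition $\mu<K/\rho(\Lambda_G)$ is precisely what guarantees $\rho(T_{11})\in(0,1)$, so that both weights lie in $(0,1)$ and the inequality is applicable. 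Showing that the resulting $B(\mu)$ is a meaningful comparison matrix, and in particular that $\rho(B(\mu))<1$ for sufficiently small $\mu$, I would defer to the subsequent step, where the $2\times 2$ structure of $B(\mu)$ makes the spectral radius condition explicitly checkable.
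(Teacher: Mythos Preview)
Your proposal is correct and follows essentially the same route as the paper's own proof: both apply the convexity bound $\|a+b\|^2\le\frac{1}{\alpha}\|a\|^2+\frac{1}{1-\alpha}\|b\|^2$ row by row, choosing $\alpha=\|I_{2M}-\mu K^{-1}\Lambda_G\|$ in the first row and $\alpha=\|\mathcal{D}_1\|_2=\sqrt{\lambda_2(\bar L)}$ in the second, and your resulting constants $a_2,a_3,a_4$ coincide with the paper's up to harmless factors of $K$ absorbed into the constants. Your commentary on why these particular choices of $\alpha$ are the right ones (turning the naive $\mu^2$ cross term into $\mu a_2$, and producing $\sqrt{\lambda_2(\bar L)}$ rather than $\lambda_2(\bar L)$) is accurate and matches the mechanism in the paper.
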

	\begin{proof}
		See Appendix \ref{app:squared_recursion}.
	\end{proof}
	Computing the $1-$norm on both sides of the above inequality and using the fact that $\|B(\mu)z_i\|_1\leq\|B(\mu)\|_1\|z_i\|_1$ we get:
	\begin{align}
		\|\check{x}_{i+1}\|^2+\|\hat{x}_{i+1}\|^2\leq\|B(\mu)\|_1(\|\check{x}_i\|^2+\|\hat{x}_i\|^2)
	\end{align}
	Iterating we get:
	\begin{align}
	&\|\check{x}_{i}\|^2+\|\hat{x}_{i}\|^2\leq\alpha^i(\|\check{x}_0\|^2+\|\hat{x}_0\|^2)\\
	&\alpha\hspace{-0.5mm}=\hspace{-0.5mm}\max\{\rho(I\hspace{-0.5mm}-\hspace{-0.5mm}\mu K^{-1}\Lambda_G)+\mu^2a_3,\lambda_2(\bar{L})^{\frac{1}{2}}+\mu a_2+\mu^2a_4\}
	\end{align}
	Recalling that $\check{x}_i=Z^{-1}\bar{x}_i$ and \eqref{eq:def_x_bar} we get:
	\begin{align}
		\|\widetilde{\zeta}_i\|^2\leq\max\{\|Z\|^2,\|\mathcal{H}_u\|^2\}\left(\|\check{x}_{i}\|^2+\|\hat{x}_{i}\|^2\right)\leq\alpha^i\kappa
	\end{align}
	where $\kappa=\max\{\|Z\|^2,\|\mathcal{H}_u\|^2\}\left(\|\check{x}_{0}\|^2+\|\hat{x}_{0}\|^2\right)$. Since $\alpha<1$ (for small enough $\mu$) we conclude that the iterates $\theta_{k,i}$ and $\omega_{k,i}$ generated by Algorithm 1 converge linearly to \eqref{eq:saddle_point} for every agent $k$; which completes the proof.
\end{proof}}
\textcolor{black}{Note that when the step-size is small enough, the convergence rate $\alpha$ depends on two factors: the spectrum of $\rho(I\hspace{-0.5mm}-\hspace{-0.5mm}\mu K^{-1}\Lambda_G)$ (which is also the convergence rate of a centralized gradient descent implementation) and the second biggest eigenvalue of the combination matrix. This implies that when the network is densely connected, the factor that determines the convergence rate of the algorithm is the eigenstructure of the saddle-point matrix $G$. On the contrary, when the network is sparsely connected, the rate at which the agents' information diffuses across the network is the factor that determines the convergence rate of the algorithm.}

\textcolor{black}{Algorithm 1 has the inconvenience that at every iteration each agent has to calculate the exact local gradient (i.e., all data samples have to be used), which computationally might be demanding for cases with big data. Therefore we add a variance reduced gradient strategy to Algorithm 1. More specifically, we use the AVRG \cite{AVRG} (amortized variance-reduced gradient) technique (Algorithm 2).}

\textcolor{black}{The AVRG strategy is a single agent algorithm designed to minimize functions of the form:
\begin{align}
	\min_{\theta}\sum_{n=1}^{N}Q_n(\theta)
\end{align}	
where $Q_n$ is some loss function evaluated at the $n-$th data point. The main difference between standard stochastic gradient descent and AVRG is that in AVRG at every epoch the estimated gradients are collected in a vector $g$, which is used in the following epoch to reduce the variance of the gradient estimates. In the listing corresponding to Algorithm 2 we introduced an epoch index $e$ and a uniform random permutation function $\boldsymbol{\sigma}^e$. The epoch index is due to the fact that AVRG relies on random reshuffling (which is why the permutation function is necessary) and sampling without replacement (hence, one epoch is one pass over each data point). The gradient estimates produced by the AVRG strategy are subject to both bias and variance, however both decay over time and therefore do not jeopardize convergence. The resulting algorithm from the combination of Algorithm 1 and AVRG (which we refer to as Fast Diffusion for Policy Evaluation or FDPE) relies on stochastic gradients (as opposed to exact gradient calculations) and as a consequence is more computationally efficient than Algorithm 1, while still retaining the convergence guarantees.}
\begin{algorithm}[!t]
	\small\textbf{\small Algorithm 2:} AVRG for $N$ data points and loss function $Q$\vspace{-2mm}\\
	\rule{8.65cm}{0.5pt}\\
	\textbf{Initialize}: $\theta_{0}^0$ arbitrarily; $g^0=0$; $\nabla Q_n(\theta_{0}^0)\leftarrow0$, $\;1\leq n \leq N$.\\
	\textbf{For} $e=0,1,2\ldots$:\vspace{-1.5mm}
	\setlength{\belowdisplayskip}{-2pt}
	\begin{subequations}
		\begin{align}
			&\text{Generate a random permutation function $\boldsymbol{\sigma}^e$ and set $g^{e+1}\hspace{-1mm}=0$}\nonumber\\
			&\textbf{For } i=0,1,\ldots,N-1:\nonumber\\
			&\hspace{5mm}n=\bsigma^e(i)\\
			&\hspace{5mm}\theta_{i+1}^{e}=\theta_{k,i}^e-\mu\big(\nabla Q_n(\theta_{i}^e)-\nabla Q_n(\theta_{0}^e)+g^{e}\big)\\
			&\hspace{5mm}g^{e+1}\leftarrow g^{e+1}+\nabla Q_n(\theta_{i}^e)/N\\
			&\theta_{0}^{e+1}=\theta_{N}^e
		\end{align}
	\end{subequations}
\end{algorithm}
\begin{algorithm}[!t]\label{algorithm}
	\textbf{\small{Algorithm 3: \textit{Fast Diffusion for Policy Evaluation}} at node $k$}\vspace{-2mm}\\
	\rule{8.65cm}{0.5pt}\\
	\small{Distribute the $N-H$ data points into $J$ mini-batches of size $|\mathcal{J}_j|$; where $\mathcal{J}_j$ is the $j$-th mini-batch.\\
		\textbf{Initialize}:
		$\theta_{k,0}^0 \text{ and } \omega_{k,0}^0 \text{ arbitrarily}$; let $\psi_{k,0}^0=[{\theta_{k,0}^0}^T,{\omega_{k,0}^0}^T]^T$, $g_k^0=0$; $\beta_{k,n}(\theta_{k,0}^0,\omega_{k,0}^0)\leftarrow0$, $\;1\leq n \leq N-H$\\
		\textbf{For} $e=0,1,2\ldots$:\\
		\hspace*{3mm}Generate a random permutation function of the mini-batches $\boldsymbol{\sigma}_k^e$\\
		\hspace*{3mm}Set $g_k^{e+1}=0$\\
		\hspace*{3mm}\textbf{For} $i=0,1,\ldots,J-1$:\\
		\hspace*{6mm}Generate the local stochastic gradients:\vspace{-1.5mm}
		\setlength{\belowdisplayskip}{-2pt}
		\begin{subequations}\label{alg:stoc_grad}
			\begin{align}
			\hspace{6mm}&j=\bsigma_k^e(i)\\[-3.0pt]
			&{\beta}_{k}(\theta_{k,i}^e,\omega_{k,i}^e)\hspace{-0.3mm}=\hspace{-0.3mm}g_k^e\hspace{-0.5mm}+\hspace{-0.5mm}\frac{1}{|\mathcal{J}_j|}\hspace{-0.5mm}\sum_{l\in\mathcal{J}_j}\hspace{-1.3mm}\left(\beta_{k,l}(\theta_{k,i}^e,\omega_{k,i}^e)\hspace{-0.4mm}-\hspace{-0.4mm}\beta_{k,l}(\theta_{k,0}^e,\omega_{k,0}^e)\right)\label{eq:variance_reduction1}\\[-6.5pt]
			&g_{k}^{e+1}\leftarrow g_{k}^{e+1}+\frac{1}{N-H}\sum_{l\in\mathcal{J}_j}\beta_{k,l}(\theta_{k,i}^e,\omega_{k,i}^e)\label{eq:variance_reduction2}
			\end{align}
		\end{subequations}
		\hspace{6mm}Update $[\theta_{k,i+1}^e,\omega_{k,i+1}^e]^T$ with exact diffusion:
		\vspace{-1mm}
		\begin{subequations}\label{eq:updates}
			\begin{align}
			\hspace{6mm}&\psi_{k,i+1}^e=\ba{c}
			\theta_{k,i}^e\\
			\omega_{k,i}^e
			\ea-\tau_k\ba{cc}\mu_\theta&0\\0&\mu_\omega\ea{\beta}_{k}(\theta_{k,i}^e,\omega_{k,i}^e)\\[-2.5pt]
			&\phi_{k,i+1}^e=\psi_{k,i+1}^e+\ba{c}
			\theta_{k,i}^e\\
			\omega_{k,i}^e
			\ea-\psi_{k,i}^e\\[-2.5pt]
			&\ba{c}
			\theta_{k,i+1}^e\\
			\omega_{k,i+1}^e
			\ea=\bigg(\phi_{k,i+1}^e+\sum_{n\in\mathcal{N}_k}l_{nk}\phi_{n,i+1}^e\bigg)/2
			\end{align}
		\end{subequations}
		\begin{equation}
		\hspace{-53mm}\ba{c}
		\theta_{k,0}^{e+1}\\
		\omega_{k,0}^{e+1}
		\ea=\ba{c}
		\theta_{k,J}^e\\
		\omega_{k,J}^e
		\ea
		\end{equation}}
\end{algorithm}

In the listing of \textit{FDPE}, we introduce $\boldsymbol{\sigma}_k^e$, $\mathcal{J}_j$, and $\beta_{k,j}(\theta,\omega)$, where $\boldsymbol{\sigma}_k^e$ indicates a random permutation of the $J$ mini-batches of the $k$-th agent, which is generated at the beginning of epoch $e$; $\mathcal{J}_j$ is the $j$-th mini-batch and $\beta_{k,l}(\theta,\omega)$ is defined as follows:
\begin{align}\label{eq:gradient}
\beta_{k,l}(\theta,\omega)&=\bigg[\hspace{-2mm}\begin{array}{c}\nabla_\theta F_{k,l}(\theta,\omega)\\-\nabla_\omega F_{k,l}(\theta,\omega)\end{array}\hspace{-2mm}\bigg]=
\bigg[\hspace{-2mm}\begin{array}{c}\eta\widehat{U}_{k,l}(\hspace{-0.5mm}\theta\hspace{-0.2mm}-\hspace{-0.2mm}\theta_{\text{p}}\hspace{-0.5mm})-\widehat{A}_{k,l}^T\omega\\
\widehat{A}_{k,l}\theta-\widehat{b}_{k,l}+\widehat{C}_{k,l}\omega
\end{array}\hspace{-2mm}\bigg]
\end{align}
\textcolor{black}{We remark that Algorithm 1 is a special case of FDPE, which corresponds to the case where the mini-batch size is selected equal to the whole batch of data}. Note that the choice of the mini-batch size provides a communication-computation trade-off. As the number of mini-batches diminishes so do the communication requirements per epoch. However, more gradients need to be calculated per update and hence more gradient calculations might be required to achieve a desired error. Obviously the optimal amount of mini-batches $J$ to minimize the overall time of the optimization process depends on the particular hardware availability for each implementation. Note that the only difference between update equations \eqref{eq:updates_1} and \eqref{eq:updates} is that the updates which correspond to algorithm 1 use exact local gradients, while \eqref{eq:updates} use stochastic approximations obtained through equations \eqref{alg:stoc_grad}.\textcolor{black}{
\begin{theorem}\label{theorem_linear_convergence}
	If Assumption \ref{assumption:data} is satisfied and the step-sizes $\mu_\omega$ and $\mu_\theta$ are small enough while satisfying inequality \eqref{eq:condition}, then the iterates $\theta_{k,i}^e$ and $\omega_{k,i}^e$ generated by FDPE converge linearly to \eqref{eq:saddle_point}.
\end{theorem}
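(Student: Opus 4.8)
The plan is to treat FDPE as a stochastic perturbation of Algorithm 1 and to reuse the deterministic contraction already established in Theorem \ref{theorem_1_linear_convergence}. Writing the AVRG gradient produced by \eqref{eq:variance_reduction1} as the exact gradient plus an error term $s_i$ (the difference between the stochastic and the true local gradients, stacked network-wide), the first-order recursion \eqref{eq:first_order_network_recursion} driving FDPE becomes
\begin{align}
\zeta_{i+1}&=\bar{\mathcal{L}}\big(\zeta_{i}-\mu(\mathcal{G}\zeta_{i}-p)\big)-\mathcal{V}\mathcal{Y}_i-\mu\bar{\mathcal{L}}s_i\nonumber\\
\mathcal{Y}_{i+1}&=\mathcal{Y}_i+\mathcal{V}\zeta_{i+1}\nonumber
\end{align}
So the first step would be to subtract the fixed point $(\zeta^o,\mathcal{Y}^o)$ and push this through the same coordinate transformation as in Lemma \ref{lemma:coord_transform}, obtaining an inhomogeneous version of \eqref{eq:diag_rec} whose driving term is a fixed linear image of $\mu s_i$.

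The second step would be to control $s_i$ using the structure of the AVRG estimator. The subtlety is that, unlike SVRG or SAGA, the AVRG gradient is \emph{biased}, since it relies on random reshuffling and sampling without replacement within each epoch; both its bias and its variance, however, are known to decay as the iterates approach the fixed point. The plan is to invoke the AVRG analysis of \cite{AVRG}, adapted to the present distributed primal-dual setting, to bound the mean-square error by a combination of the current iterate error and an auxiliary memory quantity $\Delta_i$ measuring the distance between the running iterate and the epoch reference point $(\theta_{k,0}^e,\omega_{k,0}^e)$, namely a bound of the form $\Ex\|s_i\|^2\leq c_1\Ex\|\widetilde{\zeta}_i\|^2+c_2\Delta_i$ with constants independent of $\mu$.

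With such a bound in hand, the third step would mirror Lemma \ref{lemma:squared_recursion}: taking expectations of squared norms in the perturbed diagonal recursion yields a linear inequality recursion in the augmented nonnegative vector $(\Ex\|\check{x}_i\|^2,\Ex\|\hat{x}_i\|^2,\Delta_i)$. Its leading $2\times2$ block is exactly $B(\mu)$ from \eqref{eq:ms_rec}, now perturbed by $\mathcal{O}(\mu)$ and $\mathcal{O}(\mu^2)$ cross-terms arising from $s_i$, augmented by one extra row governing the geometric decay of $\Delta_i$ dictated by the AVRG memory dynamics. I would then show that the resulting matrix $\widetilde{B}(\mu)$ has spectral radius strictly below one for all sufficiently small $\mu$, and conclude by the same $1$-norm iteration used at the end of Theorem \ref{theorem_1_linear_convergence} that $\Ex\|\widetilde{\zeta}_i\|^2$ vanishes linearly, hence that $\theta_{k,i}^e$ and $\omega_{k,i}^e$ converge to \eqref{eq:saddle_point}.

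The hard part will be the interaction of the biased AVRG gradient with the non-symmetric saddle-point matrix $\mathcal{G}$. In a pure minimization problem the AVRG memory argument closes on its own, but here the primal-descent/dual-ascent coupling forces $s_i$ into both block components simultaneously, and no symmetric positive-definite energy is available. This is precisely where condition \eqref{eq:condition} enters: by guaranteeing that $G$ is diagonalizable with strictly positive eigenvalues it keeps the perturbed block $B(\mu)$ sub-unital once the $s_i$-driven cross-terms are included. Verifying that $\widetilde{B}(\mu)$ remains sub-unital despite these cross-terms, and that $\Delta_i$ contracts jointly with the iterate error rather than accumulating across epochs, is the delicate technical core of the argument.
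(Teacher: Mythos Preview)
Your proposal is correct and follows essentially the same approach the paper takes: the paper's proof (deferred to supplementary material) is explicitly said to be based on the proof of Theorem~\ref{theorem_1_linear_convergence}, with the error recursions acquiring extra terms that account for the AVRG gradient noise. Your plan of writing FDPE as Algorithm~1 plus a perturbation $s_i$, pushing this through the same coordinate transformation, bounding the AVRG error via an auxiliary memory quantity tied to the epoch reference point, and closing an augmented version of the matrix inequality \eqref{eq:ms_rec} is precisely this program, and your identification of condition~\eqref{eq:condition} as the ingredient that keeps the saddle-point block contractive under the added cross-terms is on point.
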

\begin{proof}
	The proof is demanding and lengthy, therefore due to length constraints we include it as supplementary material --- see though the arXiv version [??] for the detailed steps. We remark however that such proof is based on the proof of Theorem 1. The main difference is that for the FDPE algorithm, the error recursions have extra terms that account for the gradient noise.
\end{proof}
The main difference between Theorems 1 and 2 is that there are more constraints on the step sizes due to the gradient noise (i.e., smaller step sizes might be necessary).}

\section{Multi-Agent Reinforcement Learning}
\label{Sec:Marl}
In this section we derive a cost for the MARL case that has the same form as \eqref{eq:em_PBE2} and therefore shows that Algorithms 1 and 3 are also applicable for this scenario.

The network structure is the same as in the previous section. The difference with the previous section is that in the MARL case the agents interact with a unique environment and with each other, and have a common goal. Therefore, in this section we refer to the collection of all agents as a team. In this case the MDP is defined by the tuple ($\mathcal{S}$,$\mathcal{A}^k$,$\mathcal{P}$,$r^k$). $\mathcal{S}$ is a set of global states shared by all agents and $\mathcal{A}^k$ is the set of actions available to agent $k$ of size $A^k=|\mathcal{A}^k|$. We refer to $\mathcal{A}=\prod_{k=1}^{K}\mathcal{A}^k$ as the set of team actions. $\mathcal{P}(s'|s,a)$ is the defined as before but considering global states and team actions, and $r^k:\mathcal{S}\times\mathcal{A}\times\mathcal{S}\rightarrow\mathbb{R}$ is the reward function of agent $k$. Specifically, $r^k(s,a,s')$ is the expected reward of decision maker $k$ when the team transitions to state $s'\in\mathcal{S}$ from state $s\in\mathcal{S}$ having taken team action $a\in\mathcal{A}$. We clarify that we refer to the team's action (i.e., the collection of all individual actions) as $a$, while $a^k$ refers to the individual action of agent $k$. What distinguishes this model from the one in the previous section is that the transition probabilities and the reward functions of the individual agents depend not only on their own actions but on the actions of all other agents. The goal of all the agents is to maximize the aggregated return and hence in this case the value function is defined as:
\begin{align}\label{eq:marl_value_function_definition}
v^\pi(s)&=\sum_{t=i}^{\infty}\frac{\gamma^{t-i}}{K}\Bigg(\sum_{k=1}^{K}\Ex\left[\bm{r}^k(\bm{s}_{t},\bm{a}_t,\bm{s}_{t+1})|\bm{s}_i=s\right]\Bigg)
\end{align}
introducing a global reward as $\bm{r}(\bm{s}_{t},\bm{a}_t,\bm{s}_{t+1})=K^{-1}\sum_{k=1}^{K}\bm{r}^k(\bm{s}_{t},\bm{a}_t,\bm{s}_{t+1})$ equation \eqref{eq:marl_value_function_definition} becomes identical to \eqref{eq:value_function_definition}. \textcolor{black}{Therefore, with the understanding that in this case the states and the policies, $\pi(a|s)=\pi(a^1,\cdots,a^K|s)$ and $\phi(a^1,\cdots,a^K|s)$, are global (and hence also the feature vectors and sampling weights are global\footnote{\textcolor{black}{Note that in this case if sampling weights were to be used for off-policy operation, every agent would require knowledge of the teams behavior policy.}}), the rest of the derivation follows identically to Section \ref{sec: Problem Setting}. Therefore, the empirical problem becomes like \eqref{eq:single_empirical_problem} with the following estimates:
\begin{align}\label{eq:MARL_AbC_3}
	&\widehat{A}\hspace{-0.5mm}=\hspace{-2mm}\sum_{n=1}^{N-H}\hspace{-1.5mm}\frac{\widehat{A}_{n}}{N-H},\hspace{2mm}\widehat{b}\hspace{-0.5mm}=\hspace{-2mm}\sum_{n=1}^{N-H}\sum_{k=1}^{K}\frac{\widehat{b}_{k,n}}{K(N-H)},\hspace{2mm}\widehat{C}\hspace{-0.5mm}=\hspace{-2mm}\sum_{n=1}^{N-H}\hspace{-1mm}\frac{\widehat{C}_{n}}{N-H}
\end{align}
Defining $\widehat{A}_{k,n}=\widehat{A}_{n}$ we can write $\widehat{A}_{n}=\sum_{k=1}^{K}\widehat{A}_{k,n}/K$ (and similarly for $\widehat{C}_n$). Equations \eqref{eq:MARL_AbC_3} become exactly like \eqref{eq:MARL_AbC_2} with $\tau_K=K^{-1}$, and therefore both algorithms can be applied to MARL scenarios without changes.} 
\begin{figure*}[!t]
	\centering
	\hspace{5mm}\subfloat[MDP]{\includegraphics[width=1.7in]{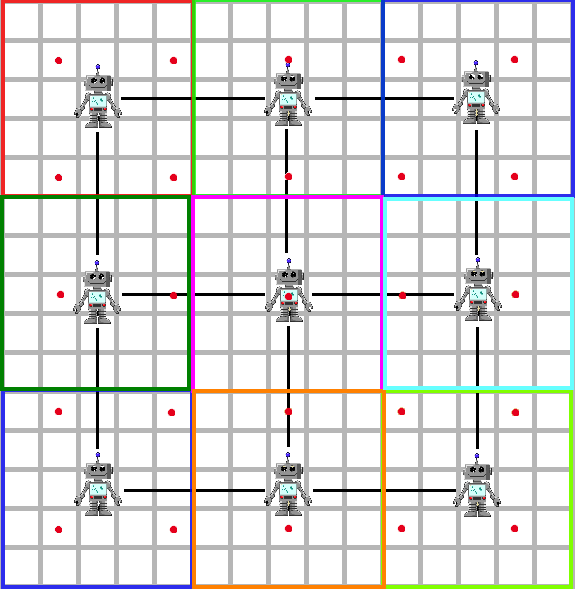}
		\label{fig:robots}}
	\hspace{3mm}\subfloat[Bias]{\includegraphics[width=2.5in]{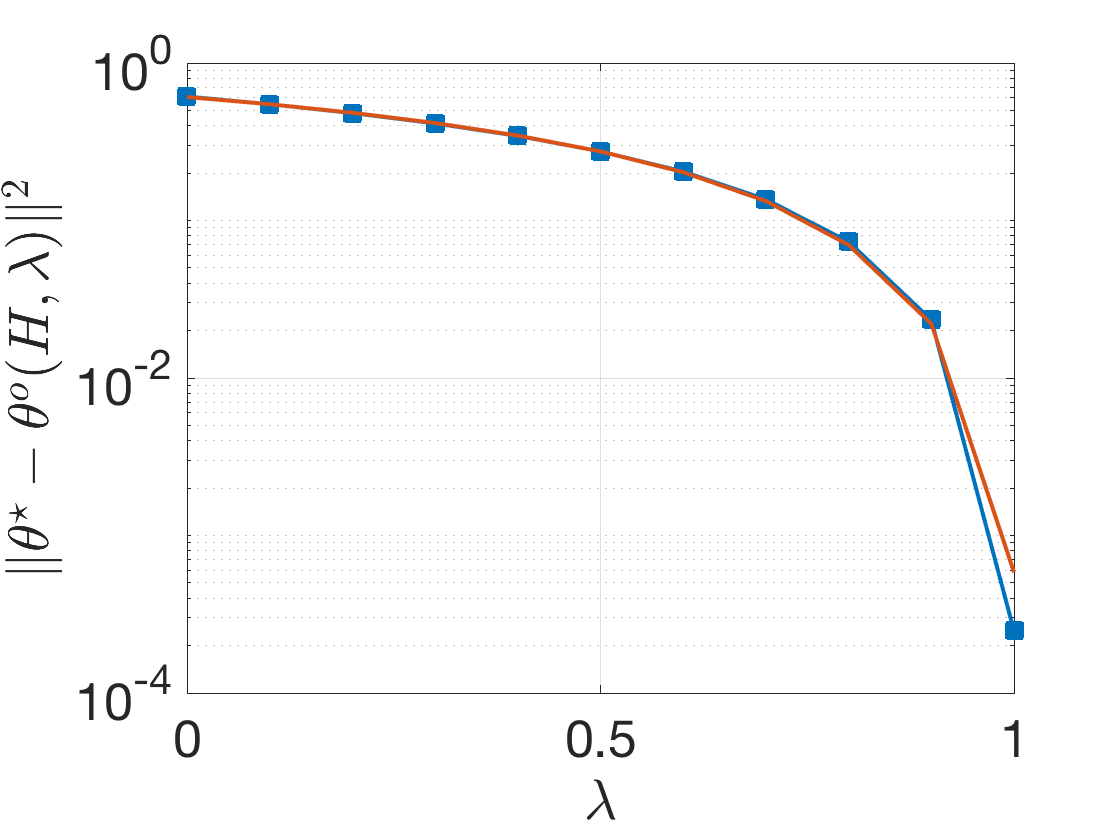}%
		\label{fig:exp1_bias}}
	\subfloat[Bias variance trade-off]{\includegraphics[width=2.5in]{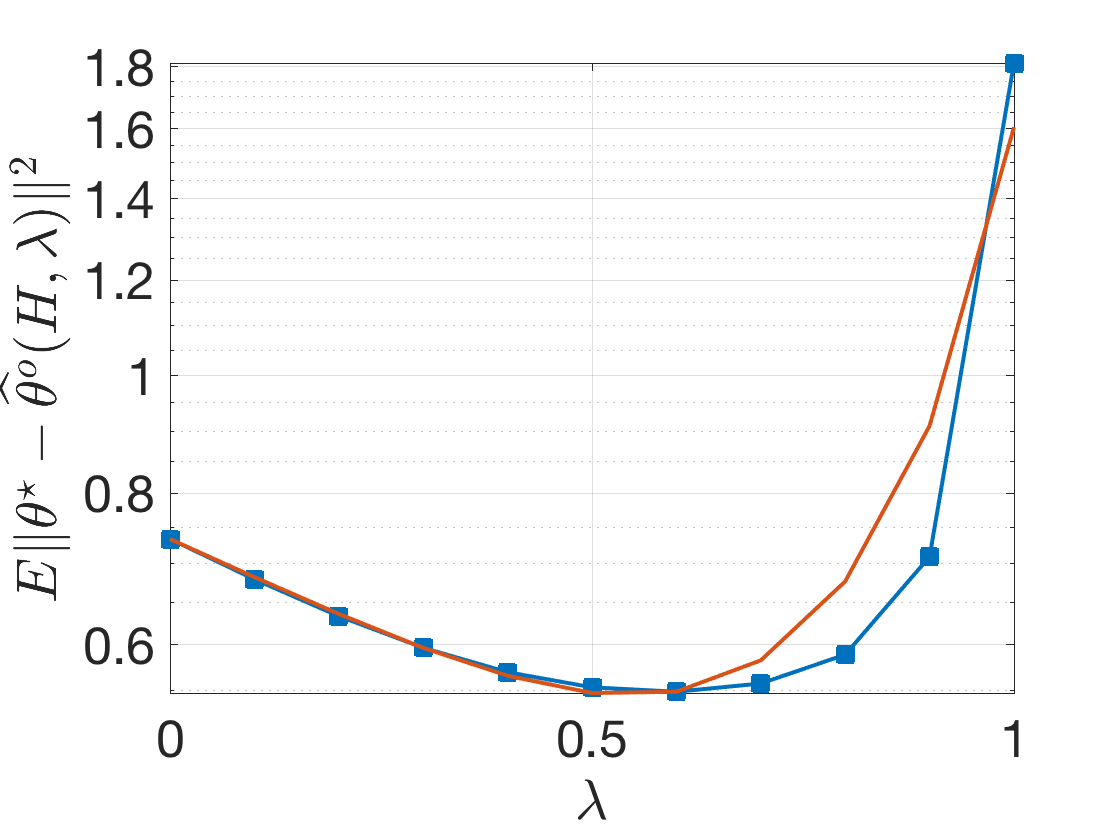}%
		\label{fig:exp1_bias_and variance}}
	\hfil
	\subfloat[Communication vs computation]{\includegraphics[width=2.4in]{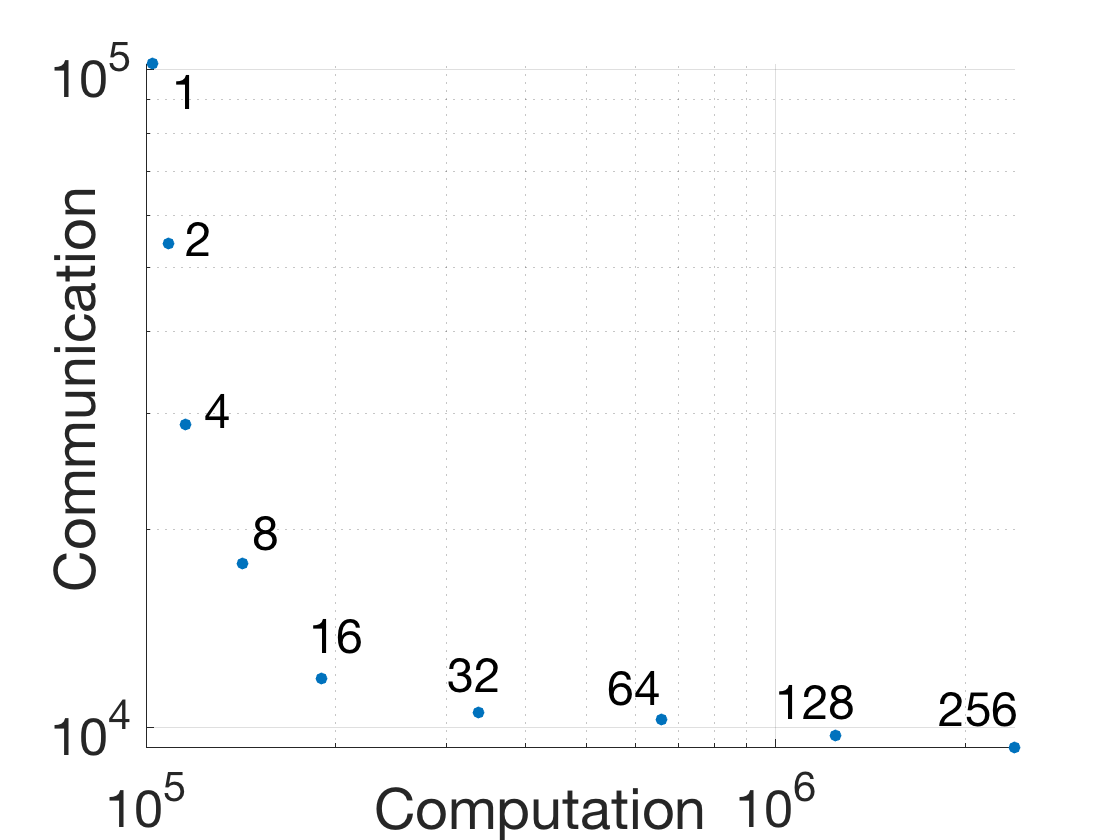}%
		\label{fig:exp1_comp_comm}}
	\subfloat[Empirical error]{\includegraphics[width=2.4in]{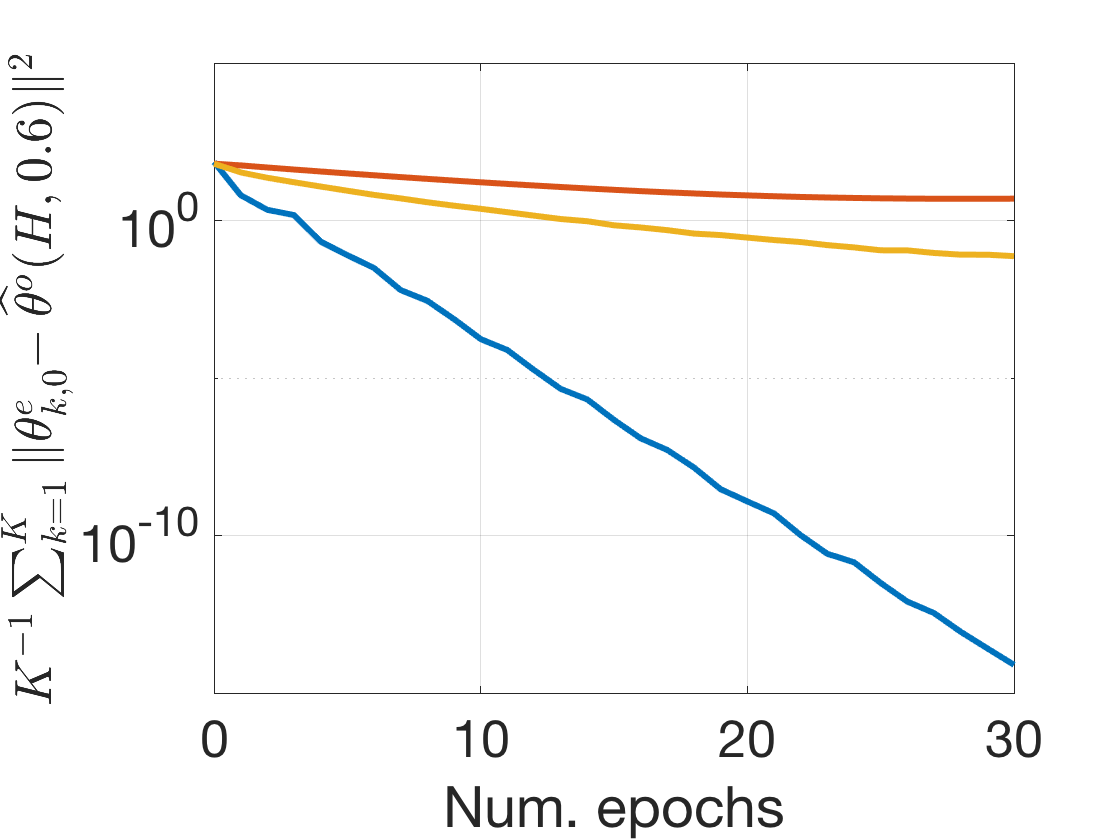}%
		\label{fig:exp1_emse}}
	\subfloat[Bias variance trade-off]{\includegraphics[width=2.4in]{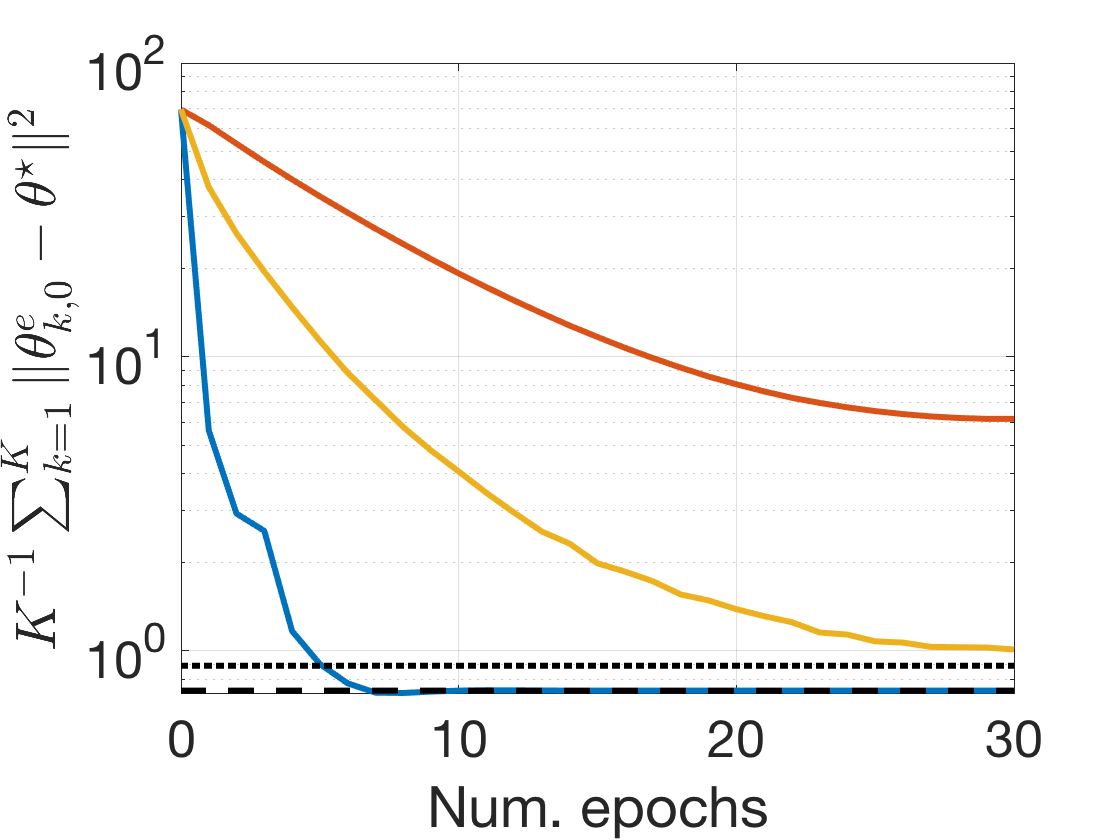}%
		\label{fig:msd1}}
	\caption{In (b) and (c) the red curves are the approximations from lemmas \ref{lemma:bias} and \ref{lemma:variance}. In (e) and (f), the blue curve corresponds to \textit{FDPE} and red and yellow correspond to \textit{Diffusion GTD2} and \textit{ALG2}, respectively. \textcolor{black}{In (f) the dotted line is at $\|\widehat{\theta}(H,\lambda=0)-\theta^\star\|^2$ while the dashed line is at $\|\widehat{\theta}(H,\lambda=0.6)-\theta^\star\|^2$.}}
	\label{fig_sim}
\end{figure*}

\section{Simulations}
\label{Sec: Simulations}

In this section we show two simulations corresponding to the two distinct scenarios that \textit{FDPE} can be applied to. A case application to a swarm of UAVs (Underwater Autonomous Vehicles) can be found in \cite{cassano_ECC}. 

\subsection{Experiment I}

This experiment corresponds to the scenario of Section \ref{Sec: Distributed}. We consider a situation in which the MDP's state space is divided among the agents for exploration.

The MDP's specifications are as follows. The state space is given by a $15\times15$ grid and the possible actions are: UP, DOWN, LEFT and RIGHT. The reward structure of the MDP and the target policy were generated randomly. We consider a network of $9$ agents that divide the state space in 9 regions. The topology of the network and the regions assigned to the agents for exploration are shown in Figure \ref{fig:robots}. The feature vectors consist of 26 features, 25 given by radial basis functions\footnote{Each RBF is given by $\exp\big(0.5\big((x-x_c)^2+(y-y_c)^2\big)\big)$, where $x_c$ and $y_c$ are the coordinates of the center of that feature and $x$ and $y$ are the coordinates of the agent.} (RBF) centered in the red marks show in Figure \ref{fig:robots} plus one bias feature fixed at $+1$. The behavior policy of every agent is equal to the target policy, except in the edges of its exploration region, where the probabilities of the actions that would take the agent beyond its exploration region are zeroed (the policy is further re-normalized).

In this experiment we show the bias-variance trade-off handled by the eligibility trace parameter $\lambda$, the communication-computation trade-off handled by the mini-batch size, and the performance of \textit{FDPE} compared to the existing algorithms that can be applied to this scenario (namely \textit{Diffusion GTD2} and \textit{ALG2}). The hyper-parameters chosen for \textit{FDPE} are $\tau_k=1/9$, $H=20$, $\eta=0$ and $N_k=2^{15}+H-1$.

\textcolor{black}{In figures \ref{fig:exp1_bias} and \ref{fig:exp1_bias_and variance} we show the bias and bias+variance curves as functions of $\lambda$ and its approximations using lemmas \ref{lemma:bias} and \ref{lemma:variance}, respectively\footnote{\textcolor{black}{The constants $\kappa_1$, $\kappa_2$, $\kappa_3$ and $\kappa_4$ were chosen to better fit the curves.}}. Note that the expressions provided in Lemmas \ref{lemma:bias} and \ref{lemma:variance} accurately capture the dependence of the bias and variance as functions of $\lambda$. The bias curve was calculated using \eqref{eq:t_star} and \eqref{eq:parc_min}. To estimate the combined effects of the bias and variance we calculated $\|\widehat{\theta}^o(H,\lambda)-\theta^\star\|^2$ (using expressions \eqref{eq:AbC}) 20 times with independently generated data and averaged the results. Note that the obtained curves agree with our previous discussion on the effect of the parameter $\lambda$. In this experiment, the optimal value is approximately $\lambda=0.6$. Figure \ref{fig:exp1_comp_comm} shows how communication and computation can be traded through the use of the mini-batch size. To obtain this figure, we fixed $\lambda=0.6$ and run \textit{FDPE} until an error smaller than $10^{-10}$ was obtained for the different batch sizes. For each case, the step-sizes were adjusted to maximize performance for a fair comparison. Note that all points are Pareto optimal, and hence the optimal choice of mini-batch size depends on every particular implementation. The y-axis displays the amount of communication rounds that took place over the entire optimization process, while the x-axis shows the amount of sample gradients calculated. Figure \ref{fig:exp1_emse} shows the empirical squared error for the three algorithms (each curve was obtained by averaging the squared errors from all the agents), where clearly the linear convergence of our algorithm can be seen. Finally, Figure \ref{fig:msd1} shows the mean square deviation (MSD), i.e., $\|\theta_{k,0}^e-\theta^\star\|^2$. As can be seen, our algorithm still outperforms the other algorithms in terms of convergence speed. It also has the advantage that it converges to a lower error (the dashed line) versus the other algorithms which converge to the dotted line, although in this case since the variance is high this advantage is not very significant. Note that as more data becomes available the variance of $\widehat{\theta}^o(H,\lambda)$ becomes smaller (see Lemma \ref{lemma:variance}) and hence the advantages (in terms of convergence speed and the minimizer that the algorithms converges to) of using \textit{FDPE} over the other algorithms becomes more pronounced.} The remaining hyper-parameters for \textit{FDPE} were: $\tau_k=1/9$, $J=2^{10}$ (i.e., batch size equal to $32$), $\mu_{\theta}=10$ and $\mu_{\omega}=16$. For \textit{Diffusion GTD2} and \textit{ALG2} decaying step-sizes were employed to guarantee convergence. The step-sizes decayed as $\mu(1+0.01e)^{-1}$, where $e$ is the epoch number (we used this decaying rule because it provided the best results). The initial step-sizes were $\mu_{\theta}=1.1$ and $\mu_{\omega}=1.7$ for \textit{Diffusion GTD2}, and $\mu_{\theta}=2.5$ and $\mu_{\omega}=4$ for \textit{ALG2}.

\subsection{Experiment II}
\begin{figure*}[!t]
	\centering
	\subfloat[Network]{\includegraphics[width=2.4in]{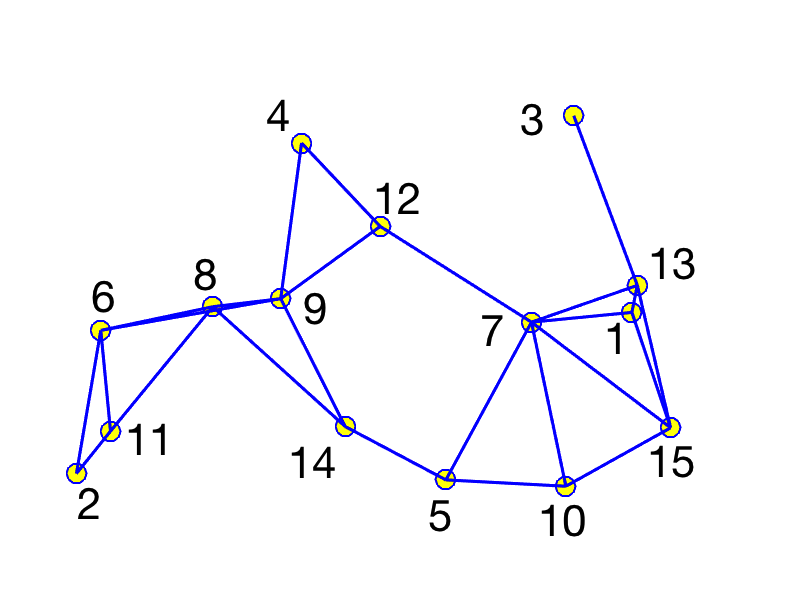}
		\label{fig:networks}}
	\subfloat[Bias]{\includegraphics[width=2.37in]{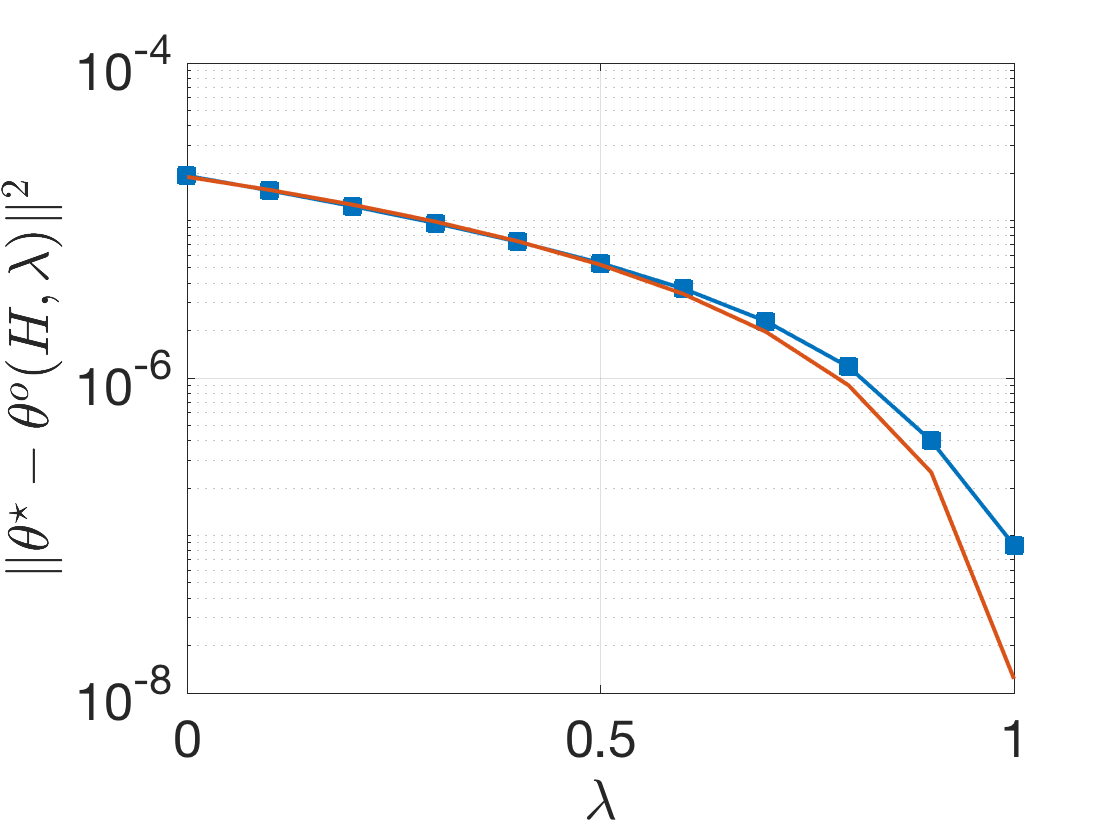}%
		\label{fig:exp2_bias}}
	\subfloat[Bias variance trade-off]{\includegraphics[width=2.37in]{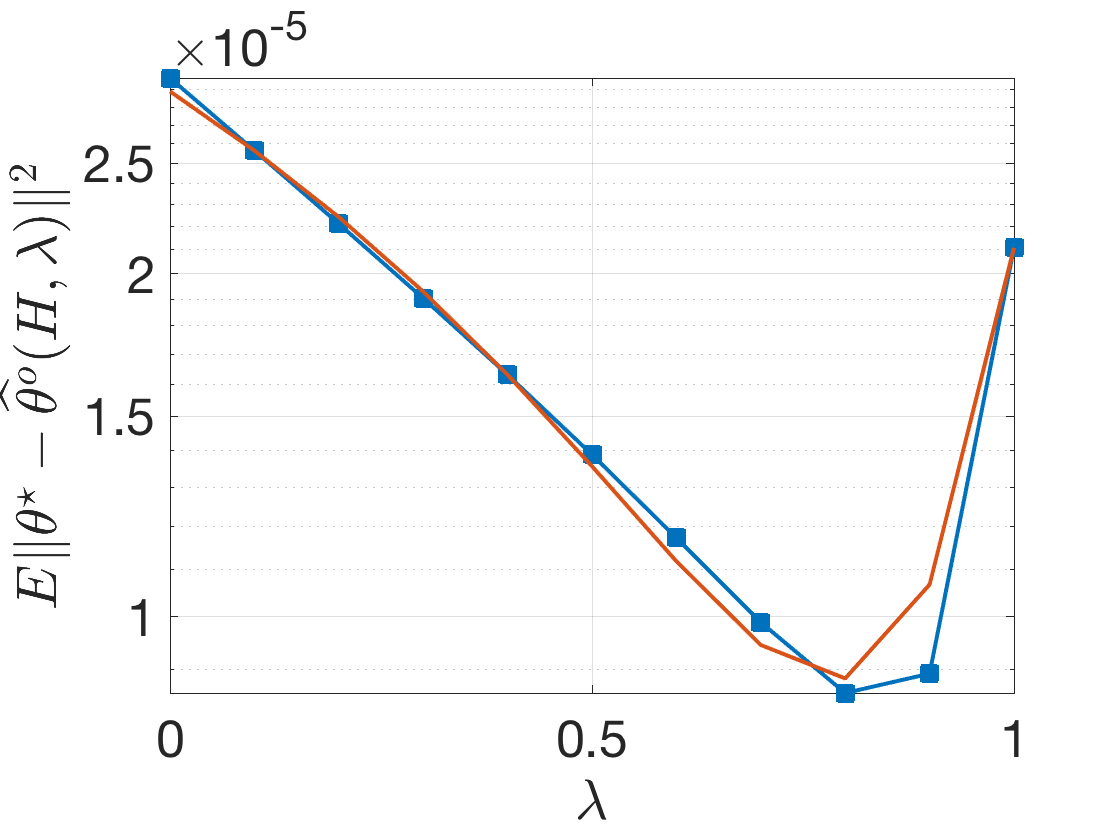}%
		\label{fig:exp2_bias_and_variance}}
	\hfil
	\subfloat[Empirical error]{\includegraphics[width=2.4in]{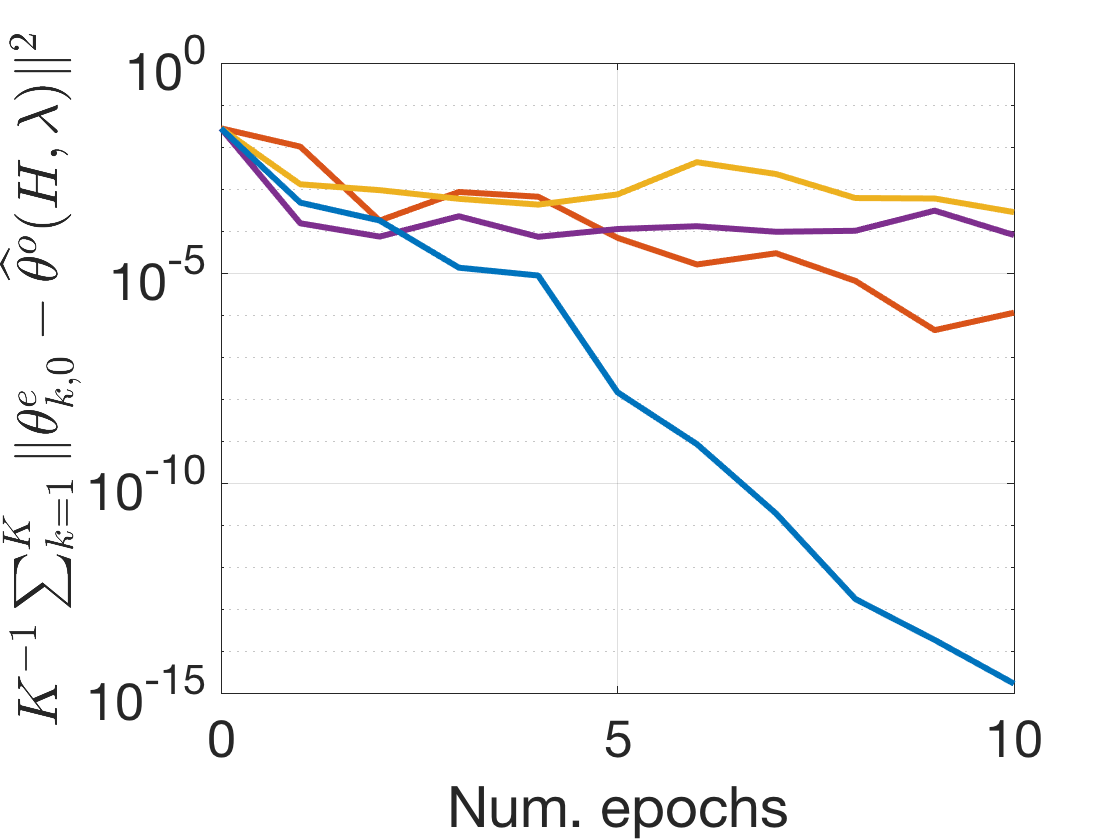}%
		\label{fig:exp2_empirical}}
	\subfloat[Mean square deviation]{\includegraphics[width=2.4in]{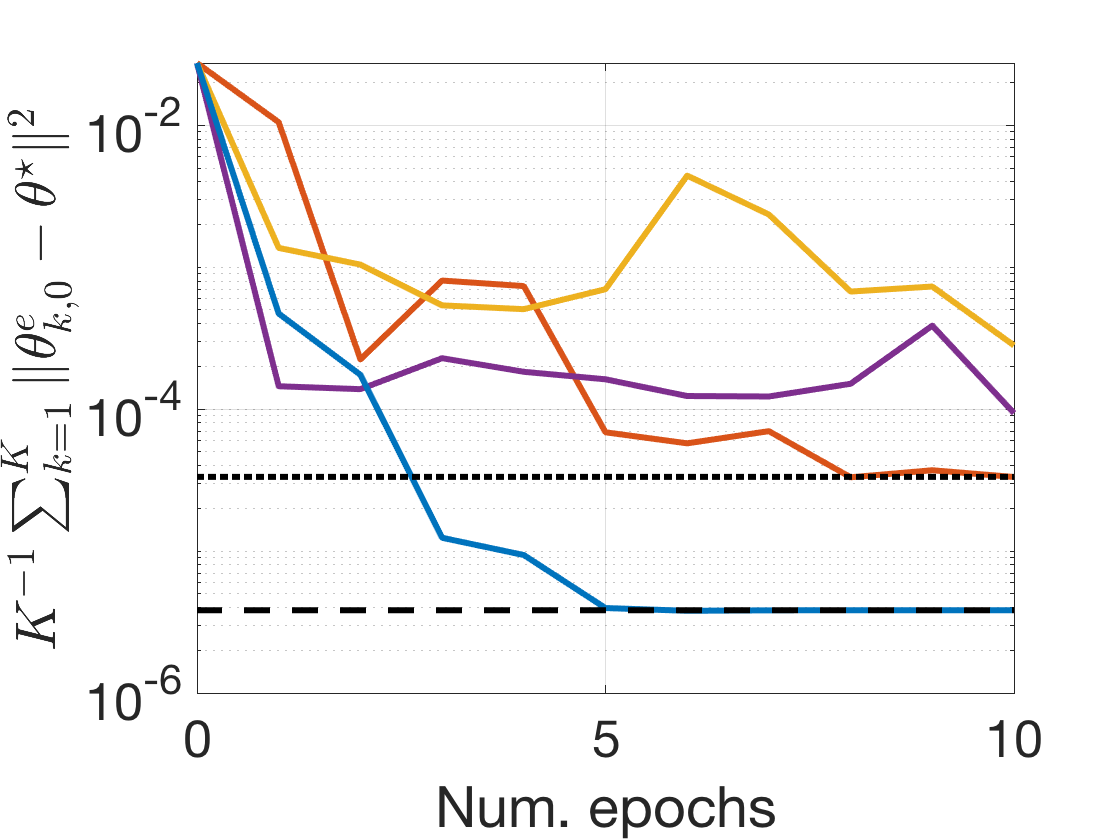}%
			\label{fig:msd2}}
	\caption{In (b), (c) and (d) the red curves are the approximations from lemmas \ref{lemma:bias} and \ref{lemma:variance}. In (e) the blue, purple, yellow and red curves correspond to \textit{FDPE}, \textit{Diffusion GTD2}, \textit{ALG2} and \textit{PD-distIAG}, respectively. \textcolor{black}{In (d) the dotted line is at $\|\widehat{\theta}(H,\lambda=0)-\theta^\star\|^2$ while the dashed line is at $\|\widehat{\theta}(H,\lambda=0.6)-\theta^\star\|^2$.}}
	\label{fig_sim_2}
\end{figure*}
The second experiment relates to the MARL scenario of section \ref{Sec:Marl}. Similarly to \cite{dann}, we consider randomly generated MDP's. We consider a random network of $K=15$ agents. To construct the network, $K$ agents are randomly distributed in a unit square (using a uniform distribution) and agents that are within a distance smaller than $r=0.27$ become neighbors, which results in a sparsely connected network. The resulting network is shown in Figure \ref{fig:networks}. The combination weights are determined according to the Metropolis rule which is given by:
\begin{align}
\ell_{nk}=\begin{cases}
\frac{1}{\max[|\mathcal{N}_n|,|\mathcal{N}_k|]} &n\in \mathcal{N}_k\backslash\{k\}\\
1-\sum\limits_{m\in\mathcal{N}_k\backslash\{k\}}\ell_{mk} &n=k
\end{cases}
\end{align}
The generated MDP's have 50 states and 10 actions. The transition probabilities $p(s'|s,a)$ are zero with 0.98 probability and otherwise are sampled from a uniform distribution from the interval [0,1]. The probabilities are further normalized. With this sampling strategy, we produce realistic MDPs in which from any given state it is only possible to transition to a small subset of the total states.  The rewards $r(s'|s,a)$ are zero with 0.99 probability and otherwise are sampled from a Gaussian distribution with zero mean and standard deviation equal to 10. This sampling strategy is also devised to produce more realistic MDPs where rewards are obtained occasionally in specific state-action pairs. The entries of the target policy $\pi(a|s)$ are sampled from a uniform distribution and subsequently normalized. The transition probabilities and target policy are sampled until Assumption 1 is satisfied. We set the discount factor $\gamma=0.93$ and the length of the feature vectors $M=5$, where one feature is set to 1 and the remaining ones are sampled from a uniform distribution with interval $[0,1]$. We generated $N=2^{18}+H-1$ team transitions. The remaining parameters for the learning algorithm are the following: $H=20$, $\eta=10^{-3}$, $U=I$, $\theta_{\text{p}}=\theta^o(H,\lambda)+\theta_{\text{n}}$ (where $\theta_{\text{n}}$ is a noise vector whose entries are sampled from a uniform distribution with variance equal to $2.5\times10^{-5}$) and $\tau_k=1/15$.

We compare our algorithm with \textit{PD-distIAG} from \cite{wai2018multi}, \textit{Diffusion GTD2} and \textit{ALG2}. In this experiment, we test the on-policy case because \textit{PD-distIAG} only works for this scenario. In Figures \ref{fig:exp2_bias} and \ref{fig:exp2_bias_and_variance}, we show the bias variance trade-off as a function of $\lambda$ (the curves were obtained in the same manner as done in the previous experiment). Results are consistent with the ones obtained in the previous section. In this particular case the most convenient value is approximately $\lambda=0.8$. In Figure \ref{fig:exp2_empirical} we compare the convergence rates of the different algorithms to solve the empirical problem. The hyper-parameters of all algorithms were tunned to maximize performance. The parameters for \textit{FDPE} were: $J=2^{12}$ (i.e., batch size equal to $64$), $\mu_{\theta}=10$ and $\mu_{\omega}=10$. For \textit{PD-distIAG} we used the same batch size, and the step-sizes were $\mu_{\theta}=1.15$ and $\mu_{\omega}=23$. For \textit{Diffusion GTD2} and \textit{ALG2} decaying step-sizes were employed to guarantee convergence. The step-sizes decayed as $\mu(1+0.01e)^{-1}$, where $e$ is the epoch number (we used this decaying rule because it provided the best results). The initial step-sizes were $\mu_{\theta}=15$ and $\mu_{\omega}=7.5$ for \textit{Diffusion GTD2}, and the same values for \textit{ALG2}. In this experiment \textit{FDPE}, \textit{Diffusion GTD2} and \textit{ALG2} show performance in accordance to our theory and the results in the previous section. \textit{PD-distIAG} shows a linear convergence rate in accordance to the theory from \cite{wai2018multi}. However, the rate is slower than the one from our algorithm. \textcolor{black}{Finally, in Figure \ref{fig:msd2} we show the MSD. Again, the faster convergence of our algorithm to its empirical minimizer implies a faster convergence in the MSD plot. In this case, the advantage of the parameter $\lambda$ becomes more noticeable. Note indeed that the minimizer obtained by \textit{FDPE} is approximately one order of magnitude smaller than the minimizer to which the other algorithms will converge (the dotted line).}

\appendices
\section{Proof of Lemma \ref{lemma:bias}}\label{app:bias}
\textcolor{black}{To simplify the notation we refer to $\theta^o(H,\lambda)$ as $\theta^o$. Using \eqref{eq:parc_min} and defining $R=A^{\hspace{-0.5mm}-\hspace{-0.5mm}1}CA^{\hspace{-0.5mm}-\hspace{-0.5mm}T}U$, we can write $\theta^o=(I+\eta R)^{\hspace{-0.5mm}-\hspace{-0.5mm}1}(\eta R\theta_\text{p}+A^{\hspace{-0.5mm}-\hspace{-0.5mm}1}b)$. Using the Jordan decomposition $R=J_R\Lambda_RJ_R^{-1}$ we get:
\begin{align}
	&\theta^o=(I+\eta R)^{-1}A^{-1}b+\eta J_R(I+\eta\Lambda_R)^{-1}\Lambda_RJ_R^{-1}\theta_\text{p}
\end{align}
Note that $\theta^o$ can be approximated by the following expression, which has the advantage that it separates the effect of the regularization term on the bias:
\begin{align}\label{eq:bias_separation}
&\theta^o=(I+\kappa_1\eta)^{-1}A^{-1}b+\kappa_1\eta(1+\kappa_1\eta)^{-1}\theta_{\text{p}}
\end{align}
We proceed to calculate the bias of $\theta^\bullet=A^{-1}b$ with respect to $\theta^\star$. Using \eqref{eq:l_weighted_BE} and \eqref{eq:l_weighted_Projected_Bellman_Equation} we can write:
\begin{align}
	v^\pi&=\Gamma_2r^\pi+\rho_1\Gamma_1v^\pi=\sum_{n=0}^{\infty}(\rho_1\Gamma_1)^n\Gamma_2r^\pi\label{eq:v_id}\\
	X\theta^\bullet&=\Pi\left(\Gamma_2r^\pi+\rho_1\Gamma_1X\theta^\bullet\right)=\Pi\sum_{n=0}^{\infty}(\rho_1\Gamma_1\Pi)^n\Gamma_2r^\pi\\
	X\theta^\star&=\Pi v^\pi=\Pi\sum_{n=0}^{\infty}(\rho_1\Gamma_1)^n\Gamma_2r^\pi
\end{align}		
Combining the expressions from above we get:
\begin{align}
X(\theta^\bullet-\theta^\star)&=\Pi\sum_{n=0}^{\infty}\big((\rho_1\Gamma_1\Pi)^n\Gamma_2r^\pi-v^\pi\big)\label{eq:an_bias1}\\
&=\Pi\sum_{n=1}^{\infty}\big((\rho_1\Gamma_1\Pi)^n-(\rho_1\Gamma_1)^n\big)\Gamma_2r^\pi\label{eq:an_bias2}
\end{align}
Note that if $v^\pi=\Pi v^\pi=X\theta^\star$, then \eqref{eq:v_id} can be rewritten as:
\begin{align}
	v^\pi&=\sum_{n=0}^{\infty}(\rho_1\Gamma_1\Pi)^n\Gamma_2r^\pi\label{eq:an_bias3}
\end{align}
Combining \eqref{eq:an_bias3} with \eqref{eq:an_bias1} we get that if $v^\pi=\Pi v^\pi$, then $\check{\theta}-\theta^\star=0$. Therefore, we can write \eqref{eq:an_bias2} as:
\begin{align}
X(\theta^\bullet-\theta^\star)&=\mathbb{I}(v^\pi\neq\Pi v^\pi)\Pi\sum_{n=1}^{\infty}\rho_1^n\big((\Gamma_1\Pi)^n-\Gamma_1^n\big)\Gamma_2r^\pi\label{eq:an_bias4}
\end{align}
where $\mathbb{I}$ is the indicator function. We now approximate the right stochastic matrix $P^\pi$ by it's steady state limit (given by $(P^\pi)^\infty=\one p^T$, where $\one$ is the all ones vector and $p$ is the vector with the steady state distribution induced by the transition matrix $P^\pi$). Consequently, we get:
\begin{align}
	&\Gamma_1\approx\one p^T\\
	&\Gamma_2r^\pi\approx\frac{1-(\gamma\lambda)^H}{1-\gamma\lambda}\one p^Tr^\pi=\left(\frac{1-\rho_1}{1-\gamma}\right)\one p^Tr^\pi\\
	&\epsilon=p^T\Pi\one\\
	&\sum_{n=1}^{\infty}\rho_1^n\left((\Gamma_1\Pi)^n\hspace{-1mm}-\Gamma_1^n\right)\Gamma_2r^\pi\hspace{-1mm}\approx\hspace{-1mm}\sum_{n=1}^{\infty}\rho_1^n\left((\one p^T\Pi)^n\hspace{-1mm}-\one p^T\right)\Gamma_2r^\pi\\
	&\hspace{2.5cm}=\one\left(\frac{1-(\gamma\lambda)^H}{1-\gamma\lambda}\right)p^Tr^\pi\sum_{n=1}^{\infty}\rho_1^n\left(\epsilon^n-1\right)
\end{align}
Since $p^T\one=1$ and $\Pi$ is a projection matrix (and therefore its eigenvalues are either $0$ or $1$) we get that $|\epsilon|\leq1$ and therefore we can write: 
\begin{align}
	&\sum_{n=1}^{\infty}\rho_1^n\left(\epsilon^n-1\right)=\frac{\rho_1\epsilon}{1-\rho_1\epsilon}-\frac{\rho_1}{1-\rho_1}=\frac{\rho_1(\epsilon-1)}{(1-\rho_1\epsilon)(1-\rho_1)}\\
	&\sum_{n=1}^{\infty}\rho_1^n\left((\Gamma_1\Pi)^n\hspace{-1mm}-\Gamma_1^n\right)\Gamma_2r^\pi\hspace{-1mm}\approx\one p^T\hspace{-0.5mm}r^\pi\hspace{-1mm}=\hspace{-0.5mm}\mathcal{O}\hspace{-0.5mm}\bigg(\hspace{-1mm}\frac{\rho_1(\epsilon-1)}{(1-\rho_1\epsilon)(1-\gamma)}\hspace{-1mm}\bigg)\label{eq:an_bias5}
\end{align}
Therefore, combining \eqref{eq:bias_separation}, \eqref{eq:an_bias4} and \eqref{eq:an_bias5} and grouping constants we can finally write:
\begin{align}\label{eq:app_bias}
\|\hspace{-0.5mm}\theta^o\hspace{-1mm}-\theta^\star\hspace{-0.5mm}\|^2&\hspace{-1mm}\approx\hspace{-1mm}\left(\hspace{-1mm}\mathbb{I}\big(v^\pi\hspace{-1.5mm}\neq\hspace{-0.5mm}\Pi v^\pi\hspace{-0.5mm}\big)\hspace{-0.5mm}\frac{\kappa_2\rho_1}{(\hspace{-0.5mm}1\hspace{-0.5mm}+\hspace{-0.5mm}\kappa_{\hspace{-0.5mm}1}\hspace{-0.2mm}\eta)(\kappa_3\hspace{-0.5mm}-\hspace{-0.5mm}\rho_1\hspace{-0.5mm})}+\frac{\kappa_1\eta\|\theta_{\text{p}}-\theta^\star\|}{1+\kappa_1\eta}\right)^{\hspace{-1mm}2}
\end{align}}

\section{Proof of Lemma \ref{lemma:variance}}\label{app:variance}
\textcolor{black}{Since the regularization term of \eqref{eq:bias_separation} is not subject to variance, to calculate the variance in the estimate of $\widehat{\theta^o}$ we need to calculate the variance of $\widehat{\theta^\bullet}=\widehat{A}^{-1}\hat{b}$ as follows:
\begin{align}
\Ex\|\widehat{\theta^\bullet}-\theta^\bullet\|^2&=\Ex\|\widehat{A}^{-1}\hat{b}-\theta^\bullet\|^2=\Ex\left\|\widehat{A}^{-1}\big(\hat{b}-\widehat{A}\theta^\bullet\big)\right\|^2
\end{align}
where we are using the squared Euclidean norm. Due to the Rayleigh-Ritz' Theorem we have $\sigma_\textrm{min}(\widehat{A}^{-1})\leq\|\widehat{A}^{-1}\|\leq\sigma_\textrm{max}(\widehat{A}^{-1})$ and since $\sigma_\textrm{max}(\widehat{A})=\sigma_\textrm{min}(\widehat{A}^{-1})$ and $\sigma_\textrm{max}(\widehat{A}^{-1})=\sigma_\textrm{min}(\widehat{A})$ we can write:
\begin{align}
\Ex\|\widehat{\theta^\bullet}-\theta^\bullet\|^2&\leq\Ex\sigma_\textrm{min}^2(\widehat{A})\left\|\hat{b}-\widehat{A}\theta^\bullet\right\|^2\\
\Ex\|\widehat{\theta^\bullet}-\theta^\bullet\|^2&\geq\Ex\sigma_\textrm{max}^2(\widehat{A})\left\|\hat{b}-\widehat{A}\theta^\bullet\right\|^2
\end{align}
Now using Proposition 9.6.8 of \cite{bernstein2009matrix} and the fact that $\widehat{A}=\widehat{C}-X^TD\widehat{\Gamma}_1X$ we can write:
\begin{align}
	&\sigma_\textrm{min}(\widehat{A})=\lambda_\textrm{min}(\widehat{C})\pm\rho_1\sigma_\textrm{max}(\widehat{E})=\lambda_\textrm{min}(\widehat{C})+\mathcal{O}(\rho_1)\\
	&\sigma_\textrm{max}(\widehat{A})=\lambda_\textrm{max}(\widehat{C})\pm\rho_1\sigma_\textrm{max}(\widehat{E})=\lambda_\textrm{max}(\widehat{C})+\mathcal{O}(\rho_1)\\
	&\widehat{E}=X^TD\widehat{\Gamma}_1X
\end{align}
Using the above results and the intermediate value theorem we can write:
\begin{align}
\Ex\|\widehat{\theta^\bullet}-\theta^\bullet\|^2&=\left(\epsilon_2+\mathcal{O}(\rho_1)\right)^2\big\|\hat{b}-\widehat{A}\theta^\bullet\big\|^2\label{eq:app_var_sing}
\end{align}
for some constant $\epsilon_2$. For $\Ex\big\|\hat{b}-\widehat{A}\theta^\bullet\big\|^2$ we can write:
\begin{align}
	\Ex\big\|\hat{b}-\widehat{A}\theta^\bullet\big\|^2&=\Ex\left\|\sum_{t=1}^{N-H}\frac{\hat{b}_t-\widehat{A}_t\theta^\bullet}{N-H}\right\|^2\stackrel{(b)}{=}\frac{\Ex\|\hat{b}_t-\widehat{A}_t\theta^\bullet\|^2}{N-H}\label{eq:app_var_ind}
\end{align}
where in $(b)$ we assumed that the total amount of data collected is significantly larger than the mixing rate of the Markov Chain, and therefore the terms $\hat{b}_t-\widehat{A}_t\theta^\bullet$ corresponding to different times can be considered independent of each other. Note that this is a standard assumption (similar to the one stated in Assumption \ref{assumption:data}) in the sense that in practice it demands that enough data is gathered so that the effect of the policy on every state can be accurately estimated. To approximate $\Ex\big\|\hat{b}_t-\widehat{A}_t\theta^\bullet\big\|^2$ we proceed as follows:
\begin{align}
&\Ex\big\|\hat{b}_t-\widehat{A}_t\theta^\bullet\big\|^2=\Ex\|x_t\|^2\bigg((\gamma\lambda)^{H-1}(r_{t+H-1}+\gamma x_{t+H}^T\theta^\bullet)\nonumber\\
&+\sum_{n=0}^{H-2}(\gamma\lambda)^n\left(r_{t+n}+\gamma(1-\lambda)x_{t+n+1}^T\theta^\bullet\right)-x_t^T\theta^\bullet\bigg)^2\\
&\stackrel{(a)}{=}\Ex\|x_t\|^2\left(\sum_{n=0}^{H-1}(\gamma\lambda)^n\left(x_{t+n}^T\theta^\bullet-r_{t+n}-\gamma x_{t+n+1}^T\theta^\bullet\right)\hspace{-0.5mm}\right)^{\hspace{-0.5mm}2}\\
&\stackrel{(b)}{\approx}\hspace{-0.5mm}\Ex\hspace{-0.5mm}\|x_t\|^2\left(\sum_{n=0}^{H-1}(\gamma\lambda)^n\left(v(s_{t+n})-r_{t+n}\hspace{-0.5mm}-\gamma v(s_{t+n+1})\right)\hspace{-1mm}\right)^{\hspace{-1.2mm}2}\\
&\stackrel{(c)}{=}\Ex\|x_t\|^2\sum_{n=0}^{H-1}(\gamma\lambda)^{2n}\left(v(s_{t+n})-r_{t+n}-\gamma v(s_{t+n+1})\right)^2\label{eq:app_var}
\end{align}
where in $(a)$ we simply reorganized the terms, in $(b)$ we used $x_{t}^T\theta^\bullet\approx v(s_t)$ and in $(c)$ we used the fact that due to the Markov property each term $v(s_{t+n})-r_{t+n}-\gamma v(s_{t+n+1})$ is conditionally independent from all previous terms (conditioned on $s_{t+n}$) and that by definition $\Ex\left[v(s_{t+n})-r_{t+n}-\gamma v(s_{t+n+1})|s_{t+n}\right]=0$. Now we lower and upper bound \eqref{eq:app_var} as follows:
\begin{align}
	&m\sum_{n=0}^{H-1}(\gamma\lambda)^{2n}\leq\Ex\left\|\hat{b}_t-\widehat{A}_t\theta^\bullet\right\|^2\leq M\sum_{n=0}^{H-1}(\gamma\lambda)^{2n}\\
	&M=\max_{t,n}\|x_t\|^2\left(v(s_{t+n})-r_{t+n}-\gamma v(s_{t+n+1})\right)^2\\
	&m=\min_{t,n}\|x_t\|^2\left(v(s_{t+n})-r_{t+n}-\gamma v(s_{t+n+1})\right)^2
\end{align}
Combining the above result with the intermediate value theorem we get:
\begin{align}\label{eq:app_var_individual}
&\Ex\big\|\hat{b}_t-\widehat{A}_t\theta^\bullet\big\|^2\approx\epsilon_3\left(\frac{1-(\gamma\lambda)^{2H}}{1-(\gamma\lambda)^2}\right)
\end{align}
for some constant $\epsilon_3$. Finally, combining \eqref{eq:bias_separation}, \eqref{eq:app_var_individual}, \eqref{eq:app_var_sing} and \eqref{eq:app_var_ind} and assuming $N>>H$ (which in practice should be the case) we get:
\begin{align}
&\Ex\big\|\widehat{\theta^o}-\theta^o\big\|^2\approx\frac{\kappa_4}{(1+\kappa_1\eta)^2(N-H)}\left(\frac{1-(\gamma\lambda)^{2H}}{1-(\gamma\lambda)^2}\right)
\end{align}
where $\kappa_4=\epsilon_2\epsilon_3$.}

\section{Proof Lemma \ref{lemma:expectations}}\label{app:expectations}
To write $A$, $b$ and $C$ as expectations we use definitions \eqref{eq:definition_AbC} and \eqref{eq:definitions_ggr}, and then expand the matrix operations.
\begin{align} 
&A=X^TD(I-\rho_1\Gamma_1)X\nonumber\\
&=X^TD\left(I-\gamma(1-\lambda)P^{\pi}\sum_{n=0}^{H-1}(\gamma\lambda P^{\pi})^n+(\gamma\lambda P^{\pi})^H\right)X\nonumber\\
&=\hspace{-1mm}\sum_{s_{t}\in\mathcal{S}}\hspace{-1mm}d^{\phi}(s_t)x_{s_t}\hspace{-1mm}\bigg(x_{s_t}\hspace{-1mm}-\gamma(1-\lambda)\hspace{-1mm}\sum_{n=0}^{H-1}(\gamma\lambda)^n\hspace{-4mm}\sum_{s_{t+1+n}\in\mathcal{S}}\hspace{-3mm}p_{s_t,s_{t+1+n}}^\pi x_{s_{t+1+n}}\nonumber\\
&+\hspace{-0.6mm}(\gamma\lambda)^H\hspace{-3mm}\sum_{s_{t+H}\in\mathcal{S}}\hspace{-2mm}p_{s_t,s_{t+H}}^\pi x_{s_{t+H}}\bigg)^T\nonumber\\
&=\Ex_{d^\phi,\mathcal{P},\pi}\Bigg[\hspace{-0.5mm}\bm{x}_{t}\hspace{-0.5mm}\bigg(\hspace{-1mm}\bm{x}_{t}-\gamma(1-\lambda)\hspace{-0.5mm}\sum_{n=0}^{H-1}\hspace{-0.5mm}(\hspace{-0.2mm}\gamma\lambda\hspace{-0.2mm})^n\bm{x}_{t+n+1}-(\hspace{-0.2mm}\gamma\lambda\hspace{-0.2mm})^H\bm{x}_{t+H}\hspace{-1mm}\bigg)^{\hspace{-1.2mm}T}\Bigg]\\
&b=X^TD\sum_{n=0}^{H-1}(\gamma\lambda P^{\pi})^nr^\pi\nonumber\\
&=\hspace{-1mm}\sum_{s_{t}\in\mathcal{S}}\hspace{-1mm}d^{\phi}(s_t)x_{s_t}\hspace{-0.5mm}\sum_{a\in\mathcal{L}}\sum_{\hspace{2mm}s_{t+1}\in\mathcal{S}}\hspace{-2mm}\pi(a|s_{t})\mathcal{P}(s_{t+1}|s_{t},a) r(s_{t},a,s_{t+1})\nonumber\\
&+\sum_{s_{t}\in\mathcal{S}}d^{\phi}(s_t)x_{s_t}\hspace{-1mm}\sum_{n=1}^{H-1}\hspace{-0.8mm}(\gamma\lambda)^np_{s_t,s_{t+n}}^\pi\hspace{-3mm}\sum_{s_{t+n}\in\mathcal{S}}\sum_{\hspace{2mm}a\in\mathcal{L}}\cdot\nonumber\\
&\sum_{s_{t+n+1}\in\mathcal{S}}\hspace{-4mm}\pi(a|s_{t+n})\mathcal{P}(s_{t+n+1}|s_{t+n},a)r(s_{t+n},\hspace{-0.2mm}a,\hspace{-0.2mm}s_{t+n+1})\nonumber\\
&=\Ex_{d^\phi,\mathcal{P},\pi}\bigg(\bm{x}_{t}\sum_{n=0}^{H-1}(\gamma\lambda)^n\bm{r}_{t+n}\bigg)\\
&C=X^TDX=\sum_{s_{t}\in\mathcal{S}}d^{\phi}(s_t)x_{s_t}x_{s_t}=\Ex_{d^\phi}\left[\bm{x}_{t}\bm{x}_{t}^T\right]
\end{align}
\hfill\qed
\section{Proof of Lemma \ref{lemma:estimates}}\label{app:empirical}
We start with \eqref{eq:A_ex}:
\begin{align}
&A=\Ex_{d^\phi\hspace{-1mm},\mathcal{P},\pi}\hspace{-0.8mm}\Bigg[\hspace{-0.5mm}\bm{x}_{t}\hspace{-0.5mm}\bigg(\hspace{-1mm}\bm{x}_{t}-\gamma(1-\lambda)\hspace{-1mm}\sum_{h=0}^{H-1}\hspace{-1mm}(\hspace{-0.2mm}\gamma\lambda\hspace{-0.2mm})^h\bm{x}_{t+h+1}-(\hspace{-0.2mm}\gamma\lambda\hspace{-0.2mm})^H\hspace{-0.5mm}\bm{x}_{t+H}\bigg)^{\hspace{-1.2mm}T}\Bigg]\nonumber\\
&\stackrel{(a)}{=}(1-\lambda)\sum_{h=0}^{H-1}\lambda^h\Ex_{d^\phi,\mathcal{P},\pi}\Big(\bm{x}_t\big(\bm{x}_{t}-\gamma^{h+1}\bm{x}_{t+h+1}\big)^{\hspace{-0.7mm}T}\Big)\nonumber\\
&+\lambda^H\Ex_{d^\phi,\mathcal{P},\pi}\left(\bm{x}_{t}\left(\bm{x}_{t}-\gamma^{H}\bm{x}_{t+H}\right)^T\right)\nonumber\\
&\stackrel{(b)}{=}(1-\lambda)\sum_{h=0}^{H-1}\lambda^h\Ex_{d^\phi,\mathcal{P},\phi,f_r}\Big[\hspace{-0.2mm}\bm{\xi}_{t,t+h+1}\bm{x}_{t}\big(\bm{x}_{t}-\gamma^{h+1}\bm{x}_{t+h+1}\big)^{\hspace{-0.2mm}T}\hspace{-0.2mm}\Big]\nonumber\\
&+\lambda^H\Ex_{d^\phi,\mathcal{P},\phi,f_r}\hspace{-0.5mm}\bigg[\hspace{-0.7mm}\bm{\xi}_{t,t+H}\bm{x}_{t}\hspace{-0.6mm}\big(\bm{x}_{t}-\gamma^{H}\bm{x}_{t+H}\big)^{\hspace{-0.5mm}T}\hspace{-0.5mm}\bigg]\nonumber\\
&\stackrel{(c)}{=}\Ex_{d^\phi,\mathcal{P},\phi,f_r}\Bigg[\bm{x}_{t}\Bigg(\left((1-\lambda)\sum_{h=0}^{H-1}\lambda^h\bm{\xi}_{t,t+h+1}+\lambda^H\bm{\xi}_{t,t+H}\right)\bm{x}_{t}\nonumber\\
&-\gamma(1-\lambda)\sum_{h=0}^{H-1}(\hspace{-0.2mm}\gamma\lambda\hspace{-0.2mm})^h\bm{\xi}_{t,t+h+1}\bm{x}_{t+h+1}-(\hspace{-0.2mm}\gamma\lambda\hspace{-0.2mm})^H\bm{\xi}_{t,t+H}\bm{x}_{t+H}\Bigg)^{\hspace{-1.5mm}T}\Bigg]\label{eq:A_est_off}
\end{align}
where in ($a$) and ($c$) we simply rearranged terms. And in ($b$) we introduced the importance sampling weights corresponding to the trajectory that started at some state $s_t$ and took $h$ steps before arriving at some other state $s_{t+h}$, which are given by:
\begin{align}
	\xi_{t,t+h}=\prod_{j=t}^{t+h-1}\hspace{-2mm}\pi(a_j|s_j)/\phi(a_j|s_j)
\end{align}
Hence, by removing the expectation in \eqref{eq:A_est_off}, we can get the following estimate of $A$ using a single $H$-step trajectory:
\begin{align}
	\widehat{A}_n&=x_{n}\bigg(\rho_{n,0}^{H}x_{n}-\gamma(1-\lambda)\sum_{h=0}^{H-1}(\gamma\lambda)^h\xi_{n,n+h+1}x_{n+h+1}\nonumber\\
	&-(\gamma\lambda)^H\xi_{n,n+H}x_{n+H}\bigg)^T
\end{align}
where we defined:
\begin{align}
\rho_{t,n}^H&=\bigg((1-\lambda)\sum_{h=n}^{H-1}\lambda^{h-n}\bm{\xi}_{t,t+h+1}+\lambda^{H-n}\bm{\xi}_{t,t+H}\bigg)
\end{align}
Following similar same steps to estimate $b$, we get:
\begin{align}
\widehat{b}_n&=x_{n}\sum_{h=0}^{H-1}(\gamma\lambda)^h\rho_{n,h}^{H}r_{n+h}
\end{align}
The estimate for $C$ does not require importance sampling because its expectation only depends on $d^\phi$. Hence, the sample estimate is given by $\widehat{C}_n=x_nx_n^T$ which completes the proof.\hfill\qed

\textcolor{black}{\section{Proof of Lemma \ref{lemma:optimality}}\label{app:optimality}
From Remark \ref{remark:constraints} we know that \eqref{eq:fixed_point_2} implies that the consensus constraints are satisfied by $\zeta^o$ and hence for some $\theta^o$ and $\omega^o$:
\begin{align}\label{eq:rel_1}
\zeta^o=\one_K\otimes[{\theta^o}^T,\upsilon^{-1/2}{\omega^o}^T]^T
\end{align}
Multiplying \eqref{eq:fixed_point_1} by $\mathcal{I}^T\define\one_K^T\otimes I_{2M}$ we get the following condition:
\begin{align}\label{eq:rel_2}
0=\mathcal{I}^T\left(\bar{\mathcal{L}}\left(\mathcal{G}\zeta^o-p\right)+\mu^{-1}\mathcal{V}\mathcal{Y}^o\right)\stackrel{(a)}{=}\mathcal{I}^T\left(\mathcal{G}\zeta^o-p\right)
\end{align}
where in $(a)$ we used $\mathcal{I}^T\bar{\mathcal{L}}=\mathcal{I}^T$ and $\mathcal{I}^T\mathcal{V}=0$. Combining \eqref{eq:rel_1} and \eqref{eq:rel_2} we get that $\theta^o$ and $\omega^o$ must satisfy:
\begin{align}
\bigg[\begin{array}{cc}
\eta\widehat{U} & -\widehat{A}^T\\
\widehat{A} & \widehat{C}\end{array}\bigg]\ba{c}\theta^o\\\omega^o\ea=\bigg[\begin{array}{c}
\eta\widehat{U}\theta_{\text{p}}\\b\end{array}\bigg]
\end{align}
since the left hand-side matrix is invertible (because $\widehat{A}$ and $\widehat{C}$ are invertible) $\zeta^o$ is unique. Left multiplying by the inverse of such matrix we get \eqref{eq:saddle_point}. Therefore, we conclude that $\zeta^o=\one_K\otimes[\widehat{\theta}^{oT} \widehat{\omega}^{oT}]^T$. The fact that $\mathcal{I}^T\left(\mathcal{G}\zeta^o-p\right)=0$ implies that $\mathcal{G}\zeta^o-p$ lies in the range space of $\mathcal{V}$ (because it is orthogonal to $\mathcal{I}$ which lies in the null space of $\mathcal{V}$), which in turn implies that a vector $\mathcal{Y}$ that satisfies \eqref{eq:fixed_point_1} exists. We conclude the proof by showing that there is unique $\mathcal{Y}^o$ that satisfies \eqref{eq:fixed_point_1} and lies in the range space of $\mathcal{V}$. We do this by contradiction. Assume that there are two fixed points $(\zeta^o,\mathcal{Y}_1=\mathcal{V}a_1)$ and $(\zeta^o,\mathcal{Y}_2=\mathcal{V}a_2)$ such that $\mathcal{Y}_1\neq\mathcal{Y}_2$, applying both to \eqref{eq:fixed_point_1} and subtracting the resulting equations we get:
\begin{align}
\mathcal{V}^2(a_1-a_2)&=0\iff\mathcal{V}(a_1-a_2)=0\implies\mathcal{Y}_1=\mathcal{Y}_2
\end{align}
which is a contradiction. This concludes the proof.}

\textcolor{black}{\section{Proof of Lemma \ref{lemma:coord_transform}}\label{app:coord_transform}
We start by doing the following matrix decomposition:
\begin{align}
&\bigg[\hspace{-2mm}\begin{array}{cc}
\bar{L} & \hspace{-2mm}-V\\
V\bar{L} & \hspace{-2mm}\bar{L}\end{array}\hspace{-2mm}\bigg]\hspace{-1mm}\stackrel{(a)}{=}\hspace{-1mm}\Bigg[\hspace{-2mm}\begin{array}{cc}H(\frac{I_K+\Lambda}{2})H^T &\hspace{-2mm}-H\left(\frac{I_K-\Lambda}{2}\right)^{\frac{1}{2}}H^T\\
H\left(\frac{I_K+\Lambda}{2}\right)\left(\frac{I_K-\Lambda}{2}\right)^{\frac{1}{2}}H^T & \hspace{-2mm}H(\frac{I_K+\Lambda}{2})H^T\end{array}\hspace{-2mm}\Bigg]\nonumber\\
&=\Bigg[\hspace{-2mm}\begin{array}{cc}H &\hspace{-2mm}0\\
0 & \hspace{-2mm}H\end{array}\hspace{-2mm}\Bigg]\hspace{-1mm}\Bigg[\hspace{-2mm}\begin{array}{cc}
\frac{I_K+\Lambda}{2} &\hspace{-3mm}-\left(\frac{I_K-\Lambda}{2}\right)^{\frac{1}{2}}\\
\left(\frac{I_K+\Lambda}{2}\right)\left(\frac{I_K-\Lambda}{2}\right)^{\frac{1}{2}} & \hspace{-3mm}\frac{I_K+\Lambda}{2}\end{array}\hspace{-2mm}\Bigg]
\hspace{-1mm}\Bigg[\hspace{-2mm}\begin{array}{cc}H &\hspace{-2mm} 0\\
0 & \hspace{-2mm}H\end{array}\hspace{-2mm}\Bigg]^T\\
&\stackrel{(b)}{=}H_e\text{diag}\{E_k\}_{k=1}^KH_e^T\label{sup_eq:block_diag}
\end{align}
where in $(a)$ we used Remark \ref{remark:L} and the definition of $V$ \eqref{eq:defs} and in $(b)$ we rearranged the order of the eigenvectors and their corresponding eigenvalues through permutations to get the following matrices:
\begin{align}
E_k&\hspace{-0.6mm}=\hspace{-1mm}\Bigg[\hspace{-2mm}\begin{array}{cc}\frac{1+\lambda_k}{2} & -\left(\frac{1-\lambda_k}{2}\right)^{\frac{1}{2}}\\
\left(\frac{1+\lambda_k}{2}\right)\left(\frac{1-\lambda_k}{2}\right)^{\frac{1}{2}} & \frac{1+\lambda_k}{2}\end{array}\hspace{-2mm}\Bigg]\\
H_e&\hspace{-0.6mm}=\hspace{-1mm}\bigg[\hspace{-2mm}\begin{array}{ccccccc}
K^{-\frac{1}{2}}\one_{K} & 0 & h_2 & 0 & \cdots& h_K & 0\\
0 & K^{-\frac{1}{2}}\one_{K} & 0 & h_2 & \cdots & 0 & h_K\end{array}\hspace{-2mm}\bigg]
\end{align}
where $\lambda_k$ is the $k$-th eigenvalue of $L$ and $h_k$ its corresponding eigenvector. Note that $E_1=I$. Moreover, the matrices $E_k$ have two distinct eigenvalues given by $(1+\lambda_k\pm\sqrt{1-\lambda_k^2})/2$ and therefore we can diagonalize $E_k$ using its Jordan Canonical Form as $D_k=Z_k^{-1}E_kZ_k$ for some $Z_k$. Therefore, defining $Z=\text{diag}\{Z_k\}_{k=1}^K$ (where $Z_1=I$) we arrive at the following diagonalization:
\begin{align}
	&\bigg[\hspace{-2mm}\begin{array}{cc}
	\bar{L} & \hspace{-2mm}-V\\
	V\bar{L} & \hspace{-2mm}\bar{L}\end{array}\hspace{-2mm}\bigg]=H_eZ\text{diag}\{I,D_2,\cdots,D_K\}Z^{-1}H_e^T
\end{align}
We can extend this diagonalization to the network-wide matrix:
\begin{align}
&\bigg[\hspace{-2mm}\begin{array}{cc}
\mathcal{\bar{L}} & \hspace{-2mm}-\mathcal{V}\\
\mathcal{V}\mathcal{\bar{L}} & \hspace{-2mm}\mathcal{\bar{L}}\end{array}\hspace{-2mm}\bigg]\hspace{-1mm}=\hspace{-1mm}\bigg[\hspace{-2mm}\begin{array}{cc}
\bar{L} & \hspace{-2mm}-V\\
V\bar{L} & \hspace{-2mm}\bar{L}\end{array}\hspace{-2mm}\bigg]\otimes I=\underbrace{(H_eZ\otimes I)}_{\define\mathcal{H}}\mathcal{D}\underbrace{(Z^{-\hspace{-0.5mm}1}\hspace{-0.5mm}H_e^T\hspace{-0.5mm}\otimes I)}_{=\mathcal{H}^{-1}}\\
&\mathcal{D}=\text{diag}\{I,\mathcal{D}_1\},\hspace{2mm}\mathcal{D}_1=\text{diag}\{D_k\}_{k=2}^K\otimes I
\end{align}
Expanding $\mathcal{H}$ and $\mathcal{H}^{-1}$ we get:
\begin{align}
&\mathcal{H}\hspace{-1mm}=\hspace{-1mm}\bigg[\hspace{-2mm}\begin{array}{ccc}K^{-\frac{1}{2}}\mathcal{I} & \hspace{-4mm}0 & \hspace{-1mm}\mathcal{H}_{u}\\ 0 & \hspace{-4mm}K^{-\frac{1}{2}}\mathcal{I} & \hspace{-1mm}\mathcal{H}_{d}\end{array}\hspace{-2mm}\bigg],\hspace{2mm}
\mathcal{H}^{\hspace{-0.5mm}-\hspace{-0.5mm}1}\hspace{-1mm}=\hspace{-1mm}\bigg[\hspace{-2mm}\begin{array}{ccc}K^{-\frac{1}{2}}\mathcal{I} & \hspace{-4mm}0 & \hspace{-1mm}\mathcal{H}_{l}\\ 0 & \hspace{-4mm}K^{-\frac{1}{2}}\mathcal{I} & \hspace{-1mm}\mathcal{H}_{r}\end{array}\hspace{-2mm}\bigg]^T
\end{align}
where $\mathcal{H}_{l},\mathcal{H}_{r},\mathcal{H}_{u},\mathcal{H}_{d}\in\mathds{R}^{2KM\times4KM-4M}$ are some constant matrices. Now we use this decomposition to transform recursion \eqref{eq:error_rec_1}:
\begin{align}\label{sup_eq:coordinate_transform}
\underbrace{\mathcal{H}^{-1}\bigg[\hspace{-2mm}\begin{array}{c}
\widetilde{\zeta}_{i+1}\\
\widetilde{\mathcal{Y}}_{i+1}
\end{array}\hspace{-2mm}\bigg]}_{\define[\bar{x}_{i+1}^T,\tilde{x}_{i+1}^T,\hat{x}_{i+1}^T]^T}\hspace{-4mm}=&
\bigg(\mathcal{D}-
\mu\mathcal{H}^{-1}\underbrace{\bigg[\hspace{-2mm}\begin{array}{cc}
	\bar{\mathcal{L}}\mathcal{G}& 0\\
	\mathcal{V}\bar{\mathcal{L}}\mathcal{G} & 0 \end{array}\hspace{-2mm}\bigg]}_{\define\mathcal{T}}\mathcal{H}\bigg)
\mathcal{H}^{-1}\bigg[\hspace{-2mm}\begin{array}{c}
\widetilde{\zeta}_{i}\\
\widetilde{\mathcal{Y}}_{i}\end{array}\hspace{-2mm}\bigg]
\end{align}
where $\bar{x}_{i}=K^{\frac{1}{2}}\mathcal{I}^T\widetilde{\zeta}_{i}$ and $\hat{x}_{i}=\mathcal{H}_l^T\widetilde{\zeta}_{i}+\mathcal{H}_r^T\widetilde{\mathcal{Y}}_{i}$. Using $\mathcal{H}$ we can further write $\widetilde{\zeta}_{i}=K^{\frac{1}{2}}\mathcal{I}\bar{x}_{i}+\mathcal{H}_u\hat{x}_{i}$ and $\widetilde{\mathcal{Y}}_{i}=\mathcal{H}_d\hat{x}_{i}$, from which we get the following bounds:
\begin{align}
\|\widetilde{\zeta}_{i}\|^2&\leq \|\bar{x}_{i}\|^2+\|\mathcal{H}_{u}\|^2\|\hat{x}_{i}\|^2,\hspace{5mm}\|\widetilde{\mathcal{Y}}_{i}\|^2\leq\|\mathcal{H}_{d}\|^2\|\hat{x}_{i}\|^2
\end{align}
Expanding $\mathcal{H}^{-1}\mathcal{T}\mathcal{H}$ we get:
\begin{align}
\mathcal{H}^{-1}\mathcal{T}\mathcal{H}&\hspace{-1mm}=\hspace{-1mm}\Bigg[\hspace{-2mm}\begin{array}{ccc}
K^{-1}\mathcal{I}^T\bar{\mathcal{L}}\mathcal{G}\mathcal{I} & 0 & 
K^{-\frac{1}{2}}\mathcal{I}^T\bar{\mathcal{L}}\mathcal{G}\mathcal{H}_{u}\\
K^{-\frac{1}{2}}\mathcal{I}^T\mathcal{V}\bar{\mathcal{L}}\mathcal{G}\mathcal{I} & 0 & 
K^{-\frac{1}{2}}\mathcal{I}^T\mathcal{V}\bar{\mathcal{L}}\mathcal{G}\mathcal{H}_{u}\\
K^{-\frac{1}{2}}(\mathcal{H}_{l}^T+\mathcal{H}_{r}^T\mathcal{V})\bar{\mathcal{L}}\mathcal{G}\mathcal{I} & 0 & (\mathcal{H}_{l}^T+\mathcal{H}_{r}^T\mathcal{V})\bar{\mathcal{L}}\mathcal{G}\mathcal{H}_{u}\end{array}\hspace{-2mm}\Bigg]
\end{align}
Since $\mathcal{I}^T\mathcal{V}=0$ all elements in the mid row in the above equation are equal to zero and therefore $\tilde{x}_i=\tilde{x}_0$. Also since all the elements in the mid column are equal to zero, it turns out that $\bar{x}_i$ and $\widehat{x}_i$ are independent of $\tilde{x}_0$. Hence, we can disregard variable $\tilde{x}_i$ and arrive at the desired recursion, which completes the proof.}

\textcolor{black}{\section{Proof of Lemma \ref{lemma:squared_recursion}}\label{app:squared_recursion}
We start by expanding the top recursion of \eqref{eq:diag_rec} and take the squared norm on both sides of the equation to get:
\begin{align}
&\|\check{x}_{i+1}\|^2=\|\left(I_{2M}-\mu \Lambda_GK^{-1}\right)\check{x}_{i}-\mu K^{-\frac{1}{2}}Z^{-1}\mathcal{I}^T\mathcal{G}\mathcal{H}_{u}\hat{x}_{i}\|^2\nonumber\\
&\|\check{x}_{i+1}\|^2=\left\|\frac{t}{t}\left(I_{2M}-\mu \Lambda_GK^{-1}\right)\check{x}_{i}-\frac{(1-t)\mu Z^{-1}\mathcal{I}^T\mathcal{G}\mathcal{H}_{u}}{(1-t)\sqrt{K}}\hat{x}_{i}\right\|^2\nonumber\\
&\|\check{x}_{i+1}\|^2\hspace{-1mm}\stackrel{(a)}{\leq}t^{-1}\big\|\big(I_{2M}\hspace{-0.5mm}-\hspace{-0.5mm}\mu \Lambda_GK^{-1}\big)\check{x}_{i}\big\|^2+\mu^2\frac{\|Z^{-1}\mathcal{I}^T\hspace{-1mm}\mathcal{G}\mathcal{H}_{u}\hat{x}_{i}\|^2}{(1-t)K}\nonumber\\
&\|\check{x}_{i+1}\|^2\hspace{-1mm}\stackrel{(b)}{\leq}\hspace{-1mm}t^{\hspace{-0.5mm}-\hspace{-0.5mm}1}\big\|I_{2M}\hspace{-0.5mm}-\hspace{-0.5mm}\mu \Lambda_GK^{\hspace{-0.5mm}-\hspace{-0.5mm}1}\big\|^2\|\check{x}_{i}\|^{\hspace{-0.5mm}2}+\frac{\mu^2\|Z^{\hspace{-0.5mm}-\hspace{-0.5mm}1}\mathcal{I}^T\hspace{-1mm}\mathcal{G}\mathcal{H}_{u}\hspace{-0.5mm}\|^2}{(1-t)K}\|\hat{x}_{i}^e\|^2
\end{align}
where we defined a scalar $t\in(0,1)$, in $(a)$ we used Jensen's inequality and in $(b)$ we used the Cauchy-Schwarz inequality. If $\mu<K/\rho(\Lambda_G)$ (where $\rho(\Lambda_G)$ is the spectral radius of $\Lambda_G$) then $\big\|I_{2M}\hspace{-0.5mm}-\hspace{-0.5mm}\mu \Lambda_GK^{-1}\big\|<1$ and hence setting $t=\big\|I_{2M}\hspace{-0.5mm}-\hspace{-0.5mm}\mu \Lambda_GK^{-1}\big\|$ we get:
\begin{align}
\|\check{x}_{i+1}\|^2&\leq\rho(I_{2M}-\mu K^{-1}\Lambda_G)\|\check{x}_{i}\|^2+\mu a_2\|\hat{x}_{i}\|^2\label{eq:in_1}
\end{align}
where we defined $a_2=\|Z^{-1}\mathcal{I}^T\hspace{-1mm}\mathcal{G}\mathcal{H}_{u}\|^2/\lambda_{\text{min}}(\Lambda_G)$. We now repeat the procedure for the bottom recursion of \eqref{eq:diag_rec}:
\begin{align}
	\|\hat{x}_{i+1}\|^2\hspace{-1mm}&\stackrel{(d)}{\leq}\hspace{-1mm}\frac{2\mu^2}{(1-t_2)K}\|(\mathcal{H}_{l}^T\hspace{-1mm}+\hspace{-0.5mm}\mathcal{H}_{r}^T\mathcal{V})\bar{\mathcal{L}}\mathcal{G}\mathcal{I}Z\|^{\hspace{-0.3mm}2}\hspace{-0.5mm}\|\check{x}_{i}^e\|^2\hspace{-1mm}+\hspace{-0.5mm}\frac{\|\mathcal{D}_1\|^2}{t_2}\|\hat{x}_{i}^e\|^2\nonumber\\
	&\hspace{5mm}+2(1-t_2)^{-1}\mu^2\|(\mathcal{H}_{l}^T+\mathcal{H}_{r}^T\mathcal{V})\bar{\mathcal{L}}\mathcal{G}\mathcal{H}_{u}\|^2\|\hat{x}_{i}^e\|^2\nonumber\\
	&\stackrel{(f)}{\leq}\mu^2a_3\|\check{x}_{i}^e\|^2+\sqrt{\lambda_2(\bar{L})}\|\hat{x}_{i}^e\|^2+\mu^2a_4\|\hat{x}_{i}^e\|^2\label{eq:in_2}
\end{align}
where in $(d)$ we used Jensen's and the Cauchy-Schwarz inequalities and we also introduced $t_2\in(0,1)$ and in $(f)$ we chose $t_2=\|\mathcal{D}_1\|=\sqrt{\lambda_2(\bar{L})}$. We further defined:
\begin{align}
	a_3&=\frac{2\|(\mathcal{H}_{l}^T+\mathcal{H}_{r}^T\mathcal{V})\bar{\mathcal{L}}\mathcal{G}\mathcal{I}Z\|^2}{1-\sqrt{\lambda_2(\bar{L})}},\hspace{2mm}a_4=\frac{2\|(\mathcal{H}_{l}^T+\mathcal{H}_{r}^T\mathcal{V})\bar{\mathcal{L}}\mathcal{G}\mathcal{H}_{u}\|^2}{1-\sqrt{\lambda_2(\bar{L})}}\nonumber
\end{align}
Writing \eqref{eq:in_1} and \eqref{eq:in_2} in matrix form completes the proof.}

\ifCLASSOPTIONcaptionsoff
  \newpage
\fi



\bibliographystyle{IEEEtran}
\bibliography{multi_pol_eval}
%
%
%


\vspace{-1mm}\begin{IEEEbiographynophoto}{Lucas Cassano} biography not available at the time of publication.
\end{IEEEbiographynophoto}
\vspace{-10mm}\begin{IEEEbiographynophoto}{Kun Yuan} biography not available at the time of publication.
\end{IEEEbiographynophoto}
\vspace{-10mm}\begin{IEEEbiographynophoto}{Ali H. Sayed} biography not available at the time of publication.
\end{IEEEbiographynophoto}

%
%
\vfill
%

\includepdf[pages=-]{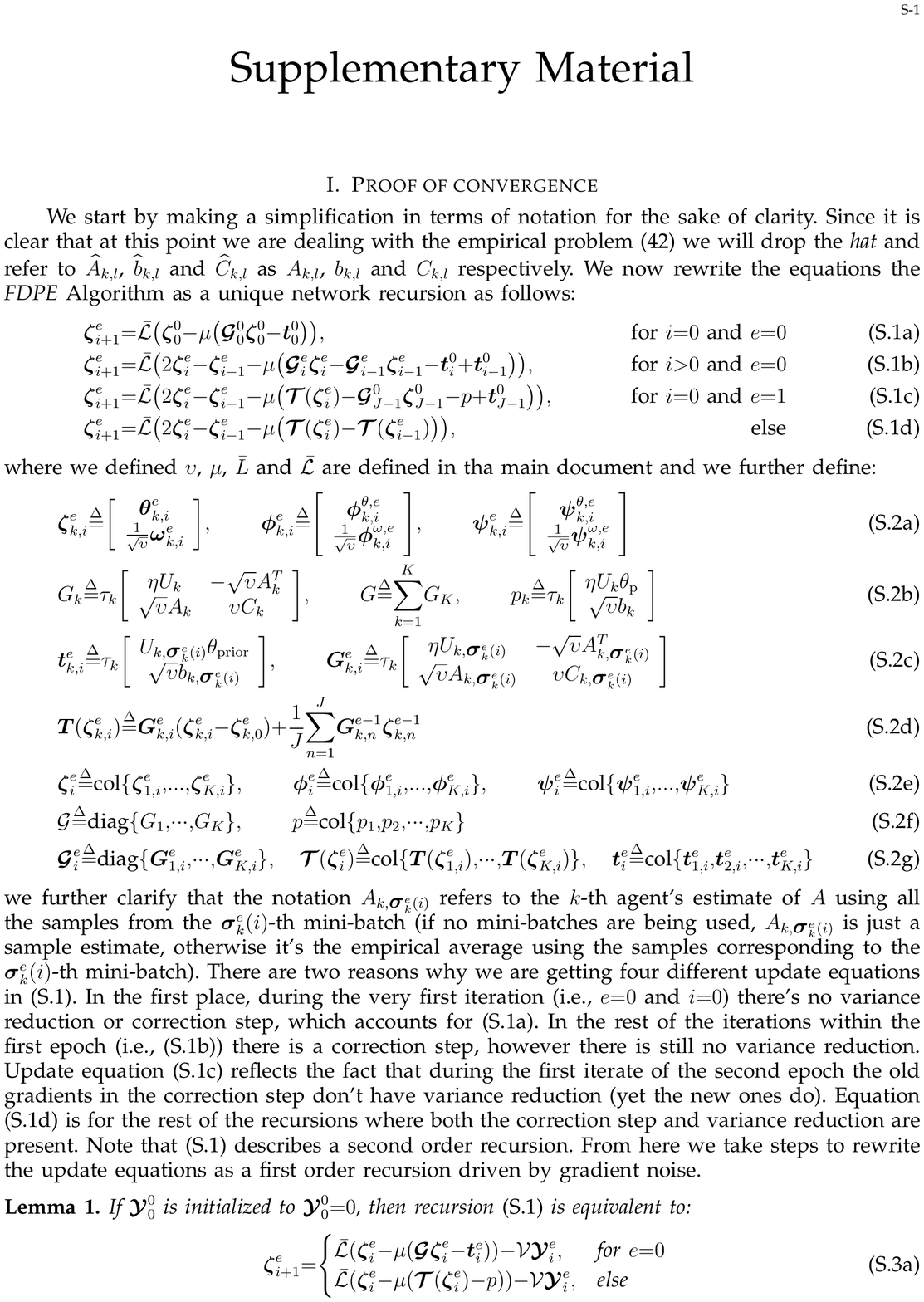}
\end{document}